\documentclass{article}

\usepackage[preprint]{neurips_2026}

\usepackage[utf8]{inputenc} 
\usepackage[T1]{fontenc}    
\usepackage{hyperref}       
\usepackage{url}            
\usepackage{booktabs}       
\usepackage{amsfonts}       
\usepackage{nicefrac}       
\usepackage{microtype}      
\usepackage{xcolor}         

\usepackage{amsmath,amsfonts,bm}

\def\eqref#1{equation~\ref{#1}}

\def\1{\bm{1}}

\def\vmu{{\bm{\mu}}}

\def\vd{{\bm{d}}}
\def\ve{{\bm{e}}}

\def\vt{{\bm{t}}}

\def\vv{{\bm{v}}}
\def\vw{{\bm{w}}}
\def\vx{{\bm{x}}}
\def\vy{{\bm{y}}}
\def\vz{{\bm{z}}}

\def\mA{{\bm{A}}}
\def\mB{{\bm{B}}}
\def\mC{{\bm{C}}}

\def\mI{{\bm{I}}}
\def\mJ{{\bm{J}}}
\def\mK{{\bm{K}}}

\def\mM{{\bm{M}}}

\def\mQ{{\bm{Q}}}

\def\mV{{\bm{V}}}

\def\mX{{\bm{X}}}

\def\mSigma{{\bm{\Sigma}}}

\DeclareMathAlphabet{\mathsfit}{\encodingdefault}{\sfdefault}{m}{sl}
\SetMathAlphabet{\mathsfit}{bold}{\encodingdefault}{\sfdefault}{bx}{n}

\newcommand{\E}{\mathbb{E}}

\newcommand{\R}{\mathbb{R}}

\newcommand{\Var}{\mathrm{Var}}

\newcommand{\Cov}{\mathrm{Cov}}

\DeclareMathOperator*{\argmin}{arg\,min}

\DeclareMathOperator{\Tr}{Tr}

\def\R{\mathbb{R}}
\def\P{\mathbb{P}}

\def\E{\mathbb{E}}
\def\C{\mathbb{C}}
\def\N{\mathbb{N}}

\def \dx{\, \mathrm{d}x}

\def\mSigma {{\mathbf{\Sigma }}}
\def \vdelta{{\bm{\delta}}}
\def\mQ{{\mathbf{Q}}}
\def\mC{{\mathbf{C}}}
\def\pat{\text{pat}}
\def\fast{\text{fast}}
\def\CAV{\text{CAV}}

\def\ridge{{\operatorname{ridge}}}

\def\TCAV{{\operatorname{\operatorname{TCAV}}}}

\newcommand{\ie}{\emph{i.e.,}~}

\usepackage{xurl}
\usepackage{amssymb}
\usepackage{amsthm}
\usepackage{dsfont}
\newcommand{\inlinecircle}{\tikz[baseline=-0.5ex]\node[draw, circle, inner sep=2pt, thick] {};}\theoremstyle{plain}
\newtheorem{theorem}{Theorem}[section]
\newtheorem{proposition}[theorem]{Proposition}
\newtheorem{lemma}[theorem]{Lemma}
\newtheorem{corollary}[theorem]{Corollary}
\theoremstyle{definition}
\newtheorem{definition}[theorem]{Definition}
\newtheorem{assumption}[theorem]{Assumption}
\theoremstyle{remark}
\newtheorem{remark}[theorem]{Remark}
\makeatletter
\@ifundefined{eqref}
  {\newcommand{\eqref}[1]{(\ref{#1})}}
  {\renewcommand{\eqref}[1]{(\ref{#1})}}
\makeatother
\usepackage[dvipsnames]{xcolor}
\usepackage{tikz,pgfplots,filecontents}
\usepackage{enumitem}

\usepackage{makecell}
\usepackage{subcaption}  

\pgfplotsset{compat=1.18}
\usepgfplotslibrary{groupplots}

\usetikzlibrary{arrows,shapes,calc,decorations.pathreplacing,intersections,quotes,angles,positioning,fit,petri,through, arrows.meta}
    \pgfdeclarelayer{background}
    \pgfdeclarelayer{foreground}
    \pgfsetlayers{background,main,foreground}
\usetikzlibrary{spy}
\usetikzlibrary{shapes}
\usetikzlibrary{plotmarks}

\pgfmathdeclarefunction{gauss}{2}{%
   \pgfmathparse{1/(#2*sqrt(2*pi))*exp(-((x-#1)^2)/(2*#2^2))}%
}

\definecolor{tcavred}{RGB}{210,45,45}                  
\definecolor{tcavpurple}{RGB}{105,80,180}              
\definecolor{tcavgreen}{RGB}{30,140,80}                
\definecolor{tcavorange}{RGB}{215,125,30}              
\definecolor{tcavblue}{RGB}{40,90,200}                 
\definecolor{tcavalphablue}{RGB}{20,110,170}           
\definecolor{tcavalphalight}{RGB}{120,180,215}         
\definecolor{tcavalphadeep}{RGB}{8,55,105}             
\definecolor{negfill}{RGB}{252,232,232}
\definecolor{posfill}{RGB}{226,236,250}

 \title{$\alpha$-TCAV: A Unified Framework for Testing with Concept Activation Vectors}

\author{%
  Ekkehard Schnoor \\
  Department of Computer Science\\
  Aalto University\\
  Espoo, Finland \\
  \texttt{ekkehard.schnoor@aalto.fi} \\
  \And
  Jawher Said \\
  Department of Artificial Intelligence\\
  Fraunhofer Heinrich Hertz Institute\\
  Berlin, Germany \\
  \texttt{jawharsd@gmail.com} \\
  \And
  Malik Tiomoko \\
  Huawei Noah's Ark Lab\\
  Huawei Technologies\\
  Paris, France \\
  \texttt{malik.tiomoko@huawei.com} \\
  \And
  Wojciech Samek \\
  Department of Artificial Intelligence, Fraunhofer HHI \\
  Department of EECS, Technische Universität Berlin \\
  Berlin, Germany \\
  \texttt{wojciech.samek@hhi.fraunhofer.de} \\
  \And
  Alex Jung \\
  Department of Computer Science\\
  Aalto University\\
  Espoo, Finland \\
  \texttt{alex.jung@aalto.fi}
}

\begin{document}

\maketitle

\begin{abstract}
Concept Activation Vectors (CAVs) are a fundamental tool for concept-based explainability in deep learning, yet their practical utility is limited by statistical instability. We analyze the stochastic nature of CAVs and the Testing with CAVs (TCAV) method, deriving the distributions of major CAV classes including PatternCAV, FastCAV, and ridge regression-based CAVs. We then identify a fundamental flaw in the standard TCAV score: its reliance on a discontinuous indicator function induces non-decaying variance in critical regimes. To address this, we introduce $\alpha$-TCAV, a generalized framework that replaces the indicator with a parameterized smooth function, yielding a unified probabilistic formulation that subsumes both TCAV and Multi-TCAV. We characterize the induced distributions of sensitivity scores and different TCAV variants, showing that established state-of-the-art choices lack theoretical justification. We provide principled guidance on tuning the parameter in $\alpha$-TCAV --- either to imitate Multi-TCAV at substantially lower computational cost, or to obtain a calibrated Bayes-optimal probabilistic measure of a concept's influence. Finally, our analysis yields practical recommendations that challenge established routines: most notably, allocating the full sampling budget to a single CAV rather than splitting it across several.
\end{abstract}

\section{Introduction}
\label{sec:introduction}

\paragraph{Explainable AI: from Feature Attribution to Concept-based Explanations.}
As deep neural networks have achieved impressive performance across diverse domains, understanding 
their decision-making has become critical, particularly in high-stakes applications such as medical 
diagnosis or autonomous driving; explanations of AI systems are also increasingly required by law. 
Traditional explainability methods focus on low-level feature attribution (\textit{heatmaps}) via 
techniques such as \textit{Saliency Maps} \citep{simonyan2013deep}, \textit{Grad-CAM} 
\citep{selvaraju2017grad}, and \textit{Integrated Gradients} \citep{sundararajan2017axiomatic}, 
highlighting key input regions for specific predictions. Yet such pixel-level maps diverge from 
human reasoning: they show \textit{where} the model looks, but not \textit{what} concepts it has 
detected, while humans naturally reason in terms of patterns, shapes, and colors (in a vision context). 
This gap motivates 
concept-based explanation methods, which decode internal representations in terms of 
human-understandable concepts. Among these, \textit{Concept Activation Vectors (CAVs)} 
\citep{kim2018interpretability} have emerged as one of the most popular and influential techniques, 
and are the topic of this article. Concept-based XAI has been highly active in recent years: 
\citep{ghorbani2019towards} introduced \textit{Automatic Concept Extraction} for unsupervised 
concept discovery, while \textit{Concept Relevance Propagation (CRP)} \citep{achtibat2023attribution} 
generalizes the feature-based \textit{Layer-wise Relevance Propagation (LRP)} \citep{bach2015pixel}; 
\citep{aysel2025concept} establish a benchmarking framework with novel metrics quantifying alignment 
between model explanations and human-annotated ground truth. We refer the reader to the survey 
\citep{poeta2023concept} and references therein.
\vspace{-0.3cm}
\paragraph{Concept Activation Vectors.}
\citep{kim2018interpretability} introduced CAVs and Testing with CAVs (TCAV), representing 
human-understandable concepts as directions in latent space. Using linear classifiers (e.g., SVMs) 
and directional derivatives, TCAV quantifies concept importance (e.g., \textit{stripes} for 
\textit{zebras}). Various CAV computation methods 
exist \citep{dreyer2024hope}, including \textit{PatternCAV} 
\citep{Pahde2022NavigatingNS, haufe2014interpretation, martin2019interpretable}, and \textit{FastCAV} \citep{schmalwasser2025fastcav} for improved 
efficiency. Originally framed within computer vision, CAVs have proven useful in other domains, 
e.g.~for analyzing large policy-value networks in deep reinforcement learning such as chess 
\citep{palsson2023unveiling} and \textit{AlphaZero} 
\citep{mcgrath2022acquisition,schut2023bridging}. Several related extensions have appeared: 
\textit{Concept Bottleneck Models (CBMs)} \citep{koh2020concept} enforce human-defined concepts as 
intermediate representations, enabling test-time interventions; \citep{yuksekgonul2022post} extend 
this to post-hoc CBMs without retraining; Visual-TCAV \citep{de2024visual} bridges local saliency 
and global concept-based explanations via a generalization of \textit{Integrated Gradients}; and 
\textit{Concept Activation Regions (CARs)} \citep{crabbe2022concept} represent concepts as regions 
rather than single vectors. For large language models, recent work focuses on \textit{steering} and 
control: \citep{zhao2026denoising} use sparse autoencoders to denoise concept vectors, 
\citep{zhang2025controlling} steer behavior controlling toxicity (GCAV), \citep{sun2024concept} 
route activations through concept bottlenecks for text generation, and \citep{postmus2024steering} 
employ \textit{Conceptors} (ellipsoids). 
\citep{fel2023holistic} provide a unified pipeline bridging automatic concept extraction and TCAV-based 
importance estimation, and \textit{CoCoX} \citep{akula2020cocox} uses TCAV to identify 
``fault-lines'' (minimal concept-importance changes flipping a prediction) for conceptual and 
counterfactual explanations. A critical challenge in XAI is \textit{stability}---producing consistent 
explanations for the same model and input across multiple applications 
\citep{sundararajan2017axiomatic, alvarez2018towards}. Inconsistent results erode user trust 
\citep{Krishna2022TheDP}, a problem especially prevalent in TCAV due to its stochastic nature.

\paragraph{Related Work.}
Despite intensive research efforts on CAVs over the last years, the theoretical and particularly the statistical properties of CAVs have only recently begun to receive attention. Most notably, \citep{wenkmann2025variability,schnoor2026concept} have initiated a more rigorous
investigation of the stochastic nature of CAVs, aiming to describe their distributions and variability. Our paper continues this line of work, and we will make extensive references to those two articles throughout. 
A key idea in our paper is the smooth approximation of the indicator function in an expression of the type $\E[\mathds{1}_{X > 0}]$ which can be traced back to fundamental results in probability theory, most prominently perhaps Lindeberg's classical proof of the central limit theorem \citep{lindeberg1922neue}
and its non-asymptotic variante, the \textit{Berry-Esseen theorem} \citep{vershynin2026friendly}, or in applications of
\textit{Steins method} \citep{stein1972bound}. Surprisingly, to our best knowledge this classical technique has been missed out so far in the context of TCAV.
On a higher level, our paper is similar to other efforts aiming for a theoretical analysis of XAI methods such as
\citep{garreau2020explaining, garreau2020looking} for \textit{Local Interpretable Model Agnostic Explanation (LIME)}
\citep{ribeiro2016should}, or \citep{chen2025unified} for \textit{Shapley Values} \citep{shapley1953value} and their 
adaption to XAI  \citep{lundberg2017unified}. Despite the theoretical nature of our work, we also provide concrete practical 
recommendations, partly proposing fundamental changes, while partly we complementing previous advice for working with CAVs (multi-layer analysis, concept dependencies and entanglement) \citep{nicolsonexplaining}.

\paragraph{Summary of Contributions.}
This paper provides a unified theoretical treatment of CAVs, significantly extending on previous recent work \citep{wenkmann2025variability, schnoor2026concept}.

\begin{itemize}[leftmargin=0.5cm]
    \item We derive the distributions of \textit{PatternCAV}, \textit{FastCAV} and \textit{Ridge Regression}, enabling to investigate their \textit{classification accuracy} and induced distribution of \textit{sensitivity scores}. We consider both classical ($d \ll n$) and modern ($d \asymp n$) regimes, for dimension $d$ and sample size $n$; see Sec. \ref{sec:cavs} and Sec. \ref{app:CAV_distributions}.
    \item We introduce $\alpha$-TCAV in the end of Sec. \ref{subsec:alpha_TCAV}, a parameterized generalization of the TCAV score. We show that $\alpha$-TCAV is a \textit{unified framework} that recovers existing approaches while offering computational and statistical advantages that we investigate in detail in Sec. \ref{app:TCAV_distribution_and}.
    \item In particular, we show analytically and empirically that $\alpha$-TCAV can reduce variance by roughly 50\% compared to Multi-TCAV in Sec. \ref{sec:tcav} (especially Sec. \ref{app:normalization}) Crucially, it achieves this stability at a fraction of the computational cost, effectively \textit{breaking the traditional trade-off between compute and reliability.}
    \item In Sec. \ref{app:normalization}, we discuss the question of fine-tuning $\alpha$, and argue that TCAV is \textit{overconfident}, and the state-of-the-art method of Multi-TCAV is \textit{underconficent}. We propose a Bayes-optimal choice of $\alpha$,
    a \textit{calibrated probability that the concept contributes to the model's prediction.}
    \item We challenge the established practical recommendation of averaging multiple TCAV scores as it is statistically sub-optimal. 
    We propose a drastically different \textit{single-run strategy} allocating the entire sample budget into computing a single CAV.
    This is discussed in Sec. \ref{sec:recommendations}.
\end{itemize}

On a higher level, this paper helps to more rigorously address the following two questions, both of which can be answered by studying (random) inner products $ \langle \vz, \vw_\CAV \rangle$ for CAVs (random vectors) $ \vw_\CAV$, where the meaning of $\vz$ differs (firstly, latent activations of (non-)concept data, and secondly, gradients of the predictions). Both are crucial questions in concept-based XAI, and underline the importance of a rigorous analysis of the distributional properties of CAVs studied in detail Sec. \ref{sec:cavs}.
 
\begin{enumerate}[leftmargin=0.5cm]
    \item \textit{Is a concept encoded in the neural network?} This can be answered by the \textit{classification accuracy}
          of a CAV-based classifier separating the latent concept and non-concept distributions; see Sec. \ref{app:classification_accuracy}.
    \item \textit{Is a concept being used for the model's prediction?} This is answered by \textit{sensitivity scores} (inner products of the type  $ \langle \vz, \vw_\CAV \rangle$; see \eqref{eq:TCAV_sensitivity_inner_product}) and the \textit{TCAV} method, adressed in detail in Sec. \ref{subsec:sensitivity_TCAV_and_multi_run_TCAV}.
\end{enumerate}

The main part of this paper is structured as follows. Sec.~\ref{sec:cavs} introduces the 
theoretical background and presents several methods for computing CAVs, together with 
characterizations of their distributions; these in turn make the \emph{variability} of CAVs---a 
measure of their stochastic nature---straightforward to study. Sec.~\ref{sec:tcav} recalls 
sensitivity scores and the (Multi-)TCAV technique, and reinterprets them probabilistically 
as expectations of random variables. We identify shortcomings of the current approach and 
introduce $\alpha$-TCAV as a remedy. Building on our findings, we develop a mathematical 
model associating each TCAV variant with a classical probability distribution, enabling a 
rigorous comparison of their properties. 
Building on these findings, Sec.~\ref{app:TCAV_distribution_and} introduces a mathematical model that characterizes the various TCAV approaches in terms of corresponding classical probability distributions, from which their means and variances can be derived. While the most direct application of this model concerns the sensitivity scores, the framework enables a qualitative comparison of TCAV variants in the general case as well; we discuss the relations between the different approaches in detail and provide numerical simulations that show excellent agreement with our theoretical predictions.
Sec.~\ref{app:normalization} then addresses the practical question of how to choose the sharpness parameter $\alpha$ together with the corresponding normalization of the sensitivity scores. In Sec. \ref{sec:numerical_experiments_real_data} we provide experiments on real data and conclude conclude with practical recommendations in Sec. \ref{sec:recommendations}.
Finally, while the main paper focuses on the TCAV technique --- asking whether a concept is \emph{used} for the model's prediction --- and in particular on our novel $\alpha$-TCAV approach, Sec.~\ref{app:classification_accuracy} additionally provides a detailed treatment of the \emph{classification accuracy}, i.e.\ whether a concept is \emph{encoded} in the network at all. This is possible because both questions can be phrased in terms of (different) inner products of the form $\langle \vz, \vw_\CAV \rangle$ that recur throughout the paper: interpreting $\vw_\CAV$ as a random vector and studying its distribution (cf.\ Sec.~\ref{sec:cavs} and Sec.~\ref{app:CAV_distributions}) lets us address both topic.
More technical details are provided in the appendix.

\begin{figure}[ht]
\centering
\resizebox{\textwidth}{!}{%
\begin{tikzpicture}[
    >=Stealth,
    font=\sffamily,
    panel title/.style={font=\sffamily\Large\bfseries},
    panel subtitle/.style={font=\sffamily\large, text width=7cm, align=center},
    axis line/.style={->, thick, gray!70},
    cav arrow/.style={->, very thick},
    gradient arrow/.style={->, very thick, black},
    cand arrow/.style={->, gray!45, dashed, thin},
    cand arrow red/.style={->, tcavred!28, dashed, thin},
    cand arrow blue/.style={->, tcavalphablue!28, dashed, thin},
    cand arrow purple/.style={->, tcavpurple!30, dashed, thin},
    cand arrow green/.style={->, tcavgreen!40, dashed, thin},
    cand arrow orange/.style={->, tcavorange!40, dashed, thin},
    proj line/.style={dashed, gray!50, thin},
    sample dot/.style={circle, fill, inner sep=1pt},
    summary box/.style={rounded corners=4pt, draw, thick, inner sep=8pt, text width=7cm, align=left, font=\large},
]

\begin{scope}[shift={(0,0)}]
  \node[panel title, tcavred] at (3,7.0) {TCAV (standard)};
  \node[panel subtitle] at (3,6.4) {Hard decision on a single CAV};

  \draw[axis line] (0,0) -- (0,5);
  \draw[axis line] (0,0) -- (6,0);
  \node[anchor=south, font=\large] at (0.0,5.05) {latent space $\R^d$};
  \node[anchor=north, font=\large] at (5.0,-0.2) {$\langle \vz, \vw_\CAV\rangle$};

  \foreach \ang/\len in {33/2.8, 37/3.5, 40/2.9, 42/3.6, 48/3.0, 50/3.7, 53/2.9, 57/3.4}
      \draw[cand arrow red] (0,0) -- (\ang:\len);

  \draw[dashed, thick, black!70] (0,0) -- (32:4.5);
  \draw[gradient arrow] (0,0) -- (32:4.5);
  \node[anchor=south west, font=\large] at (32:4.6) {$\vz=\nabla h_{l,k}(f_l(\vx))$};

  \draw[cav arrow, tcavred] (0,0) -- (45:3.2);
  \node[anchor=south, font=\large, tcavred] at (45:3.3) {$\vw_\CAV$};

  \node[font=\large] at (2.4,-0.25) {$0$};
  \foreach \x in {1.4, 1.65, 1.85, 2.0, 2.1, 2.25, 2.45, 2.65, 2.9}
      \node[sample dot, tcavred] at (\x,0) {};

  \begin{scope}[shift={(0,-5.2)}]
    \node[font=\large, tcavred, anchor=south, align=center] at (3.0,2.5) {{\bfseries Hard indicator} $\mathds{1}_{\{\langle \vz, \vw_\CAV\rangle > 0\}}$};
    \fill[negfill] (0,0) rectangle (2.4,1.6);
    \fill[posfill] (2.4,0) rectangle (5,1.6);
    \draw[axis line] (0,0) -- (0,2.25);
    \draw[axis line] (0,0) -- (5.4,0);
    \node[font=\large, anchor=east] at (-0.05,2.15) {score};
    \node[font=\large] at (-0.25,0) {$0$};
    \node[font=\large] at (-0.25,1.6) {$1$};
    \node[font=\large] at (2.4,-0.3) {$0$};
    \node[anchor=north, font=\large] at (5.2,-0.15) {$\langle \vz, \vw_\CAV\rangle$};
    \draw[tcavred, ultra thick] (0,0) -- (2.4,0);
    \draw[tcavred, ultra thick, dotted] (2.4,0) -- (2.4,1.6);
    \draw[tcavred, ultra thick] (2.4,1.6) -- (5,1.6);
    \node[font=\large, tcavred, align=center] at (1.2,0.8) {negative\\vote};
    \node[font=\large, tcavblue, align=center] at (3.7,0.8) {positive\\vote};
  \end{scope}

  \node[summary box, draw=tcavred, text=tcavred, text width=6.5cm] at (3,-8.0) {
    \textbullet\ Learn single low-variance CAV \\
    \textbullet\ High variance by discontinuity of $\mathds{1}$\\
    \textbullet\ Overconfident: assuming $0$ or $1$  
  };
\end{scope}

\begin{scope}[shift={(7.5,0)}]
  \node[panel title, tcavpurple] at (3,7.0) {Multi-TCAV};
  \node[panel subtitle] at (3,6.4) {Average noisy hard scores};

  \draw[axis line] (0,0) -- (0,5);
  \draw[axis line] (0,0) -- (6,0);
  \node[anchor=north, font=\large] at (5.2,-0.2) {$\langle \vz, \vw_\CAV \rangle$};

  \draw[dashed, thick, black!70] (0,0) -- (32:4.5);
  \draw[gradient arrow] (0,0) -- (32:4.5);
  \node[anchor=south west, font=\large] at (32:4.6) {$\vz=\nabla h_{l,k}(f_l(\vx))$};

  \draw[cand arrow orange] (0,0) -- (52:4.0);
  \draw[cand arrow orange] (0,0) -- (76:3.0);
  \draw[cav arrow, tcavorange] (0,0) -- (64:3.5);
  \node[anchor=south east, font=\large, tcavorange] at (64:3.55) {$\vw_\CAV^{(1)}$};

  \draw[cand arrow purple] (0,0) -- (26:4.1);
  \draw[cand arrow purple] (0,0) -- (46:3.2);
  \draw[cav arrow, tcavpurple] (0,0) -- (36:3.6);
  \node[anchor=south, font=\large, tcavpurple] at (36:3.7) {$\vw_\CAV^{(2)}$};

  \draw[cand arrow green] (0,0) -- (2:3.0);
  \draw[cand arrow green] (0,0) -- (20:4.1);
  \draw[cav arrow, tcavgreen] (0,0) -- (10:3.7);
  \node[anchor=south west, font=\large, tcavgreen] at (10:3.75) {$\vw_\CAV^{(s)}$};

  \node at (3.0,1.5) {$\cdots$};

  \node[font=\large] at (2.0,-0.25) {$0$};
  \foreach \x in {0.2, 0.7, 1.2} \node[sample dot, tcavorange] at (\x,0) {};
  \foreach \x in {2.0, 2.6, 3.2} \node[sample dot, tcavpurple] at (\x,0) {};
  \foreach \x in {3.7, 4.3, 4.9} \node[sample dot, tcavgreen]  at (\x,0) {};

  \begin{scope}[shift={(0,-5.2)}]
    \node[font=\large, tcavpurple, anchor=south, align=center] at (3.0,2.5) {{\bfseries Average} $\tfrac{1}{s}\sum_{j=1}^{s} \mathds{1}_{\{\langle \vz, \vw_\CAV^{(j)}\rangle > 0 \}}$};
    \fill[negfill] (0,0) rectangle (2.4,1.6);
    \fill[posfill] (2.4,0) rectangle (5,1.6);
    \draw[axis line] (0,0) -- (0,2.25);
    \draw[axis line] (0,0) -- (5.4,0);
    \node[font=\large] at (-0.25,0) {$0$};
    \node[font=\large] at (-0.25,1.6) {$1$};
    \node[font=\large] at (2.4,-0.3) {$0$};
    \node[anchor=north, font=\large] at (5.2,-0.15) {$\langle \vz, \vw_\CAV \rangle$};
    \draw[tcavpurple, very thick]
      (0,0.05) -- (1.0,0.05) -- (1.0,0.25) -- (1.8,0.25) -- (1.8,0.55) --
      (2.6,0.55) -- (2.6,0.95) -- (3.4,0.95) -- (3.4,1.30) -- (4.2,1.30) --
      (4.2,1.55) -- (5.0,1.55);
  \end{scope}

  \node[summary box, draw=tcavpurple, text=tcavpurple, text width=6.8cm ] at (2.9,-8.0) {
    \textbullet\ Learn multiple high-variance CAVs \\
    \textbullet\ Average noisy hard decisions \\
    \textbullet\ Underconfident,  compute-expensive
  };
\end{scope}

\begin{scope}[shift={(15,0)}]
  \node[panel title, tcavalphablue] at (3,7.0) {$\alpha$-TCAV (our framework)};
  \node[panel subtitle] at (3,6.4) {Smooth decision on a single CAV};

  \draw[axis line] (0,0) -- (0,5);
  \draw[axis line] (0,0) -- (6,0);
  \node[anchor=north, font=\large] at (5.0,-0.2) {$\langle \vz, \vw_\CAV\rangle$};

  \foreach \ang/\len in {33/2.8, 37/3.5, 40/2.9, 42/3.6, 48/3.0, 50/3.7, 53/2.9, 57/3.4}
      \draw[cand arrow blue] (0,0) -- (\ang:\len);

  \draw[dashed, thick, black!70] (0,0) -- (32:4.5);
  \draw[gradient arrow] (0,0) -- (32:4.5);
  \node[anchor=south west, font=\large] at (32:4.6) {$\vz=\nabla h_{l,k}(f_l(\vx))$};

  \draw[cav arrow, tcavalphablue] (0,0) -- (45:3.2);
  \node[anchor=south, font=\large, tcavalphablue] at (45:3.3) {$\vw_\CAV$};

  \node[font=\large] at (2.4,-0.25) {$0$};
  \foreach \x in {1.4, 1.65, 1.85, 2.0, 2.1, 2.25, 2.45, 2.65, 2.9}
      \node[sample dot, tcavalphablue] at (\x,0) {};

  \begin{scope}[shift={(0,-5.2)}]
    \node[font=\large, tcavalphablue, anchor=south, align=center] at (3.0,2.5) {{\bfseries Scaled sigmoids} $s_\alpha(\langle \vz, \vw_\CAV\rangle)$};
    \fill[negfill] (0,0) rectangle (2.4,1.6);
    \fill[posfill] (2.4,0) rectangle (5,1.6);
    \draw[axis line] (0,0) -- (0,2.25);
    \draw[axis line] (0,0) -- (5.4,0);
    \node[font=\large] at (-0.25,0) {$0$};
    \node[font=\large] at (-0.25,1.6) {$1$};
    \node[font=\large] at (2.4,-0.3) {$0$};
    \node[anchor=north, font=\large] at (5.2,-0.15) {$\langle \vz, \vw_\CAV\rangle$};
    \draw[tcavalphalight, ultra thick, domain=0:5, samples=80, smooth]
      plot(\x, {1.6/(1+exp(-1.0*(\x-2.4)))});
    \draw[tcavalphablue, ultra thick, domain=0:5, samples=80, smooth]
      plot(\x, {1.6/(1+exp(-2.0*(\x-2.4)))});
    \draw[tcavalphadeep, ultra thick, domain=0:5, samples=80, smooth]
      plot(\x, {1.6/(1+exp(-4.0*(\x-2.4)))});
    \node[font=\normalsize, tcavalphadeep,  anchor=west] at (0.25,1.45) {$\alpha=2$};
    \node[font=\normalsize, tcavalphablue,  anchor=west] at (0.25,1.18) {$\alpha=1$};
    \node[font=\normalsize, tcavalphalight, anchor=west] at (0.25,0.91) {$\alpha=\tfrac{1}{2}$};
  \end{scope}

  \node[summary box, draw=tcavalphablue, text=tcavalphablue, text width=6.8cm] at (3,-8.0) {
    \textbullet\ Learn single low-variance CAV \\
    \textbullet\ Smooth, considering magnitude \\
    \textbullet\ Tune $\alpha$: var. reduction, calibration\
  };
\end{scope}

\end{tikzpicture}%
}
\caption{%
TCAV, Multi-TCAV and $\alpha$-TCAV (our proposed framework). For input $\vx$, the gradient $\vz = \nabla h_{l,k}(f_l(\vx))$ of logit $k$ at layer $l$ is projected onto a (random) CAV $\vw_\CAV \in \R^d$ (at layer $l$), yielding the sensitivity scores $\langle \vz, \vw_\CAV\rangle$. Next, different TCAV scores are computed, where \textcolor{tcavred}{\textbf{TCAV}} (left) uses a hard indicator on a single CAV, potentially leading to large (or even non-vanishing) variance. \textcolor{tcavpurple}{\textbf{Multi-TCAV}} (middle) averages several such indicators using several (more noisy) CAVs $\vw_\CAV^{(j)}$.
Finally, \textcolor{tcavalphablue}{\textbf{$\alpha$-TCAV (ours)}} (right) replaces the discontinuous indicator function with a smooth sigmoid family $s_\alpha$ whose temperature $\alpha$ controls the steepness, using the full budget to learn a single CAV.
Note that we consider two notions of \textit{variance}: firstly, for a (random) CAV $\vw_\CAV$ in the sense of Definition \ref{def:variance_of_CAV}; secondly, for the sensitivity scores $\langle \vz, \vw_\CAV\rangle$ in the usual sense of a real-valued random variable.
}
\label{fig:tcav_comparison}
\end{figure}

\section{Concept Activation Vectors}
\label{sec:cavs}

\paragraph{Neural Network Architecture and CAVs.} 
Consider a (pretrained) $L$-layer neural network for a $K$-class classification task, represented by the composition $f_l \circ h_l$. Here, $f_l: \R^{d_0} \to \R^{d_l}$ maps inputs to layer $l$ activations, and $h_l: \R^{d_l} \to \R^{d_L}$ maps these to the output (where $d_L = K$). We denote the mapping to the $k$-th class logit as $h_{l,k}: \R^{d_l} \to \R$. 
Concept Activation Vectors (CAVs) are derived by training a linear classifier to separate activations of a concept $C$ from non-concept examples. The resulting CAV, $\vv_C^l \in \R^{d_l}$, is the vector orthogonal to the separating hyperplane pointing toward the concept region (we also write $ \vw_\CAV$ to denote generic CAVs). We consider a general binary setup with classes $\mathcal{C}_\ell$ for $\ell \in \{1, 2\}$, where $\mathcal{C}_1$ (label $-1$) represents non-concept data and $\mathcal{C}_2$ (label $1$) represents the concept class; labels are thus written as $(-1)^\ell$. We assume both classes consist of i.i.d. samples from unknown distributions, where $N$ denotes the effective number of random samples (see Rem. \ref{remark:sample_size_asymptotics}). Finally, $\mathcal{X}_k$ the dataset for class $k$ for the neural network classification sampled from a distribution $\mathcal{D}_k$. We interpret CAVs as random vectors with distribution $\mathcal{D}_\CAV$.
$\mathds{1}_A$ the indicator function of set $A$.
We adopt a compact generic notation for the binary classification problem using a data matrix $\mX = [\vx_1, \dots, \vx_n] \in \R^{d \times n}$ with labels $\vy \in \{-1, 1\}^n$. Let $\mX = [\mX^{(1)}, \mX^{(2)}]$, where $\mX^{(\ell)} \in \R^{d \times n_\ell}$ gathers $n_\ell$ vectors for class $\mathcal{C}_\ell$. The total size is $n = n_1 + n_2$, with class probabilities $c_\ell = n_\ell/n$. Each class is characterized by a mean $\vmu_\ell \in \R^d$, covariance $\mSigma_\ell \in \R^{d \times d}$, and generalized covariance $\mC_\ell = \mSigma_\ell + \vmu_\ell \vmu_\ell^\top$. We write $\vx \in \mathcal{C}_\ell$ to indicate $\vx$ follows the distribution of class $\mathcal{C}_\ell$.

\begin{remark}[Sample Sizes and Asymptotics]\label{remark:sample_size_asymptotics}
In the CAV setting, a \textit{balanced} regime $(n_1 = n_2)$ is plausible since, given $n_2$ concept 
examples, matching $n_1$ non-concept examples is typically cheap. As it is cheap and 
CAV training may benefit from more of it, one may also consider $n_2$ fixed and $n_1 \to \infty$---the 
imbalanced setting (based on \citep{owen2007infinitely}) for logistic-regression CAVs
studied in \citep{wenkmann2025variability}; unlike this paper which treats concept data as 
deterministic (all CAV randomness stems from non-concept data), we consider CAV distributions 
induced by \emph{both} concept and non-concept data. We write $N$ for the \textit{effective (random) sample} size, \ie
$N = n_1 + n_2$ in the general case, and $N = n_2$ in the setting of \citep{wenkmann2025variability}. 
This uniformly describes variance decay in $N$ without distinguishing between the two data sources.
Despite its appeal, the imbalanced setting also poses difficulties---e.g., the trivial 
majority-class classifier achieves asymptotic zero error; see \citep{he2009learning} for a survey 
and \citep{buda2018systematic} for a systematic neural-network study.
\end{remark}

\paragraph{Methods for CAV Computations.} We discuss several methods to compute CAVs and characterize their distributions. While formulated as general binary classification problems, recall that for an application to CAVs the data matrix $\mX$ contains the collection of the latent representations of the \textit{(non-)concept} input examples. One approach to obtaining the CAV is by solving the classical \textit{ridge regression} problem \eqref{eq:ridge_regression}, an $\ell_2$-regularized least-squares problem. 
The explicit solution - if it exists - is given in \eqref{eq:solution_logistic_regression} in Sec. \ref{app:ridge_regression} in the appendix, which provides more mathematical details. In particular, to describe how the data distribution over $\mX$ induces a distribution over $\vw_{\ridge}$, we follow the framework in \cite[Chapter 2.3]{tiomoko2021advanced}; see also \citep{cherkaouihigh}.
The \textit{PatternCAV} \citep{Pahde2022NavigatingNS, martin2019interpretable} is based on an objective proposed in \citep{haufe2014interpretation}, aiming at modeling the data as a function of the labels. As shown in \citep[Appendix B.3]{Pahde2022NavigatingNS}, for a binary classification problem such as for CAVs, the \textit{PatternCAV} $\vw_{\pat} \in \R^d$ is given by the difference of the empirical class means (pointing towards the concept region),   
\begin{equation}
    \vw_{\pat} 
=   \frac{1}{n_2} \sum_{i=1}^{n_2} \vx_i^{(2)} - \frac{1}{n_1} \sum_{i=1}^{n_1} \vx_i^{(1)}
=   \hat{\vmu}_2 - \hat{\vmu}_1. \label{eq:w_pat_empirical}
\end{equation}
the \textit{FastCAV} \citep{schmalwasser2025fastcav} are another efficient alternative to optimization-based classifiers. Aggregating all (non-)concept activations, the joint empirical mean $\hat{\vmu}_{1,2}$ is
\begin{equation}
    \hat{\vmu}_{1,2} = \frac{1}{n_1 + n_2} \sum_{\ell=1}^{2} \sum_{i=1}^{n_i} \vx_i^{(\ell)} \in \R^d.
    \label{eq:fastCAV_joint_empirical_mean_of_classes}
\end{equation} 
The \textit{FastCAV}, denoted as $\vw_{\fast}$, is defined as the vector originating from this global centroid $  \hat{\vmu}_{1,2}$ 
and pointing toward the mean $\vmu_2$ of the concept class $\mathcal{C}_2$. Formally, it is given as
\begin{equation}
    \vw_{\fast} 
    = \frac{1}{n_2} \sum_{i=1}^{n_2} \left( \vx_i^{(2)} - \hat{\vmu}_{1,2} \right)  
    = \hat{\vmu}_2 - \hat{\vmu}_{1,2} \,   \in \R^d,  \label{eq:fastCAV_definition} 
\end{equation}
where $\hat{\vmu}_2$ represents the empirical mean of the concept class.  

\paragraph{Characterizing CAV Distributions.}

We characterize the distribution of ${\vw}_{\fast}$, ${\vw}_{\pat}$ in terms of the
data distribution, generalizing \citep[Prop. 1]{schnoor2026concept}; the proof is given in the appendix.

\begin{proposition}[Distribution of ${\vw}_{\fast}$ and ${\vw}_{\pat}$]
\label{prop:distribution_fast_pattern_CAV}
\label{prop:hyperplane}
Mean and covariance of ${\vw}_{\fast}$ and ${\vw}_{\pat}$ can be expressed in terms of the data distribution 
(class-specific mean $\vmu_\ell$ and covariance $\mSigma_\ell$, $\ell=1,2$) as
\begin{align}
    \E[\vw_{\pat}] & =  \vmu_2 - \vmu_1 \in \R^d, \label{eq:w_pat_mean} \\
 \Cov(\vw_{\pat})  & =  \frac{1}{n_1} \mSigma_1 + \frac{1}{n_2} \mSigma_2 \in \R^{d \times d}. \label{eq:w_pat_covariance}
\end{align}
The \textit{PatternCAV} $\vw_{\pat}$ is a scaled version of the \textit{FastCAV} $\vw _{\fast}$; 
for general $n_1, n_2 \in \N$ it holds that
\begin{equation}
   \vw _{\fast} = \frac{n_1}{n_1 + n_2} \vw_{\pat}.
   \label{eq:scaling_pattern_fast_cav_general}
\end{equation}
Therefore, mean and covariance of $\vw_{\fast} $ are given as appropriately scaled versions of \ref{eq:w_pat_mean}
and \eqref{eq:w_pat_covariance},
\begin{align}
   \E [\vw_{\fast} ]  & =   \frac{n_1}{n_1 + n_2} (\vmu_2 - \vmu_1) \in \R^d, \label{eq:fastCAV_mean_general}    \\ 
\Cov( \vw_{\fast} )   & =   \left( \frac{n_1}{n_1 + n_2} \right)^2 \left( \frac{1}{n_1} \mSigma_1 + \frac{1}{n_2} \mSigma_2 \right).  \label{eq:fastCAV_cov_general}
\end{align}
\end{proposition}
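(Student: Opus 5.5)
The plan is to treat $\vw_{\pat}$ first, since it is a linear combination of independent random vectors, and then obtain every statement about $\vw_{\fast}$ from the deterministic identity \eqref{eq:scaling_pattern_fast_cav_general}. Throughout I use the standing assumptions of Sec.~\ref{sec:cavs}: the columns $\vx_i^{(\ell)}$ of $\mX^{(\ell)}$ are i.i.d.\ with mean $\vmu_\ell$ and covariance $\mSigma_\ell$. I also assume that the two classes are drawn independently of each other. This last assumption is the only modelling point I need to state explicitly, because the covariance formula depends on it.

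\textbf{PatternCAV.} The mean \eqref{eq:w_pat_mean} follows from linearity of expectation applied to \eqref{eq:w_pat_empirical}, since $\E[\hat{\vmu}_\ell]=\vmu_\ell$. For the covariance, I would first use the independence of the two classes to write $\Cov(\hat{\vmu}_2-\hat{\vmu}_1)=\Cov(\hat{\vmu}_2)+\Cov(\hat{\vmu}_1)$. The cross terms $\Cov(\hat{\vmu}_2,\hat{\vmu}_1)$ vanish by that same independence. Next, I would use independence within each class, which gives $\Cov(\hat{\vmu}_\ell)=\frac{1}{n_\ell^2}\sum_{i=1}^{n_\ell}\Cov(\vx_i^{(\ell)})=\frac{1}{n_\ell}\mSigma_\ell$. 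Together these give \eqref{eq:w_pat_covariance}.

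\textbf{Scaling identity.} This is purely algebraic and holds realization-wise, with no probabilistic input. Writing the joint mean as a weighted average of the class means,
\begin{equation*}
\hat{\vmu}_{1,2}=\tfrac{n_1}{n_1+n_2}\hat{\vmu}_1+\tfrac{n_2}{n_1+n_2}\hat{\vmu}_2,
\end{equation*}
and inserting this into \eqref{eq:fastCAV_definition}, I get
\begin{equation*}
\vw_{\fast}=\hat{\vmu}_2-\hat{\vmu}_{1,2}=\Bigl(1-\tfrac{n_2}{n_1+n_2}\Bigr)\hat{\vmu}_2-\tfrac{n_1}{n_1+n_2}\hat{\vmu}_1=\tfrac{n_1}{n_1+n_2}(\hat{\vmu}_2-\hat{\vmu}_1).
\end{equation*}
This is \eqref{eq:scaling_pattern_fast_cav_general}. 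One small point to check is that the inner sum in \eqref{eq:fastCAV_joint_empirical_mean_of_classes} should run to $n_\ell$; the upper index there is a notational slip.

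\textbf{FastCAV.} Since $\vw_{\fast}=c\,\vw_{\pat}$ with the deterministic constant $c=n_1/(n_1+n_2)$, I would use $\E[c\vw]=c\E[\vw]$ and $\Cov(c\vw)=c^2\Cov(\vw)$. These yield \eqref{eq:fastCAV_mean_general} and \eqref{eq:fastCAV_cov_general} immediately.

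There is no real obstacle in this argument. The only step requiring care is justifying the vanishing of the cross-covariance between the two class means, which rests on the independence of the concept and non-concept samples. In the setting of \citep{wenkmann2025variability}, where concept data are treated as deterministic, the same formula still holds with the corresponding $\mSigma_\ell$ set to $0$.
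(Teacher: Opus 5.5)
Your proof is correct and is essentially the paper's argument. The paper computes $\Cov(\vw_{\pat})$ by expanding $\E[\vw_{\pat}\vw_{\pat}^\top]$ over the samples, using the same within-class and between-class independence that you invoke through bilinearity of $\Cov$, and it obtains $\vw_{\fast}=\frac{n_1}{n_1+n_2}\vw_{\pat}$ by the same realization-wise regrouping of $\hat{\vmu}_{1,2}$ before scaling the mean and covariance; your remark that the inner upper index in the joint-mean definition should read $n_\ell$ is also right.
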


The next corollary follows immediately as a special case by assuming classes of equal size.

\begin{corollary}[Balanced case]
\label{cor:distribution_fast_pattern_CAV_balanced_case}
In the case of $n_1 = n_2$, the relation \eqref{eq:scaling_pattern_fast_cav_general} simplifies to $ \vw_{\fast} = \frac{1}{2} \vw_{\pat}$, when \eqref{eq:fastCAV_mean_general} and \eqref{eq:fastCAV_cov_general} now become
(note that $n_1, n_2$ in the covariance term can be used interchangeably)
\begin{equation}
   \E [ \vw_{\fast} ]
= \frac{\vmu_2 - \vmu_1}{2} \in \R^d,  \qquad 
     \Cov( \vw_{\fast} )
 =   \frac{1}{4 n_1}\mSigma_1 + \frac{1}{4 n_2}\mSigma_2 \in \R^{d \times d}.  
 \label{eq:fastCAV_mean_and_cov_definition_balanced}
\end{equation}
\end{corollary}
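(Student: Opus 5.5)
The corollary is a direct specialization of Proposition~\ref{prop:distribution_fast_pattern_CAV}, so the plan is to substitute $n_1 = n_2$ into each of its three conclusions and simplify. No new probabilistic argument is needed: the independence and i.i.d.\ assumptions were already used in the proposition, and we only do algebra on its outputs.

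\textbf{Scaling relation.} Put $n_1 = n_2$ into \eqref{eq:scaling_pattern_fast_cav_general}. The factor $\frac{n_1}{n_1+n_2}$ becomes $\frac{n_1}{2n_1} = \frac12$, so $\vw_{\fast} = \frac12 \vw_{\pat}$.

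\textbf{Mean.} Either use linearity of expectation together with \eqref{eq:w_pat_mean}, or substitute into \eqref{eq:fastCAV_mean_general}. Both give $\E[\vw_{\fast}] = \frac12(\vmu_2 - \vmu_1)$.

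\textbf{Covariance.} Use $\Cov(a\vw) = a^2 \Cov(\vw)$ with $a = \frac12$, or equivalently substitute into \eqref{eq:fastCAV_cov_general}. This gives
\[
\Cov(\vw_{\fast}) = \tfrac14\Bigl(\tfrac{1}{n_1}\mSigma_1 + \tfrac{1}{n_2}\mSigma_2\Bigr) = \tfrac{1}{4n_1}\mSigma_1 + \tfrac{1}{4n_2}\mSigma_2,
\]
which is \eqref{eq:fastCAV_mean_and_cov_definition_balanced}. The parenthetical remark in the statement holds because $n_1 = n_2$, so either symbol can stand in both denominators, e.g.\ the covariance equals $\frac{1}{4n_1}(\mSigma_1 + \mSigma_2)$.

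There is no real obstacle here. The only point to check is that the prefactor $\frac{n_1}{n_1+n_2}$ is deterministic, which holds because the class sizes are fixed and not random. This is what justifies pulling it out of the expectation and squaring it in the covariance.
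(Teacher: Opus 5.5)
Your proposal is correct and follows the paper's route: the paper also treats this corollary as an immediate consequence of Proposition~\ref{prop:distribution_fast_pattern_CAV}, obtained by setting $n_1 = n_2$. The deterministic factor $\frac{n_1}{n_1+n_2}$ then becomes $\frac12$ in the scaling relation and the mean, and appears squared as $\frac14$ in the covariance.
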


Further, \citep[Remark 1]{schnoor2026concept} pointed out a close relation of $\vw_{\fast}$ (and, in the balanced case as in Corollary 
\eqref{cor:distribution_fast_pattern_CAV_balanced_case}, also $\vw_{\fast}$) to the case of ridge regression $\vw_{\ridge}$ --- see \eqref{eq:ridge_regression} --- with a large regularization parameter $\lambda > 0$.
Finally, let us also consider the case of a fixed number of \textit{concept} samples $n_2$, while taking the limit $n_1 \to \infty$
for the \textit{non-concept} data, which is the scenario studied in \citep{wenkmann2025variability}. Interestingly, in this setup
limiting mean and covariance $\vw_{\pat}$ and $\vw_{\fast}$ coincide, showing a similar behavior to the balanced case $n_1 = n_2$.

\begin{corollary}[Asymmetrically asymptotic: $n_1 \to \infty$]
\label{cor:distribution_fast_pattern_CAV_asymetric_asymptotic}
For a fixed number $n_2$ of concept samples, and $n_1 \to \infty$ for the number of non-concept samples,  
we obtain for $\vw_{\pat}$ and $\vw_{\fast} $ that
\begin{alignat}{2}
 \E[\vw_{\pat}] & =  \vmu_2 - \vmu_1 \in \R^d, &\qquad  \Cov(\vw_{\pat})  & \to  \frac{1}{n_2} \mSigma_2 \in \R^{d \times d},    \\
 \E [\vw_{\fast} ]  & \to \vmu_2 - \vmu_1 \in \R^d,    &\qquad  \Cov( \vw_{\fast} )   & \to  \frac{1}{n_2} \mSigma_2 \in \R^{d \times d}. \label{eq:fast_pattern_CAV_cov_general_asymetric_asymptotic}
\end{alignat}
Here, by $\to$ we denote convergence w.r.t. to any vector or matrix norm on $\R^d$ or $\R^{d \times d}$, respectively.
\end{corollary}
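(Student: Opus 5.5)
The argument is a direct limit computation from the finite-sample formulas of Proposition~\ref{prop:distribution_fast_pattern_CAV}, with $n_2$ and the class moments $\vmu_\ell,\mSigma_\ell$ held fixed. Since all norms on $\R^d$ and $\R^{d\times d}$ are equivalent, it is enough to prove entrywise convergence, or convergence in one convenient norm such as the Frobenius norm. Both statements then reduce to the scalar limits $1/n_1 \to 0$ and $\frac{n_1}{n_1+n_2}\to 1$.

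For the PatternCAV, \eqref{eq:w_pat_mean} gives $\E[\vw_{\pat}] = \vmu_2-\vmu_1$ exactly, independently of $n_1$, so nothing needs to be shown. From \eqref{eq:w_pat_covariance},
\[
\Bigl\|\Cov(\vw_{\pat}) - \tfrac{1}{n_2}\mSigma_2\Bigr\| = \tfrac{1}{n_1}\|\mSigma_1\| \to 0 .
\]
For the FastCAV, \eqref{eq:fastCAV_mean_general} gives
\[
\Bigl\|\E[\vw_{\fast}] - (\vmu_2-\vmu_1)\Bigr\| = \tfrac{n_2}{n_1+n_2}\,\|\vmu_2-\vmu_1\| \to 0 .
\]
For the covariance, write $\gamma_{n_1} = \bigl(\frac{n_1}{n_1+n_2}\bigr)^2 \to 1$. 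By \eqref{eq:fastCAV_cov_general},
\[
\Cov(\vw_{\fast}) - \tfrac{1}{n_2}\mSigma_2 = \tfrac{\gamma_{n_1}}{n_1}\mSigma_1 + \tfrac{\gamma_{n_1}-1}{n_2}\mSigma_2 .
\]
Applying the triangle inequality and $0\le\gamma_{n_1}\le 1$ bounds the norm of this difference by $\frac{1}{n_1}\|\mSigma_1\| + \frac{|\gamma_{n_1}-1|}{n_2}\|\mSigma_2\|$, which tends to zero. One could also note that $\vw_{\fast} = \frac{n_1}{n_1+n_2}\vw_{\pat}$ by \eqref{eq:scaling_pattern_fast_cav_general}, so the FastCAV limits follow from the PatternCAV ones by continuity of scalar multiplication.

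There is no real obstacle here. The only point needing care is the implicit assumption that $\mSigma_1$ and $\mSigma_2$ are finite, i.e.\ the class distributions have second moments, and that they do not depend on $n_1$. This is already required for Proposition~\ref{prop:distribution_fast_pattern_CAV}, so I will simply state it.
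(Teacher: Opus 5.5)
Your proof is correct and takes essentially the paper's approach. The paper just states that the corollary is an immediate consequence of Proposition~\ref{prop:distribution_fast_pattern_CAV}, and you have carried out the elementary limits $1/n_1 \to 0$ and $n_1/(n_1+n_2) \to 1$ in those finite-sample formulas explicitly.
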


Informally, as $n_1 \to \infty$, randomness w.r.t. to the \textit{non-concept} data vanishes, and the covariance matrices in  \eqref{eq:fast_pattern_CAV_cov_general_asymetric_asymptotic} depend solely on the variability of the \textit{concept} data.
A corresponding result for the ridge regression \eqref{eq:solution_logistic_regression} is given in the supplementary material in
Theorem \ref{thm:deterministic_equivalents_ridge_regression}.

\paragraph{Variability of CAVs.}
A measure to quantify the stochastic nature of \textit{CAVs} is their (total) variance, the sum of the variances of the individual entries, or equivalently, the trace of their covariance. Let us introduce this formally, as already proposed before in \citep[Definition 1]{wenkmann2025variability}.
\begin{definition} 
\label{def:variance_of_CAV}
We define the variance of a CAV $\vw_{\CAV} \in \R^d$ (assuming its covariance exists) as
\begin{equation}
    \Var(\vw_{\CAV}) := \Tr\left(\Cov( \vw_{\CAV})\right).
\end{equation}
\end{definition}
Of course, this definition is possible for any random vector with existing covariance; this definition only highlights its use in the context of CAVs.
It has been shown that $\Var(\vw_{\pat}) = \mathcal{O}(n_1^{-1})$ both for the \textit{PatternCAV} (there, referred to as the \textit{Difference of Means} \citep{martin2019interpretable}) and the logistic-regression CAV (under mild assumption)
in \citep[Thm. 2 \& Cor. 4]{wenkmann2025variability}

Thanks to the expressions for the covariances of several \textit{CAVs} provided above, we are immediately able to recover the variance behavior.
For instance, assuming the \textit{concept} data to be fixed (deterministic) as in \citep{wenkmann2025variability}, we have $\mSigma_2 = \mathbf{0}$;
then, for the covariance of the \textit{PatternCAV} as in \eqref{eq:w_pat_covariance}, and by Definition \eqref{def:variance_of_CAV},
we find that
\begin{equation}
   \Tr\left(\Cov(\vw_{\pat}) \right)  =  \frac{1}{n_1} \Tr\left( \mSigma_1  \right) = \mathcal{O}(n_1^{-1}),
\end{equation}
simply treating $\Tr\left( \mSigma_1  \right)$ as a constant. In this way, we can recover the corresponding aforementioned result from \citep{wenkmann2025variability} as a consequence of more generally describing the distribution of \textit{PatternCAV}; similar results can be derived for \textit{FastCAV} (even with smaller constants). By the conventions in Remark \ref{remark:sample_size_asymptotics}, this would correspond to the case $N = n_1$. In case of $N = n_1 + n_2$ (concept and non-concept data are both non-deterministic), we obtain similarly),
as $n_\ell/(n_1+n_2) = c_\ell$, \ie $1/n_\ell = c_\ell/(n_1+n_2)$ for $\ell=1,2$,
then $  \Tr (\Cov(\vw_{\pat}) )  \leq  (\Tr(\mSigma_1)/c_1 + \Tr(\mSigma_2)/c_2) /(n_1+n_2)$, \ie 
$\Tr \left(\Cov(\vw_{\pat}) \right)= \mathcal{O}(1/N)$. 
This variance decay seems to hold quite universally for different realistic approaches to compute CAVs, as already suggested by \citep{wenkmann2025variability}, and we will make use of it for our investigations in this paper. For the ridge regression, we refer to \ref{app:ridge_regression}.

\section{Testing with Concept Activation Vectors}
\label{sec:tcav}

\subsection{Sensitivity Scores, TCAV and Multi-TCAV}
\label{subsec:sensitivity_TCAV_and_multi_run_TCAV}

To quantify the contribution of a concept $C$ to the prediction of class $k$ for a single input $\vx \in \R^{d_0}$, \citep{kim2018interpretability} proposed the following sensitivity score $S_{C,k,l}(\vx)$, based on a CAV $\vv_C^l$ at layer $l$,
\begin{align}
        S_{C,k,l} \left(\vx\right)
    = \lim_{{h \to 0}} \frac{h_{l,k} \left( f_l (\vx) + h \cdot \vv_C^l \right) - h_{l,k} \left( f_l (\vx)\right)}{h} 
    = \left\langle \nabla h_{l,k} \left( f_l (\vx)\right), \vv_C^l \right\rangle. 
    \label{eq:TCAV_sensitivity_inner_product}
\end{align}
The equivalence of the directional derivative of $h_{l,k}$ along $\vv_C^l$ to the inner product holds under appropriate smoothness assumptions, which we assume to hold thereafter. For the remainder of this work, we primarily utilize the inner product formulation for its computational convenience, or simply use the generic expression $S_{C,k,l} (\vx)$ itself.
To emphasize the dependence on the (random) CAV, we may also write $S_{C,k,l} (\vx, \vv_C^l)$ or $S_{C,k,l} (\vx, \vw_\CAV)$. 
While $S_{C,k,l} (\vx)$ considers only a single input $\vx$, \citep{kim2018interpretability} also introduced the $\operatorname{TCAV_Q}_{C,k,l}$ score to measure the overall importance of a concept $C$ to class $\mathcal{X}_k$, dubbed \textit{Testing with Concept Activation Vectors (TCAV)}. It quantifies the fraction of inputs $\vx \in \mathcal{X}_k$ for which the sensitivity score is positive (for a given CAV realization),
\begin{align}
       \operatorname{TCAV_Q}_{C,k,l}  
=   &   \frac{\left| \{ \vx \in \mathcal{X}_k \, : \, S_{C,k,l} (\vx) > 0 \}\right|}{|\mathcal{X}_k|} \label{eq:TCAV_def_original_kim}  \\
=   &   \frac{1}{|\mathcal{X}_k|} \sum_{\vx \in \mathcal{X}_k} \mathds{1}_{\{ S_{C,k,l}(\vx) > 0 \}}, \label{eq:TCAV_def_original_indicator_fcts}  
\end{align}
so that $\operatorname{TCAV_Q}_{C,k,l} \in [0,1]$. 
We comment on the practical interpretation of this quantity in Remark \ref{rem:TCAV_practical_interpretation} below.
The original formulation \eqref{eq:TCAV_def_original_kim} was provided in \cite{kim2018interpretability} and seems to be used exclusively until today in the literature. However, the equivalent reformulation \eqref{eq:TCAV_def_original_indicator_fcts} as an average of indicator functions turns out to be insightful: despite being a simple rephrasing, it highlights the mean-like character of this quantity. This will inspire us to formally define this approach in the language of probability theory as an expectation
of a random variable containing the indicator function; we will then take the opportunity to propose a refined version employing a smooth variant below.
Notably, both $S_{C,k,l} (\vx)$ and $\operatorname{TCAV_Q}_{C,k,l}$ depend on the specific choice of the CAV $\vw_\CAV$, which in turn -
even for fixed \textit{(non-)concept} examples and a fixed layer $l$, still depends on a specific method and number of sample used to compute it.
More formally, \eqref{eq:TCAV_def_original_kim} and \eqref{eq:TCAV_def_original_indicator_fcts} could be interpreted as an estimator (using a single CAV realization and the available samples from $\mathcal{X}_k$ from $\mathcal{D}_k$)
\begin{equation}
 \overline{\TCAV}(\mathds{1})
:=   \E_{\begin{subarray}{l} \vx \sim \mathcal{D}_k \\ \vw_\CAV \sim \mathcal{D}_\CAV \end{subarray}}
    \left[  \mathds{1}_{\left( S_{C,k,l}(\vx, \vw_\CAV) \right) > 0}   \right]
=   \P_{\begin{subarray}{l} \vx \sim \mathcal{D}_k \\ \vw_\CAV \sim \mathcal{D}_\CAV \end{subarray}}
    \left(   S_{C,k,l}(\vx, \vw_\CAV)  > 0  \right) 
  \label{eq:true_standard_TCAV}
\end{equation}
for simplicity dropping the subscript (assuming the concept $C$, class $k$ and layer $l$ to be clear from the context), and instead emphasizing the role of the indicator function $\mathds{1}_{x>0}$ being used here.
Similar to the ``standard'' version in \eqref{eq:TCAV_def_original_indicator_fcts}, 
we may also use an alternative estimator of \eqref{eq:true_standard_TCAV} that additionally performs an averaging 
over different realization of the CAV. This exactly is the idea of Multi-TCAV,
which will be introduced below.

\begin{remark}[Interpretation and Limitations of TCAV]\label{rem:TCAV_practical_interpretation}
$\operatorname{TCAV}$ quantifies the influence of concept $C$ on the $k$th class by measuring the prevalence of positive directional derivatives. Values near $1$ indicate a strong positive contribution, $0.5$ suggests a neutral role (uncorrelated), and values near $0$ imply the concept contradicts the prediction. 
Despite its intuitive appeal, the standard definition in \eqref{eq:TCAV_def_original_kim} suffers from a significant flaw: only the sign matters and it is insensitive to magnitudes and exhibits problematic behavior at the decision threshold. Specifically, the metric fails to distinguish between a truly negative contribution ($S_{C,k,l}(\vx) < 0$ for all $\vx$) and a neutral, zero-gradient case ($S_{C,k,l}(\vx) = 0$ for all $\vx$) - in both scenarios, $\operatorname{TCAV_Q}_{C,k,l}$ collapses to $0$. This discontinuity is more than a theoretical curiosity, but the reason why TCAV's
variance may sometimes be non-decaying \citep[Sec. 4.4]{wenkmann2025variability}.
\end{remark}
 
\begin{remark}[Sources of Randomness]\label{rem:sources_of_randomness}
Note that the sensitivity score $S_{C,k,l}(\vx)$ in \eqref{eq:TCAV_sensitivity_inner_product} is a random variable with several sources of randomness. While the model itself (specifically the functions $h_{l,k}$ and $f_l$) is assumed to be deterministic --- thereby neglecting potential dependencies on the training data --- the distribution of $S_{C,k,l}(\vx)$ originates from, firstly, $\vx$ being a random data vector, drawn from the input data distribution (mixture of several classes), and  secondly, the CAV $\vv_C^l$ being a random vector, whose distribution is induced by both the ($l$th layer) \textit{concept} and the \textit{non-concept} distribution.
\end{remark}

\begin{remark}\label{rem:sensitivity_distribution}[Asymptotically Normal Distribution.]
With deterministic \textit{concept} data, and $n_1 \to \infty$ for the non-concept data, \citep[Theorem 1 \& Corollary 1]{wenkmann2025variability} have shown (asymptotically) normal distributions of the \textit{CAV} $\vw_\CAV$ and the corresponding sensitivity scores $\langle \vz, \vw_\pat \rangle$ (for deterministic $\vz$), for imbalanced logistic regression \citep{owen2007infinitely}. With our findings in Sec. \ref{sec:cavs}, it is straightforward to obtain a similar result for the \textit{PatternCAV} (or \textit{FastCAV})
obtaining an asymptotically normal distribution. We provide the precise result and its proof in Lemma \ref{lem:distribution_inner_prod_with_pattern_CAV}.
 \end{remark} 

To decrease the variance via averaging, Multi-TCAV was already proposed in the seminal paper \citep{kim2018interpretability} and is considered the state of the art until now \citep{wenkmann2025variability}. It partitions the total number $N = s \cdot n$ of (non-)concept examples into $s$ disjoint subsets of size $n$. 
For each subset $j \in \{1, \dots, s\}$, a separate CAV is trained, providing an individual TCAV score $T_j$, 
\begin{equation}
 T_j  = \operatorname{TCAV_Q}_{C,k,l} \left( \vw_{\CAV}^{(j)} \right) \\
      = \frac{\left| \{ \vx \in \mathcal{X}_k \, : \, S_{C,k,l} \left(\vx, \vw_{\CAV}^{(j)}\right) > 0 \}\right|}{|\mathcal{X}_k|}.    
      \label{eq:T_J}
\end{equation}
The final multi-run TCAV score, $T_{\operatorname{multi}}$, is defined as the empirical average of these individual scores,
\begin{equation}
     T_{\operatorname{multi}} 
=   \frac{1}{s} \sum_{j=1}^s T_j
=      \frac{1}{s \cdot |\mathcal{X}_k|} \sum_{j=1}^s \sum_{\vx \in \mathcal{X}_k} \mathds{1}_{\{ (\vx, \vw^{(j)}) > 0 \}}.
     \label{eq:multi_tcav_average_def}
\end{equation}
Typically  $\vw_{\CAV}^{(1)}, \dots \vw_{\CAV}^{(s)}$ have the same mean as a single global CAV computed with the identical method using all samples, but this is generally \textit{not} true for the covariance (see e.g. Poposition \ref{prop:distribution_fast_pattern_CAV}).
Naturally, recalling Definition \ref{def:variance_of_CAV} and generally the discussion in Sec. \ref{sec:cavs}, we typically have a stronger variance decay $\Var(\vw_{\CAV})= \Tr(\Cov( \vw_{\CAV})) = \mathcal{O}(1/N)$ for the single CAV, in contrast to the larger variance $\Var(\vw_{\CAV}^{(j)})= \Tr(\Cov( \vw_{\CAV}^{(j)})) = \mathcal{O}(1/n)$ in the multi-CAV approach for $j=1, \dots, s$.
Regarding the variance of Multi-TCAV, \citep[Conjecture 1]{wenkmann2025variability} states that
$\Var(T_{\operatorname{multi}}) = \mathcal{O}(1/s)$, which indeed is a straightforward consequence of 
Lemma \ref{lem:variance_empirical_average} for independent $T_j$ and assuming $\Var(T_j) = \mathcal{O}(1)$ for all $j$. While this scaling with respect to $s$ seems overall correct, it misses a mild dependency on $n$. 
Given a fixed sampling budget $N = s \cdot n$, increasing $s$ necessarily leads to a larger $n$, thus typically increases the above-mentioned $\Var(\vw_{\CAV}^{(j)})$ of the individual CAVs, and thus variances of $T_j$.
Thus, increasing $s$ also increases the variances of $T_j$ over which we average in \eqref{eq:multi_tcav_average_def}. This subtle tradeoff suggests that Multi-TCAV may not be optimal.
 
\subsection{$\alpha$-TCAV}
\label{subsec:alpha_TCAV}

As a novel alternative to the standard TCAV procedure, and as a generalized framework, we introduce $\alpha$-TCAV.
Its fundamental idea is to replace the indicator fct. in \eqref{eq:true_standard_TCAV} (similar for its empirical versions \eqref{eq:TCAV_def_original_indicator_fcts} and \eqref{eq:multi_tcav_average_def}) for Multi-TCAV, by the \textit{scaled sigmoid function} $s_{\alpha}(x) = \frac{1}{1 + \exp(-\alpha x)}$, $\alpha > 0$; see also  \eqref{eq:sigmoid_definition} and Sec. \ref{sec:app_sigmoid_function} in the appendix. Thus, as an alternative TCAV definition to \eqref{eq:true_standard_TCAV} we propose

\begin{equation}
\overline{\TCAV} (\alpha)
:=
  \E_{\begin{subarray}{l} \vx \sim \mathcal{D}_k \\ \vw_\CAV \sim \mathcal{D}_\CAV \end{subarray}}
    \left[   s_\alpha \left( S_{C,k,l}(\vx, \vw_\CAV) \right) \right].   
    \label{eq:def_alpha_TCAV}
\end{equation}

While the proposed $\alpha$-$\TCAV$ may be regarded as a smooth approximation of \eqref{eq:TCAV_def_original_indicator_fcts} - e.g., similar to the sigmoid function being employed for adversarial TCAV \citep{schnoor2026concept} - we think it is a refined definition in first place. It is better at capturing the neutral behavior around the origin $S_{C,k,l} (\vx) = 0$, by discounting sensitivity scores close to zero, thus not only taking the sign of $S_{C,k,l} (\vx)$ into account, but also its magnitude. It can help to drastically reduce the variance \textit{without} the need for a multi-run CAV (computing several CAVs); see also Figure \ref{fig:tcav_comparison}.
The definition of $\alpha$-$\TCAV$ as given in \eqref{eq:def_alpha_TCAV} is quite generic as it considers
an expectation with respect to the underlying data distribution $\vx \sim \mathcal{D}_k$ as well as the distribution of the CAV $\vw_\CAV$.
It is straightforward to consider specific variants of this, like assuming a fixed (deterministic) set of concept examples, and sampling only from the non-concept distribution \citep{wenkmann2025variability}.
We may also consider a \textit{conditional} version (conditionally on the random CAV  $\vw_\CAV$), by just taking the expectation with respect to $\vx \in \mathcal{X}_k$, making the (conditional) expectation in \eqref{eq:def_alpha_TCAV} a random variable (or vice versa).

\begin{figure}[h]
    \centering
    \begin{tikzpicture}
        \begin{axis}[
            axis lines = middle,
            xmin=-5.5, xmax=5.5,
            ymin=-0.2, ymax=1.3,
            xtick={0},
            ytick={0.5, 1},
            every axis x label/.style={at={(ticklabel* cs:1.05)}, anchor=west},
            xlabel={$x$},
            ylabel={}, 
            axis line style={very thick},
            width=9cm, height=5.8cm,
            legend style={at={(1.0, 0.16)}, anchor=south east, draw=none, fill=none, row sep=2pt, align=left}
        ]
        \addplot [red, very thick, dashed, domain=-5.5:5.5, samples=100] {1/(1 + exp(-x))};
        \addlegendentry{$s(x)$}
        
        \addplot [green!60!black, ultra thick, dashed, domain=-5.5:5.5, samples=150] {1/(1 + exp(-3*x))};
        \addlegendentry{$s_\alpha(x)$ \\ $\alpha > 1$}
        
        \addplot [blue, very thick, domain=-5.5:-0.02, forget plot] {0};
        \addplot [blue, very thick, domain=0.02:5.5] {1};
        \addlegendentry{$s_\infty(x)$}
        
        \addplot [orange, very thick, dashdotted, domain=-5.5:-0.02, forget plot] {0};
        \addplot [orange, very thick, dashdotted, domain=0.02:5.5] {1};
        \addlegendentry{$\mathds{1}_{\{x > 0\}}$}
        
        
        \draw[blue, thick, fill=white] (axis cs:0,0) circle (2.5pt);
        \draw[blue, thick, fill=white] (axis cs:0,1) circle (2.5pt);
        \draw[blue, thick, fill=blue] (axis cs:0,0.5) circle (2pt); 
        
        \draw[orange, thick, fill=orange] (axis cs:0,0) circle (1.2pt);
        
        \node[orange, thick] at (axis cs:0,1) {\scalebox{1.5}{$\times$}};
        
        \node[below left] at (axis cs:0,0) {0};
        
        \end{axis}
    \end{tikzpicture}
    \caption{The indicator function $\mathds{1}_{\{x > 0\}}$ and approximations by the sigmoid function $s$, the scaled sigmoid function $s_\alpha$, 
    as well as the pointwise limit $s_\infty(x) := \lim_{\alpha \to \infty} s_\alpha(x)$, the \textit{Heaviside function}.}
    \label{fig:sigmoid_indicator_fct}
\end{figure}

\begin{remark}[Role of $\alpha$]\label{rem:role_of_alpha}
Larger values of $\alpha$ in $s_\alpha$ drive the definition of $\alpha$-$\TCAV$ closer to the original one, while retaining smoothness and, and showing a favourable behavior around $S_{C,k,l} (\vx) = 0$. For $\alpha \to \infty$, the pointwise limit yields the \textit{Heaviside step function} $s_\infty$ - see \eqref{eq:sigmoid_heaviside_limit} in the appendix for the formal definition - that differs from the originally used indicator function only in the origin; compare also Figure \ref{fig:sigmoid_indicator_fct}. Even though  $s_\infty$ is no longer continuous in the origin (similar to the indicator function $\mathds{1}_{\{x > 0\}}$) - which is even also the only point where it differs from  $\mathds{1}_{\{x > 0\}}(0)$; see again Figure \ref{fig:sigmoid_indicator_fct} - it shows a superior behavior by clearly reflecting the ``neutral'' behavior  $s_\infty(0) = 0.5$ at the origin, opposed to the indicator function with $\mathds{1}_{\{x > 0\}}(0) = 1$, as  discussed in Remark \ref{rem:TCAV_practical_interpretation}.
\end{remark}

As seen in Rem. \ref{rem:sensitivity_distribution}, sensitivity scores $S_{C,k,l} (\vx) = \langle \nabla h_{l,k} ( f_l (\vx)), \vw_\CAV \rangle$ as defined in \eqref{eq:TCAV_sensitivity_inner_product} tend to be normally distributed for fixed input $\vx$ with $\nabla h_{l,k} ( f_l (\vx)) \neq \mathbf{0}$, and sufficiently large sample size to compute the (random) CAV $\vw_\CAV$ (this is not necessarily the case for the \textit{joint} distribution; determining the distribution of of $S_{C,k,l} (\vx) = \langle \nabla h_{l,k} ( f_l (\vx)), \vw_\CAV \rangle$ for $\vx \sim \mathcal{D}_k$ may be intractable). 
Therefore, $s_\alpha(S_{C,k,l} (\vx))$ follows the so-called \textit{logit-normal distribution} over the open interval $(0,1)$, \ie, a normal distribution passed through a (scaled) sigmoid function $s_\alpha$ (or equivalently, a standard sigmoid function $s = s_1$, with appropriately scaled mean and variance); see also \ref{sec:probability}.
It turns out this distribution arises not only due to the Gaussian nature of $S_{C,k,l} (\vx)$ and the desire to smoothly approximate the indicator function $\mathds{1}_{x > 0} \approx s_\alpha(x)$, but has other another remarkable feature \citep{johnson1949systems}: depending on its parameters, it can be either \textit{bimodal} (Bernoulli-like, and thus resembling TCAV) or \textit{uni-modal} (Binomial like, resembling the Multi-TCAV); see also
Fig. \ref{fig:uni_modal_bi_modal}.
Thus, $\alpha$-$\TCAV$ provides a generalized framework combining standard TCAV and Multi-TCAV, while being more flexible and conceptually clearer, statistically more robust and clearly computationally more efficient (by avoiding multiple runs as in Multi-TCAV). We discuss the statistical aspects in the next section.

\begin{figure}[h]
\centering
\begin{tikzpicture}
\begin{groupplot}[
    group style={
        group size=2 by 2,
        horizontal sep=0.7cm, 
        vertical sep=0.7cm    
    },
    width=6cm,
    height=4cm,
]

\nextgroupplot[
    title={$\mathds{1}$-TCAV (Bernoulli)},
    ymin=0, ymax=0.8,
    xmin=-0.15, xmax=1.15,
    enlarge x limits=false,
    xtick={0,1},
    ytick={0,0.2,0.4,0.6,0.8},
    ymajorgrids=true,
    grid style=dashed,
]
\draw[fill=blue!50] (axis cs:-0.15,0) rectangle (axis cs:0.15,0.3);
\draw[fill=blue!50] (axis cs:0.85,0) rectangle (axis cs:1.15,0.7);

\nextgroupplot[
    title={Multi-TCAV (norm. Binomial)},
    yticklabel pos=right,
    ymin=0, ymax=0.25,
    xmin=0, xmax=1,
    ymajorgrids=true,
    grid style=dashed,
]
\addplot[ybar,bar width=0.015,fill=red!50] coordinates {
    (8/20, 0.0039) (9/20, 0.0120) (10/20, 0.0308) (11/20, 0.0654)
    (12/20, 0.1144) (13/20, 0.1643) (14/20, 0.1916) (15/20, 0.1789)
    (16/20, 0.1304) (17/20, 0.0716) (18/20, 0.0278) (19/20, 0.0068) (20/20, 0.0008)
};

\nextgroupplot[
    xlabel={$\alpha$-$\TCAV$: (bimodal Logit-Normal)},
    xlabel style={align=center},
    xmin=0, xmax=1,
    ymin=0, ymax=6, 
    ymajorgrids=true,
    grid style=dashed,
    samples=500,
]
\addplot[thick,blue,domain=0.005:0.995] {
    1/(x*(1-x)*3*sqrt(2*pi)) * exp(-(ln(x/(1-x))-0.847)^2/(2*9))
};

\nextgroupplot[
    xlabel={$\alpha$-$\TCAV$: (unimodal Logit-Normal)},
    xlabel style={align=center},
    yticklabel pos=right,
    xmin=0, xmax=1,
    ymin=0, ymax=6, 
    ymajorgrids=true,
    grid style=dashed,
    samples=300,
]
\addplot[thick,red,domain=0.01:0.99] {
    1/(x*(1-x)*0.5*sqrt(2*pi)) * exp(-(ln(x/(1-x))-0.847)^2/(2*0.25))
};

\end{groupplot}
\end{tikzpicture}
\caption{Schematic illustration: $\mathds{1}$-TCAV corresponds to a Bernoulli distribution, while Multi-TCAV is associated to a (normalized) Binomial distribution. $\alpha$-$\TCAV$ follows a logit-normal distribution, that can 
either be bimodal (Bernoulli-like; left), or unimodal (Binomial-like; right).}
\label{fig:uni_modal_bi_modal}
\end{figure}

\section{Distribution of TCAV Scores and Numerical Experiments}
\label{app:TCAV_distribution_and}

A key theme of this paper is the stochastic nature of CAVs, which carries over to the various 
variants of TCAV scores. The high variance of the classical TCAV definition 
\eqref{eq:TCAV_def_original_kim} was already noted in the seminal work \cite{kim2018interpretability} 
and motivated Multi-TCAV. A trivial yet instructive upper bound,
\begin{equation}
    \Var(\TCAV) \leq \tfrac{1}{4},
    \label{eq:variance_TCAV_trivial_upper_bound}
\end{equation}
holds for any random variable $\TCAV$ taking values in $[0,1]$, by the classical \textit{Popoviciu's Inequality} 
\citep{popoviciu1935equations}; see Theorem~\ref{thm:popoviciu}.
It is attained by a symmetric Bernoulli distribution placing all mass on $\{0,1\}$. Although elementary, 
this observation highlights a key shortcoming: $\operatorname{TCAV_Q}_{C,k,l}$ in 
\eqref{eq:TCAV_def_original_indicator_fcts} uses the indicator $\mathds{1}_{\{x>0\}}$ and thus follows a 
Bernoulli distribution, which is prone to high variance when (near-)symmetric and attains low variance 
only by concentrating most mass on $0$ or $1$ (see Fig.~\ref{fig:variance_illustration}).

\begin{figure}[h]
\centering
\begin{tikzpicture}
\begin{axis}[
    width=9cm,
    height=4.5cm,
    axis lines=left,
    xlabel={$p$},
    ylabel={Variance},
    ylabel style={yshift=-0.9cm, xshift=-0.25cm}, 
    xmin=0, xmax=1,
    ymin=0, ymax=0.3,
    xtick={0, 0.1, 0.3, 0.5, 0.7, 0.9, 1}, 
    xticklabels={0, $p_{\operatorname{Ber}}^{(1)}$, $p_{\operatorname{Bin}}^{(1)}$, 0.5, $p_{\operatorname{Bin}}^{(2)}$, $p_{\operatorname{Ber}}^{(2)}$, 1},
    ytick={0, 0.25},
    axis line style={thick},
    legend style={at={(0.70, 1.0)}, anchor=north west, font=\footnotesize, draw=none, fill=none},
    legend cell align={left},
    xshift=-1.2cm
]
\addplot [
    domain=0:1, 
    samples=100, 
    very thick, 
    blue!70!black
] {x*(1-x)};
\addlegendentry{Bernoulli}
\addplot [
    domain=0:1, 
    samples=100, 
    very thick, 
    red!70!black
] {x*(1-x)/3};
\addlegendentry{Binomial}
\draw [dashed, red, thin] (axis cs:0.5, 0) -- (axis cs:0.5, 0.25);
\draw [dashed, red, thin] (axis cs:0.5, 0.25) -- (axis cs:0, 0.25);
\draw [thick, blue!70!black] (axis cs:0.1, 0) -- (axis cs:0.1, {0.1*0.9});
\draw [thick, red!70!black] (axis cs:0.3, 0) -- (axis cs:0.3, {0.3*0.7/3});
\draw [thick, blue!70!black] (axis cs:0.9, 0) -- (axis cs:0.9, {0.9*0.1});
\draw [thick, red!70!black] (axis cs:0.7, 0) -- (axis cs:0.7, {0.7*0.3/3});
\end{axis}
\begin{scope}[yshift=-1.1cm, xshift=-1.2cm]
    \draw[<-, >=Stealth, thick] (0.2,0) -- (3.2,0);
    \node[below, font=\small] at (1.7,0) {concentration};
    
    \draw[->, >=Stealth, thick] (4.2,0) -- (7.2,0);
    \node[below, font=\small] at (5.7,0) {concentration};
\end{scope}
\end{tikzpicture}
\caption{Comparison of the variance between a Bernoulli distribution and a scaled (average of i.i.d. Bernoullis) Binomial distribution ($s=3$)
with a probability of success $p$, \ie with variances $p(1-p)$ (Bernouilli) and $p(1-p)/s$ (normalized Binomial).
The figure also illustrates a finding of Table \ref{table:TCAV_distributions}: as a side-effect of averaging we are not comparing variances corresponding to the same value of $p$, but ``more neutral'' ones for the Binomial. More precisely (recall $\Phi$ from \eqref{eq:cdf_gaussian}), for TCAV and Multi-TCAV, the underlying reason is that $\Phi( \sqrt{n} \mu / \sigma )$ is ``more neutral (closer to 0.5)'' than $\Phi( \sqrt{N} \mu / \sigma )$ and therefore shifts the success probability slightly \textit{against} a variance reduction  (e.g. $p_{\operatorname{Bin}}^{(1)}$ vs. $p_{\operatorname{Ber}}^{(1)}$, and $p_{\operatorname{Bin}}^{(2)}$ vs. $p_{\operatorname{Ber}}^{(2)}$ as schematic illustration), - despite an overall variance reduction thanks to the averaging.}
\label{fig:variance_illustration}
\end{figure}

\subsection{Modelling different TCAV approaches}
\label{app:simulating_TCAV} 

Let us lay out a mathematical model describing the sensitivity scores $S_{C,k,l}(\vx) = \langle \nabla h_{l,k}(f_l(\vx)), \vw_\CAV \rangle = \langle \vz, \vw_\CAV \rangle$ 
(for a fixed but arbitrary input $\vx$) based on their asymptotically normal distribution (assuming $\vz = \nabla h_{l,k}(f_l(\vx)) \neq \mathbf{0}$)
--- recall Rem. \ref{rem:sensitivity_distribution} --- and their variance decay as described in Section \ref{sec:cavs}. Let $X \sim \mathcal{N}(\mu, \sigma^2)$ and i.i.d. copies $X_1, \dots X_{N}$ defined on the same probability space $(\Omega, \mathcal{E}, \P)$. Assume we split it up into $s$ batches of size $n$ (\ie 
with $s \cdot n = N$), which we write as
\begin{align}
        X_1, \dots, X_{N} 
&=      X_1^{(1)}, \dots, X_{n}^{(1)},\,  X_1^{(2)}, \dots, X_{n}^{(2)}, \dots\dots\dots, X_1^{(s)}, \dots, X_{n}^{(s)}, \label{eq:batches_sim_Multi_TCAV} \\
        N 
&=      s \cdot n. \label{eq:N_s_n}
\end{align}
By averaging them (all, or batch-wise), we can model the variance decay of the different TCAV approaches. By properties of the normal distribution (see also the basic Lemma \ref{lem:variance_empirical_average}), we easily obtain the distributions of their averages.
In particular,
\begin{equation}
\frac{1}{N}\sum_{j=1}^{N} X_j \sim \mathcal{N}(\mu, \sigma^2/N),
\qquad \mathrm{and} \qquad
\frac{1}{n}\sum_{j=1}^{n} X_j^{(r)} \sim \mathcal{N}(\mu, \sigma^2/n) 
\quad \forall r = 1, \dots, s,
\label{eq:distribution_gaussian_averages}
\end{equation}
for the overall and batch-wise average. We use this approach for modeling the different TCAV
approaches analyzed in this paper. More precisely, it is based firstly on assuming that 
$\langle \vz, \vw_\CAV \rangle$ is  normally distributed for deterministic fixed $\vz$ (playing the role of $h_{l,k} (f_l (\vx))$ for any fixed input $\vx$) and random vector $\vw_\CAV$ (playing the role of the CAV in the sensitivity score $S_{C,k,l} (\vx) = \langle \nabla h_{l,k} (f_l (\vx)), \vw_\CAV \rangle$). This assumption is well-justified by standard \textit{CLT}-based arguments indicating an (asymptotically) normal distribution; recall our detailed discussion in Rem. \ref{rem:sensitivity_distribution}, notably also Lemma
\ref{lem:distribution_inner_prod_with_pattern_CAV} and 
\citep[Theorem 1 \& Corollary 1]{wenkmann2025variability}.
Secondly, it is a convenient way to model the appropriate variance decay of the random variable 
$\langle \vz, \vw_\CAV \rangle$. Recalling Sec. \ref{sec:cavs} and notably 
Definition of \ref{def:variance_of_CAV} $\Var(\vw_\CAV)  =  \Tr(\Cov(\vw_\CAV))$, and this (total) variance consistently decays at a rate of $\mathcal{O}(N^{-1})$ when using $N$ i.i.d. samples to compute the CAV. Note that \citep{wenkmann2025variability} considers $N$ to be the number of non-concept samples ($N=n_1$ in our notation) and the concept data to be fixed; nevertheless, while this is plausible as non-concept data is cheaper to generate, this restriction is not necessary and we we find the same decay for $N=n_1+n_2$, assuming a distribution over both the concept \textit{and} non-concept data; recall also Remark \ref{remark:sample_size_asymptotics}. Note that this variance decay of $\vw_\CAV$ naturally carries over to the inner product $\langle \vz, \vw_\CAV \rangle$ as
(with $\lambda_{\max}(\mA)$ denoting the largest eigenvalue of a matrix $\mA$) we have
\begin{align*}
     \Var(\langle \vz, \vw_\CAV \rangle) 
=   \vz^\top \Cov(\vw_\CAV) \vz 
\leq \|\vz\|^2 \Tr(\Cov(\vw_\CAV)) = \mathcal{O}(N^{-1}).   
\end{align*}
Thus, our setup is clearly the appropriate way to model (for any $\vx$) the sensitivity scores
$S_{C,k,l} (\vx) = \langle \nabla h_{l,k} (f_l (\vx)), \vw_\CAV \rangle = \langle \vz, \vw_\CAV \rangle$
using $N$ samples to compute $\vw_\CAV$ (or batch-wise for $s$ batches of size $m$ for modeling several instances $\langle \vz, \vw_\CAV^{(r)} \rangle$ with random CAVs $\vw_\CAV^{(r)}$ for $i = 1, \dots, s$) by Gaussian averages as in \eqref{eq:distribution_gaussian_averages}. This however makes it possible to systematically study from a mathematical viewpoint the properties of the different TCAV methods, leading to interesting novel insights. It turns out that each TCAV approach is associated to a corresponding classical probability distribution, namely the \textit{Bernoulli distribution} for the (standard) TCAV, the \textit{Binomial distribution} (normalized to $\{0, \tfrac{1}{s}, \tfrac{2}{s}, \dots, 1\}$, rather than $\{1, \dots, s\}$)
in case of Multi-TCAV, and finally the \textit{Logit-Normal distribution} for the $\alpha$-$\TCAV$. 
Despite its simplicity, this approach has straightforward consequences to the more general case. For instance, showing that a TCAV method has a lower variance than another method (e.g., Multi-TCAV has a lower variance than TCAV) for any fixed realization of $\vz = \nabla h_{l,k}(f_l(\vx)) \neq \mathbf{0}$, it must also hold in expectation for probability distribution over $\vx$, including the (unknown) distribution of class $\mathcal{X}_k$. Thus, exploiting the Gaussianity of the conditional distributions is a powerful approach that enables us to compare the different TCAV approaches more rigorously.  
We summarize our findings in the table below, followed by a detailed derivation and discussion of our results. Thus, for a CAV $\vw_\CAV = \vw_\CAV(N)$ computed using $N$ random samples the sensitivity scores are modeled the normal distribution as
 \begin{equation}
  \langle \vz, \vw_\CAV \rangle \sim \frac{1}{N}\sum_{j=1}^{N} X_j \sim \mathcal{N}(\mu, \sigma^2/N),
  \label{eq:cond_sensitivity_score_distribution_model}
 \end{equation}
This makes sense as we expect the mean to be the same when increasing the sample size,
but the variance to reduce at order of $\mathcal{O}(1/N)$. Modeling this using a sequence/sum makes sense
as we have a on-to-one correspondence with the sample size used for computing a CAV, and fits well in the Multi-TCAV framework. Intuitively, this model describes the following situation where $\mu$ is the \textit{signal} (indicating the concept's relevance) and $\sigma^2/N$ is the \textit{noise} (due to the randomness of the CAV $ \vw_\CAV$).
In particular, it can model the following sitations.

\begin{itemize}
    \item For $\mu > 0$, the concept has a positive influence on the prediction of the class. With higher probability $\langle \vz, \vw_\CAV \rangle$ are positive, but due to 
    the randomness of $\vw_\CAV$, with lower probability (increasing as $N$ grows) some will be negative.
    \item For $\mu < 0$, the concept has a negative influence on the prediction of the class. With higher probability $\langle \vz, \vw_\CAV \rangle$ are negative, but due to 
    the randomness of $\vw_\CAV$, with lower probability (increasing as $N$ grows) some will be positive.
    \item For $\mu = 0$, the concept is neutral regarding the prediction of the class. The scores $\langle \vz, \vw_\CAV \rangle$ are concentrated around the origin, but due to 
    the randomness of $\vw_\CAV$, it fluctuates around zero (however, it becomes more concentrated as $N$ grows).
\end{itemize}

Later we cover all three situations in experiments in Fig. \ref{fig:synthethic_data_TCAV}.
Before that, let us elaborate on the different properties of the TCAV approaches.

\begin{table}[h]
\centering
\renewcommand{\arraystretch}{2.8}  
\small  
\begin{tabular}{clll}
\toprule
\makecell[c]{\textbf{Method} \\[2pt] Distribution} & \textbf{Random Variable} & \textbf{Mean} & \textbf{Variance} \\ 
\midrule
\makecell[c]{\textbf{TCAV} \\[2pt] Bernoulli} & 
$\mathds{1}_{\left\{\tfrac{1}{N}\sum_{j=1}^{N} X_j > 0\right\}}$ & 
$\Phi\left( \frac{\sqrt{N} \mu}{\sigma} \right)$ & 
$\Phi\left( \frac{\sqrt{N}\mu}{\sigma} \right) \left(1- \Phi\left( \frac{\sqrt{N}  \mu}{\sigma} \right)\right)$  \\
\midrule
\makecell[c]{ \textbf{$T_{\text{multi}}$}  \\[2pt] (norm.) Binomial} & 
$\frac{1}{s}\sum\limits_{i=1}^{s} \mathds{1}_{\left\{\tfrac{1}{n}\sum_{j=1}^{n} X_j^{(i)} > 0\right\}}$ & 
$\Phi\left( \frac{\sqrt{n} \mu}{\sigma} \right)$ & 
$\frac{\Phi\left( \frac{\sqrt{n}\mu}{\sigma} \right) \left(1- \Phi\left( \frac{\sqrt{n}  \mu}{\sigma} \right)\right)}{s}$ \\
\midrule
\makecell[c]{ \textbf{$\alpha$-\textbf{TCAV}} \\[2pt] Logit-Normal} & 
$s_\alpha \left( \frac{1}{N}\sum_{j=1}^{N} X_j  \right)$ & 
$m:= m \left(\mu, \tfrac{\sigma^2}{N}, \alpha\right)$& 
$m(1 - m) \left( 1 - \frac{1}{\sqrt{1 + \tau_2 \alpha^2 \sigma^2/N}} \right)$ \\
\bottomrule
\end{tabular}
\vspace{0.6em}
\caption{Characterization of the three TCAV variants under the Gaussian model  for the sensitivity score 
$\langle \vz, \vw_\CAV \rangle$ modeled as in \eqref{eq:cond_sensitivity_score_distribution_model}. 
Each variant corresponds to a classical probability distribution — Bernoulli, (normalized) Binomial, and Logit-Normal — with closed-form mean and variance (approximate for $\alpha$-TCAV). Here $\Phi$ denotes the standard normal cdf \eqref{eq:cdf_gaussian}, 
$N = sn$ is the total sample budget split into $s$ batches of size $n$, $\tau_2 \approx 0.358$ is the 
constant from \eqref{eq:tau_1_and_tau_2}, and $m = m(\mu, \sigma^2/N, \alpha)$ is the logit-normal 
mean approximation discussed in more detail in Sec. \ref{sec:probability}.}
\label{table:TCAV_distributions}
\end{table}
 
Before we continue, let us briefly recall the \textit{cumulative distribution function (cdf)} $\Phi$ 
of the standard normal distribution $X \sim \mathcal{N} (0,1)$,
\begin{equation}
   \P(X \leq t) 
=   \Phi(t) 
= \int_{-\infty}^{t} \frac{1}{\sqrt{2\pi}} \exp\left( -\frac{x^2}{2} \right) \, \mathrm{d}x. 
\label{eq:cdf_gaussian}
\end{equation}
The two constants $\tau_1  = \tfrac{\pi}{8}$ and $\tau_2  = 0.358$ from \eqref{eq:tau_1_and_tau_2}, and the function $m(\mu,z, \alpha)$, 
defined as in \eqref{eq:logit-normal_approx_mean} with $\mu \in \R, z > 0, \alpha > 0$,
     \begin{equation}
                m
            :=  m(\mu, z, \alpha) 
            :=  s \left( \frac{\alpha \mu}{\sqrt{1 + \tau_1 \alpha^2 z}} \right).
    \end{equation}

\paragraph{TCAV.}
Note that the distribution of interest is a Bernoulli distribution, whose ``success'' probability is related
to the underlying Gaussian distribution $\mathcal{N}(\mu, \sigma^2/N)$. 
Thus, we obtain the for the mean that
 \begin{equation}
     \E \left[\mathds{1}_{\left\{\tfrac{1}{N}\sum_{j=1}^{N} X_j > 0\right\}}\right] 
=    \P \left( \frac{1}{N}\sum_{j=1}^{N} X_j  > 0 \right)
=    \Phi\left( \frac{\mu}{\sigma/\sqrt{N}} \right) 
=    \Phi\left( \frac{\sqrt{N} \mu}{\sigma} \right).
\label{eq:TCAV_expectation}
\end{equation}
Next, recall that  $Y = Y^2$ for any Bernoulli random variable $Y$, such that $\E[Y] = \E[Y^2]$. Therefore,  
\begin{equation}
    \Var \left( \mathds{1}_{\left\{\tfrac{1}{N}\sum_{j=1}^{N} X_j > 0\right\}}  \right)
=   \Phi\left( \frac{\sqrt{N}\mu}{\sigma} \right) \left(1- \Phi\left( \frac{\sqrt{N}  \mu}{\sigma} \right)\right),
\label{eq:TCAV_variance}
\end{equation}
showing the claimed expression for the variance. Let us comment on our findings.

\begin{itemize}[leftmargin=0.5cm]
    \item If $\mu = 0$, both mean $\Phi(0) = \tfrac{1}{2}$ and 
           variance $\Phi(0) (1 - \Phi(0)) = \tfrac{1}{2} (1 - \tfrac{1}{2}) = \tfrac{1}{4}$ are constant.
    \item If $\mu > 0$, then $N \to \infty$ will push $\Phi\left(\sqrt{N} \mu / \sigma \right) \to 1$, and the variance vanishes.
    \item If $\mu < 0$, then $N \to \infty$ will push $\Phi\left(\sqrt{N} \mu / \sigma \right) \to -1$, and the variance vanishes.
\end{itemize}

\paragraph{$T_{\text{multi}}$.} As a (normalized) sum of $s$ i.i.d. Bernoulli random variables, the random variable describing $T_{\text{multi}}$ follows a (normalized) Bernoulli distribution taking values in $\{0, \tfrac{1}{s}, \tfrac{2}{s}, \dots, 1\}$. Similar to \eqref{eq:TCAV_expectation}, for any batch $r_0 \in \{1, \dots, s\}$ with $n$ samples, by the we find that

\begin{align}
    \E \left[ 
\frac{1}{s}\sum\limits_{i=1}^{s} \mathds{1}_{\left\{\tfrac{1}{n}\sum_{j=1}^{n} X_j^{(r)} > 0\right\}}   
\right]
& = 
\E \left[ \mathds{1}_{\left\{\tfrac{1}{n}\sum_{j=1}^{n} X_j^{(i)} > 0\right\}}  \right] \nonumber \\
& =    \P \left( \frac{1}{n}\sum_{j=1}^{n} X_j^{(r_0)}  > 0 \right)  
=    \Phi\left( \frac{\sqrt{n} \mu}{\sigma} \right).
\label{eq:TCAV_multi_expectation}
\end{align}
For a ``success'' probability $p$ (\ie, the probability of $1$, which is also the expectation of the Bernoulli distribution), the variance of a general Bernoulli distribution is $p(p-1)$. Taking the normalization into account and due to the independence, by Lemma \ref{lem:variance_empirical_average}
and using \eqref{eq:TCAV_multi_expectation}, we find immediately that 
\begin{equation}
    \Var \left(
\frac{1}{s}\sum\limits_{i=1}^{s} \mathds{1}_{\left\{\tfrac{1}{n}\sum_{j=1}^{n} X_j^{(i)} > 0\right\}}   
\right)
=
\frac{\Phi\left( \frac{\sqrt{n}\mu}{\sigma} \right) \left(1- \Phi\left( \frac{\sqrt{n}  \mu}{\sigma} \right)\right)}{s}.
\label{eq:TCAV_multi_variance}
\end{equation}

\begin{itemize}[leftmargin=0.5cm]
    \item Let us point out that in the special case of $s = 1$ and $n = N$ (a single batch of size $N$),
          $T_{\text{multi}}$ collapses to the (standard) TCAV, consistently with our findings 
          for the corresponding means and variances.
    \item Furthermore, it is worth mentioning that the mean \eqref{eq:TCAV_multi_expectation} of                          $T_{\text{multi}}$ is independent of $s$ and controlled by $n$, the number sample per             
          batch (used for one of the CAV computations). 
    \item In contrast, the expression \eqref{eq:TCAV_multi_variance} for the variance is mainly governed by $s$
          and scales like $\mathcal{O}(1/s)$; while the numerator depends on 
          $n$, it is lower bounded by $0$ and upper bounded by $0.25$.
          This has interesting consequences in practice, as the user faces some tradeoff when given 
          a limited sampling budget of $N$ samples: On the one hand, as a low variance is desirable, we may want to 
          choose a large number of batches $s$; however (apart from computational aspects), this enforces a smaller batch size $n$ (when $N = s \cdot n$ is fixed) and will result in a smaller mean (less decisive, more tending towards a neutral assessment, perhaps underestimating). On the other hand, a small value of $s$ (in the extreme case, obtaining (standard) TCAV for $s=1$) may lead to a larger variance and, as $n$ becomes larger, a more
          decisive mean closer to the extreme values $1$ (overwhelming concept influence; if $\mu>0$) and $0$ (negligible concept influence; if $\mu<0$). We explicitly state this next.
    \item Compare mean of TCAV and $T_{\text{multi}}$: for $\mu = 0$ they are the same of course, and they are 
          always bounded as $0 \leq \Phi \leq 1$. 
          For $\mu > 0$ note that TCAV mean is pushed more towards one, while as $n \ll N$, $T_{\text{multi}}$ is more moderate.
          Similar for $\mu < 0$ TCAV mean is pushed more towards zero, while as $n \ll N$, $T_{\text{multi}}$ is more moderate. 
    \item Given a large sampling budget, a user can increase $s$, while keeping $n$ fixed, to reduce 
          the variance. By the \textit{CLT}, the distribution will become Gaussian-like for $s \to \infty$.
          However, increasing the number $s$ requires a large sampling budget, and is computationally 
          expensive.
\end{itemize}

\paragraph{$\alpha$-TCAV.}
Finally, let us comment on $\alpha$-$\TCAV$, which by assumptions clearly follows a logit-normal distribution, 
whose means and variance do not have closed-form solutions. However, we use the highly accurate approximations for its mean and variance, \ie the well-established estimates provided in \eqref{eq:logit-normal_approx_mean} and \eqref{eq:logit-normal_approx_variance} in Sec. \ref{sec:probability}. $\alpha$-$\TCAV$ provides a parameterized general framework aimed at superior statistical robustness and computational efficiency. However, this flexibility also leaves the user with choice of $\alpha$.
On the one hand, by convergence to the \textit{Heaviside step function} $s_\infty$ - see \eqref{eq:sigmoid_heaviside_limit} - for $\alpha \to \infty$; thus, for sufficiently large values of $\alpha$, $\alpha$-TCAV will resemble the (standard) TCAV approach (highly decisive by taking either $0$ or $1$, with a high variance). On the other hand, as $\alpha \to 0$, the sigmoid flattens out and the score will be highly indecisive (around $0.5$) with a vanishing variance. By arguments based on the intermediate value theorem, by choosing an appropriate $\alpha$ we can obtain any behavior in-between. In particular, we comment on two specific interesting choices of $\alpha$. Firstly, we comment on the possibility of imitating Multi-TCAV using $\alpha$-TCAV, while drastically reducing the variance. \textit{Still, this does not mean that - and in which sense - this choice is objectively optimal; Multi-TCAV is considered the state-of-the-art approach, but based on heuristics itself.} Secondly, we suggest a choice of $\alpha$ that is statistically sound and can be interpreted as a \textit{calibrated probability that the concept contributes} to the model's prediction.
Before that, let us point out that
in the limit $\alpha \to \infty$, once again with the approximation $s(z) \approx \Phi(\sqrt{\tau_1} z)$, $\tau_1 = \pi/8$, 
we obtain
\begin{equation}
    \lim_{\alpha \to \infty}    s\left( \frac{\alpha \mu}{\sqrt{1 + \tau_1 \alpha^2 \sigma^2 / N}} \right) 
=   s\left( \frac{\sqrt{N} \mu}{\sqrt{\tau_1} \sigma } \right) 
\approx \Phi \left( \frac{\sqrt{N} \mu}{ \sigma } \right) ,
\end{equation}
again confirming the consistency of the framework as the mean of $\alpha$-TCAV collapses to the mean of TCAV in this limit.
Further, the case of a large sample size when $N \gg \tau_1 \alpha^2 \sigma^2$ is instructive, when 
\begin{equation}
    s\left( \frac{\alpha \mu}{\sqrt{1 + \tau_1 \alpha^2 \sigma^2 / N}} \right) 
\approx  s\left(\alpha \mu \right) .
\label{eq:alpha_TCAV_theoretical_mean_large_N}
\end{equation}

\subsection{Choosing $\alpha$ and Normalizing the Sensitivity Scores}
\label{app:normalization}

\paragraph{Matching $\alpha$-TCAV with Multi-TCAV.}
Interestingly, $\alpha$-$\TCAV$ can be used to imitate Multi-TCAV in the following sense. In this paragraph, we investigate how to choose $\alpha$
such that the means of $\alpha$-TCAV and Multi-TCAV --- as provided in Table \ref{table:TCAV_distributions} --- agree, and compare the corresponding variances. Therefore, we make the \textit{ansatz}
\begin{equation}
    \Phi\left( \frac{\sqrt{n} \mu}{\sigma} \right) 
    \approx s\left( \frac{\alpha \mu}{\sqrt{1 + \tau_1 \alpha^2 \sigma^2 / N}} \right)
     \stackrel{\text{def}}{=} m(\mu, z, \alpha).
     \label{eq:alpha_ansatz}
\end{equation}
Using \eqref{eq:sigmoid_normal_cdf_approximation}, \ie $s(z) \approx \Phi(\sqrt{\tau_1} z)$ where $\tau_1 = \pi/8$, 
equating the corresponding arguments yields
\begin{equation}
    \frac{\sqrt{n} \mu}{\sigma} = \frac{\sqrt{\tau_1} \alpha \mu}{\sqrt{1 + \tau_1 \alpha^2 \sigma^2 / N}}.
\end{equation}
A straightforward computation solving for $\alpha$ using from \eqref{eq:N_s_n} that $N = s \cdot n$, such that 
$n/N = 1/s$, yields the condition (assuming $s \geq 2$; we comment below on the case $s=1$ corresponding to TCAV)
\begin{equation}
    \alpha^\star = \frac{1}{\sigma} \sqrt{\frac{n}{\tau_1 (1 - 1/s)}}
    \label{eq:alpha_star}
\end{equation}
for the means of both methods to approximately match. 
(As a side remark, note that when $s \to 1$ on the right-hand side of \eqref{eq:alpha_star}, we have $\alpha^\star \to \infty$; once again illustrating the consistency of our framework as the case $s=1$ corresponds to the (standard) TCAV setting, which corresponds to $\alpha = \infty$.)
While matching the means with $\alpha^\star$, let us next compare the variances. 
To show that $\alpha$-$\TCAV$ has a lower (or at least not worse) variance, we need to show that
the fraction of the variances is bounded by $1$, \ie we aim to show that
\begin{equation}
   s \cdot \frac{m(\mu, z, \alpha^\star)(1-m(\mu, z, \alpha^\star))\left( 1 - \frac{1}{\sqrt{1 + \tau_2 {\alpha^\star}^2 \sigma^2/N}} \right)}{\Phi\left( \frac{\sqrt{n}\mu}{\sigma} \right) \left(1- \Phi\left( \frac{\sqrt{n}  \mu}{\sigma} \right)\right)} 
   \leq 1.
   \label{eq:variance_ratio_bounded_by_one}
\end{equation}
Recall that $\alpha^\star$ was chosen in a way such that \eqref{eq:alpha_ansatz} holds with 
$\alpha = \alpha^\star$.
Using this fine-tuned value $\alpha^\star$ and the above approximation in \eqref{eq:variance_ratio_bounded_by_one}, the ratio on the left-hand-side of \eqref{eq:variance_ratio_bounded_by_one}
considerably simplifies to become (where $\tau_2/\tau_1 \approx 0.9116$)
\begin{equation}
   s \cdot \left( 1 - \frac{1}{\sqrt{1 + \tau_2 {\alpha^\star}^2 \sigma^2/N}} \right)
=  s \cdot \left( 1 - \frac{1}{\sqrt{1 + \frac{\tau_2/\tau_1}{s-1}}} \right)
=: r(s). \label{eq:variance_ratio_simplified}
\end{equation}
Remarkably, the ratio $r(s)$ of the variances depends only on $s$. 
Note that $r(s)$ is not defined for $s=1$ (which would correspond to (standard) TCAV), and ---- as pointed out above --- as $s \to \infty$ we would have $\alpha^\star \to \infty$, once again obtaining (standard) TCAV;
as the methods coincide in this case, there obviously the variance gap vanishes. We may therefore, in the spirit of Multi-TCAV, assume $s \geq 2$ for our comparison. Consider the Taylor expansion around the origin
$x \approx  0$ (note that as $s \to \infty$ we have have $x = \tfrac{\tau_2/\tau_1}{s-1} \to 0$, justifying
the Taylor expansion around the root point in the origin)
\begin{equation}
   \frac{1}{\sqrt{1+x}} \approx 1 - \frac{1}{2}x + \frac{3}{8}x^2.
\end{equation}
Next, we employ this Taylor expansion to investigate the asymptotic behavior of \eqref{eq:variance_ratio_simplified} as $s \to \infty$. 
Using the Taylor expansion and simplifying yields
\begin{equation}
 r(s)
\approx 
 \frac{s}{s-1} \cdot \frac{\tau_2/\tau_1}{2} - \frac{3\tau_2^2/\tau_1^2 s}{8(s-1)^2}
   \qquad \xrightarrow{s \to \infty} \qquad \frac{\tau_2/\tau_1}{2} \approx 0.4558.
\end{equation}

Thus, with the choice of $\alpha = \alpha^\star$ as in \eqref{eq:alpha_star}, we can obtain a similar mean of $\alpha$-$\TCAV$ and Multi-TCAV, while significantly reducing the variance by more than half already for moderately large values of $s$, which significantly improves over the state of the art.\footnote{This is only the statistical benefits; note that the computational differences are not essential for this simulation using normally distributed synthetic data; note however, that for a practical computation of Multi-TCAV, an expensive computation of several CAVs is required; we report on experiments for real data in 
Fig. \ref{fig:compute}.}
We plot the variance ratio function in Figure \ref{fig:plot_variance_ratio}, illustrating
the variance reduction in a range of roughly $45\%$ and $55\%$.

\begin{figure}[htbp]
    \centering
    \begin{tikzpicture}
        \begin{axis}[
            axis lines = left,
            xlabel = {$s$},
            ylabel = {$r(s)$},
            xmin = 2, xmax = 100,
            ymin = 0.4, ymax = 0.7,
            xtick = {2, 20, 40, 60, 80, 100},
            grid = major,
            width = 0.9\textwidth,
            height = 7cm,
            legend pos = north east,
            legend cell align={left},
            tick label style={font=\small}
        ]
            \fill[blue!15] (axis cs:2,0.45) rectangle (axis cs:100,0.55);
            \addlegendimage{area legend, fill=blue!15, draw=none}
            \addlegendentry{theoretical range}

            \addplot[
                red,
                thick,
                domain = 2:100,
                samples = 250,
            ] {x * (1 - 1/sqrt(1 + 0.9116/(x - 1)))};
            \addlegendentry{$r(s)$}

            \addplot[
                black,
                dashed,
                thick,
                domain = 2:100,
            ] {0.4558};
            \addlegendentry{limit $\approx$ 0.4558}
            
        \end{axis}
    \end{tikzpicture}
    \caption{Numerical plot of the variance ratio function $r(s)$ for $s \in [2,100]$, even though in practice of course $s \in \N$ takes discrete values, showing the theoretical range of $[0.45, 0.55]$, 
    as approximately $r(2) \approx 0.55$ and $\lim_{s \to \infty} r(s) \approx 0.4558$.}
    \label{fig:plot_variance_ratio}
\end{figure}

\begin{figure}
    \centering
    \includegraphics[width=\linewidth]{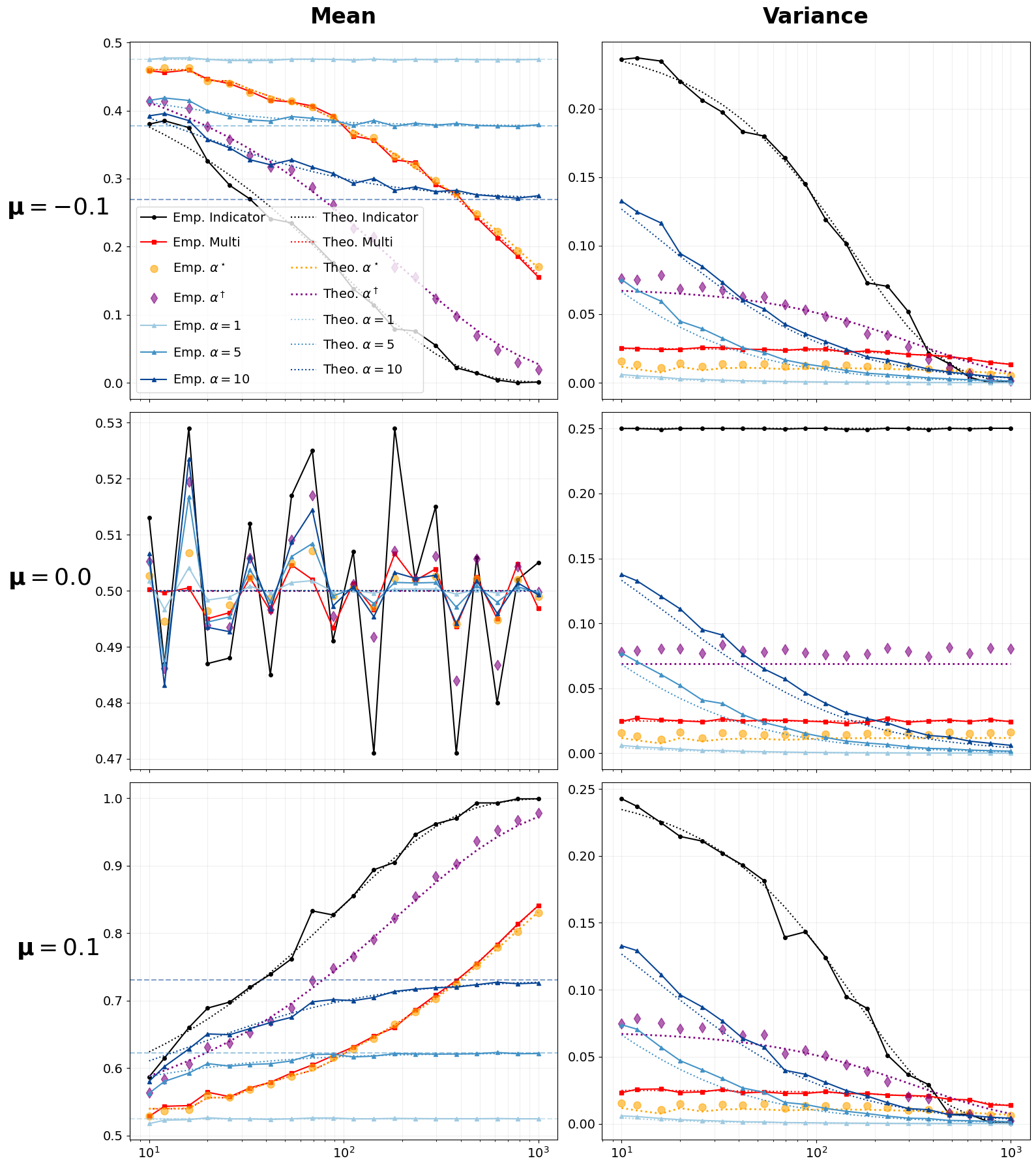}
    \caption{Numerical simulation of \textbf{mean and variance (columns)} of the different TCAV approaches in Table \ref{table:TCAV_distributions}
    according to the model explained in Sec. \ref{app:TCAV_distribution_and} with $\sigma = 1$ and different values for \textbf{different values of $\mu$ (rows)}
    and $N$ on the $x$-axis (modeling the sample size), showing an excellent agreement between the theoretical prediction and the empirical simulation.
    Standard \textit{(indicator)} TCAV (\textbf{black}) provides provides the most
    extreme mean (tending towards either $0$ or $1$), while clearly being biased (mean changes with the sample size), and having a large variance 
    (notably large for small sample size), and even non-decaying variance for $\mu = 0$, with a constant variance of $0.25$ as in the worst-case setting of 
    \textit{Popoviciu's Inequality} (Thm. \ref{thm:popoviciu}); also recall \eqref{eq:variance_TCAV_trivial_upper_bound}. 
    Multi-TCAV (\textcolor{red}{red} , with fixed $s = 10$ and $n$ matching each $N = s \cdot n$) can be well-matched using 
    $\alpha^\star$-TCAV \textcolor{yellow}{(yellow)} with $\alpha^\star$ from \eqref{eq:alpha_star} with a significantly lower variance. 
    (Notably, $\alpha^\star$ depends on $N = s \cdot n$: as $s$ is fixed and $N$ grows, so does $n$, and therefore $\alpha^\star$.)
 For appropriate values of $\alpha$-TCAV, the bias is much less pronounced compared to the other approaches. 
    The Bayes-optimal $\alpha^\dagger$-TCAV with $\alpha^\dagger$ from \eqref{eq:alpha_dagger} is shown in \textcolor{violet}{violet}, highlighting it is 
    \textit{between the over-confident TCAV and the under-confident Multi-TCAV}; it becomes more confident for growing $N$:
    by the interpretation of a probability (rather than some abstract score), the probability approaches zero ($\mu < 0$) or one ($\mu > 0$), respectively, as the sample size grows.
    We plot various other value $\alpha$-TCAV for various values of $\alpha$ in \textcolor{blue}{blue} together with the theoretical means (for large $N$) \ref{eq:alpha_TCAV_theoretical_mean_large_N}. 
    The experiments have been performed in \textit{GoogleColab}.
    }
    \label{fig:synthethic_data_TCAV}
\end{figure}

\paragraph{A Bayesian Interpretation of $\alpha$-TCAV.}
Under the running model of Sec.~\ref{app:simulating_TCAV} simulating a CAV computed from $N$ samples, 
let us shortly write $S_N := \langle \vz, \vw_\CAV \rangle \sim \mathcal{N}(\mu, \sigma^2/N)$. 
For fixed $\vz$ and random CAV $\vw_\CAV$, whether the concept has a positive or negative influence 
on the model's prediction is determined by the sign of the mean $\mu$, \ie by 
$\theta := \mathds{1}_{\{\mu > 0\}}$.
Adopting a Bayesian viewpoint, we treat $\mu$ as random and place a \emph{flat} (improper) prior 
$\pi(\mu) \propto 1$ on $\R$, encoding no prior belief (note that $\pi$ does not integrate to one, 
but the resulting posterior does). Recall that a prior is \emph{conjugate} to a likelihood if the 
posterior lies in the same family as the prior; for Gaussian likelihoods with known variance, 
Gaussians (and flat priors as their limit with increasing variance) are conjugate, see, e.g., 
\citep[Ch.~2]{gelman2013bayesian}. By Bayes' theorem 
($\text{posterior} \propto \text{likelihood} \times \text{prior}$), the posterior over $\mu$ after 
observing $S_N$ is given by
\begin{equation}
    \pi \left(\mu \mid S_N \right) 
    \;\propto\; \exp \left(-\tfrac{N}{2\sigma^2}(S_N - \mu)^2\right) \cdot 1
    \;\propto\; \exp \left(-\tfrac{N}{2\sigma^2}(\mu - S_N)^2\right),
\end{equation}
which we recognize as $\mu \mid S_N \sim \mathcal{N}(S_N, \sigma^2/N)$. Standardizing yields 
the posterior sign probability (analog to the derivation in \eqref{eq:TCAV_expectation}, with the 
theoretical mean $\mu$ replaced by the observed mean $S_N$)
\begin{equation}
    \P \left(\mu > 0 \mid S_N\right) \;=\; \Phi \left(\frac{\sqrt{N}\,S_N}{\sigma}\right),
    \label{eq:posterior_sign_prob}
\end{equation}
\textit{which corresponds to the calibrated probability that the concept matters for $\vz$. }
Next, let us connect this 
Bayesian view to $\alpha$-TCAV; once again using the sigmoid-probit approximation 
\eqref{eq:sigmoid_normal_cdf_approximation}, that is $s(z) \approx \Phi(\sqrt{\tau_1}z)$ with 
$\tau_1 = \pi/8$, we have $s(\alpha S_N) \approx \Phi(\sqrt{\tau_1}\alpha S_N)$, which 
matches \eqref{eq:posterior_sign_prob} when
\begin{equation}
    \alpha^\dagger \;:=\; \frac{1}{\sigma}\sqrt{\frac{N}{\tau_1}}.
    \label{eq:alpha_dagger}
\end{equation}
Thus, $\alpha$-TCAV at $\alpha = \alpha^\dagger$ is (up to the sigmoid-probit approximation 
\eqref{eq:sigmoid_normal_cdf_approximation}) the Bayes-optimal estimator of $\theta$ under 
\emph{squared loss} $\ell(\hat\theta, \theta) = (\hat\theta - \theta)^2$, and is calibrated by 
construction. This recasts $\alpha$-TCAV not as a smooth surrogate but as a principled posterior 
estimator. The contrast with \emph{standard TCAV} --- the indicator 
$\mathds{1}_{\{S_N > 0\}}$ --- is instructive: standard TCAV is the Bayes-optimal estimator 
under \emph{0--1 loss} $\ell(\hat\theta, \theta) = \mathds{1}_{\{\hat\theta \neq \theta\}}$, which 
returns the posterior \emph{mode} (the most probable value of $\theta$, here either $0$ or $1$). 
Reporting only the mode discards the magnitude of the posterior probability and is therefore 
systematically \emph{overconfident} when $|S_N|$ is small relative to $\sigma/\sqrt{N}$: in 
this regime the posterior $\Phi(\sqrt{N}S_N/\sigma)$ is close to $1/2$ (genuine uncertainty), 
yet standard TCAV outputs a definite $0$ or $1$.
\emph{Multi-TCAV}, on the other hand, is \emph{underconfident}: each batch uses only $N/s$ samples, 
so its effective signal-to-noise is $\sqrt{n}/\sigma$ rather than $\sqrt{N}/\sigma$. The 
mean-matching choice (recall \eqref{eq:alpha_star})
\begin{equation}
    \alpha^\star 
 = \frac{1}{\sigma} \sqrt{\frac{n}{\tau_1 (1 - 1/s)}}
=  \frac{1}{\sigma}        \sqrt{\frac{N}{\tau_1(s-1)}}  
\end{equation}
leads to the ratio $\alpha^\star/\alpha^\dagger = 1/\sqrt{s-1} < 1$
and is therefore a deliberately flatter sigmoid that reflects Multi-TCAV's underconfidence, as it averages noisy estimates.
Note that both $ \alpha^\star $ and $\alpha^\dagger$ from \eqref{eq:alpha_star} and \eqref{eq:alpha_dagger} depend on $\sigma$, which in practice in unknown and has to be estimated from the data. We comment on this below.

\paragraph{Normalizing Sensitivity Scores.}
The sensitivity magnitudes $|S| := |S_{C,k,\ell}(\vx)|$ vary across concepts and layers by orders of magnitude, e.g.\ $|S|\sim10^{-3}$ in \texttt{layer2} vs. $|S|\sim10^{-2}$ in \texttt{layer4} of \textit{ResNet50}.
Without normalization, a fixed value of $\alpha$ has completely different effects across settings when computing $\alpha$-TCAV
(recall \eqref{eq:def_alpha_TCAV})
  \begin{equation}
    \label{eq:alpha-tcav-raw}
    \frac{1}{|\mathcal{X}_k|} \sum_{\vx \in \mathcal{X}_k}
    s_{\alpha} \left(S_{C,k,\ell}(\vx)\right)
    =
        \frac{1}{|\mathcal{X}_k|} \sum_{\vx \in \mathcal{X}_k}
    s  \left(\alpha \cdot S_{C,k,\ell}(\vx)\right).
  \end{equation}
If $\alpha\,|S|\ll 1$, the sigmoid stays in its linear regime around the origin and the score collapses around $0.5$; if $\alpha\,|S|\gg 1$, it saturates and recovers the indicator function. 
An advantage of $\alpha$-TCAV is its flexibility to adapt to different typical magnitudes of sensitivity scores, and we comment on how to scale the
sigmoid function adapting to the distribution of sensitivities.
We propose the following normalization of sensitivity scores by the factor of $\gamma_{C,\ell}$ to allow for ``natural'' values of $\alpha$ and compute
  \begin{equation}
    \label{eq:alpha-tcav-norm}
    \frac{1}{|\mathcal{X}_k|} \sum_{\vx \in \mathcal{X}_k} s_{\alpha}\left(\frac{S_{C,k,\ell}(\vx)}{\gamma_{C,\ell}}\right)
    = \frac{1}{|\mathcal{X}_k|} \sum_{\vx \in \mathcal{X}_k} s_{\alpha/\gamma_{C,\ell}}\left(S_{C,k,\ell}(\vx) \right),
  \end{equation}
by properties of the sigmoid function \eqref{eq:sigmoid_heaviside_limit}.
Here, the normalization factor $\gamma_{C,\ell}$ is defined as the root-mean-square of the sensitivity score (typically for a \textit{single} CAV realization $\vw_\CAV$ in the spirit of $\alpha$-TCAV, trained on the full reference set of random samples, or averaged over a small number of such high-sample CAVs for additional robustness),
\begin{align}
    \gamma_{C,\ell}
  = & \sqrt{\E_{\vx\sim\mathcal{D}_k} \left( S_{C,k,\ell}(\vx,\vw_\CAV)^{2} \right)} \\
  \approx & \sqrt{\frac{1}{|\mathcal{X}_k|}\sum_{\vx\in\mathcal{X}_k} \langle\nabla h_{\ell,k}(f^\ell(\vx)), \vw_\CAV\rangle^{2}} 
  +  \varepsilon, \label{eq:gamma_empirical} 
  \end{align}
with $\varepsilon = 10^{-8}$ in the practical computation \eqref{eq:gamma_empirical} added for numerical stability.
Recalling the inner-product form of the sensitivity score,
$S_{C,k,\ell}(\vx,\vw_\CAV) = \langle \nabla h_{\ell,k}(f^\ell(\vx)) , \vw_\CAV\rangle$, note that
\begin{equation}
  \gamma_{C,\ell}^{2}
  = \Var_{\vx\sim\mathcal{D}_k}\left[S_{C,k,\ell}(\vx,\vw_\CAV)\right]
     +  \left(\E_{\vx\sim\mathcal{D}_k}[S_{C,k,\ell}(\vx,\vw_\CAV)]\right)^{2},
  \label{eq:gamma-var-plus-mean}
\end{equation}
decomposing into a variance and a mean term. As the sigmoid behaves linearly around the origin (zero), for the normalization not only the variance
of the sensitivity scores matters, but how far they are from the origin. After the normalization introduced in \eqref{eq:alpha-tcav-norm}, the argument of the sigmoid is $\alpha \cdot S_{C,k,\ell}(\vx) / \gamma_{C,\ell}$, and by construction, the rescaled sensitivity
$\tilde S(\vx) := S_{C,k,\ell}(\vx)/\gamma_{C,\ell}$ satisfies  $\E_{\vx\sim\mathcal{D}_k}[\tilde S(\vx)^{2}] = 1$. Thus, typical values of $\tilde S$ are of order unity rather than $10^{-3}$ or $10^{-2}$ (depending on layers), and $\alpha$ itself --- not the product
$\alpha \cdot |S|$ --- becomes the effective sharpness of the sigmoid, and $\alpha$ trades off bias 
(distance from standard TCAV) against variance reduction. 

\paragraph{Estimating $\sigma$ in practice.}
The formula~\eqref{eq:alpha_star} requires knowledge of $\sigma$, the standard
deviation parameter of the Gaussian model. On real data, we estimate $\sigma$
from the observed sensitivities. Given a single CAV $\vw_\CAV$ trained on
$N$ samples, we compute the sensitivity scores
$S_i = \langle \nabla h_{\ell,k}(f^\ell(\vx)), \vw_\CAV \rangle$ for all
test inputs $\vx \in \mathcal{X}_k$, and define
\begin{equation}\label{eq:sigma_eff}
  \hat{\sigma}_{\mathrm{eff}}
  = \sqrt{\Var_{x}(S(\vx, \vw_\CAV)) \cdot N},
\end{equation}
which recovers the population parameter $\sigma$ from the observed variance
$\Var_x(S) = \sigma^2/N$ of the model. Combined with
the $\gamma$-normalization from~\eqref{eq:gamma_empirical}, the practical
$\alpha^\star$ on the normalized scale is
\begin{equation}\label{eq:alpha_star_practical}
  \alpha^\star_{\mathrm{norm}}
  = \frac{\gamma_{C,\ell}}{\hat{\sigma}_{\mathrm{eff}}}
    \sqrt{\frac{n}{\tau_1\,(1 - 1/s)}}\,,
\end{equation}
and the score is computed as
$\frac{1}{|\mathcal{X}_k|}\sum_{\vx \in \mathcal{X}_k}
s_{\alpha^\star_{\mathrm{norm}}}(S_{C,k,\ell}(\vx) / \gamma_{C,\ell})$.
Since $\hat{\sigma}_{\mathrm{eff}} / \gamma_{C,\ell}$ is of order unity by
construction, $\alpha^\star_{\mathrm{norm}}$ takes values in the range $1$--$3$
for typical choices of $s$, placing the sigmoid in the moderate smoothing regime
regardless of layer or concept. Importantly, this estimation uses
the \emph{total} variance of sensitivities and does not require decomposing
$\sigma^2$ into intrinsic and CAV-noise components, making it equally
applicable to PatternCAV and logistic regression CAVs.
The same estimate $\hat{\sigma}_{\mathrm{eff}}$ is used to compute the
Bayes-optimal $\alpha^\dagger$ from~\eqref{eq:alpha_dagger}, replacing
$\sigma$ by $\hat{\sigma}_{\mathrm{eff}} / \gamma_{C,\ell}$ on the normalized
scale. Since both $\alpha^\star$ and $\alpha^\dagger$ share the same
$\hat{\sigma}_{\mathrm{eff}}$, their ratio $\alpha^\dagger / \alpha^\star =
\sqrt{s-1}$ (cf.~Eq.~\eqref{eq:alpha_dagger}) holds exactly in practice,
regardless of the quality of the sigma estimate.

As a rule of thumb, we can describe the behavior with respect to $\alpha$ as follows:
\begin{itemize}[leftmargin=0.5cm]
    \item $\alpha \to 0$: the sigmoid collapses to the constant $1/2$, and $\alpha$-TCAV loses all discriminative power;
    \item $\alpha = 1$: gentle smoothing, with the sigmoid's transition spanning the typical range of sensitivities; variance is strongly reduced, but borderline sensitivities are scored near $1/2$, pulling the score away from the standard TCAV mean;
    \item $\alpha = 3$: moderate smoothing, still clearly discriminative and close to the standard TCAV mean;
    \item $\alpha = 5$: steep sigmoid, most sensitivities already in the saturated regime; nearly unbiased with respect to standard TCAV, with only a small variance benefit;
    \item $\alpha \to \infty$: recovers the indicator function, and thus standard TCAV exactly.
\end{itemize}

\paragraph{Experimental validation.}
Table~\ref{tab:alpha_star_values} reports the computed $\alpha^\star_{\mathrm{norm}}$
values for ResNet-50 on DTD textures. For PatternCAV,
$\alpha^\star_{\mathrm{norm}}$ ranges from $0.8$ (at $s = 10$) to $2.4$
(at $s = 2$), confirming that after normalization the effective sharpness falls
in the natural range described above. Figure~\ref{fig:dom_main} (right) summarizes variance reduction across all methods;
$\alpha^\star$-TCAV consistently achieves lower variance than Multi-TCAV across
all concepts and values of $s$, confirming that the automatic calibration
reliably reduces variance.

\begin{table}[h]
\centering
\small
\begin{tabular}{llccc}
\toprule
Layer & Concept & $\alpha^\star$ ($s\!=\!2$) & $\alpha^\star$ ($s\!=\!5$) & $\alpha^\star$ ($s\!=\!10$) \\
\midrule
layer2 & dotted    & 2.35 & 1.18 & 0.78 \\
layer2 & striped   & 2.26 & 1.13 & 0.75 \\
layer2 & zigzagged & 2.30 & 1.15 & 0.77 \\
layer3 & dotted    & 2.15 & 1.08 & 0.72 \\
layer3 & striped   & 2.08 & 1.04 & 0.69 \\
layer3 & zigzagged & 1.92 & 0.96 & 0.64 \\
\bottomrule
\end{tabular}
\vspace{0.6em}
\caption{$\alpha^\star_{\mathrm{norm}}$ values from~\eqref{eq:alpha_star_practical}
for PatternCAV on ResNet-50. Values are in the range $0.6$--$2.4$, confirming
that $\gamma$-normalization places the sigmoid in the moderate smoothing regime.}
\label{tab:alpha_star_values}
\end{table}

\subsection{Numerical Experiments on Real Data}
\label{sec:numerical_experiments_real_data}

We validate our theoretical findings on a pretrained ResNet-50~\citep{he2016deep} using the Describable Textures Dataset (DTD)~\citep{cimpoi2014describing} with three texture concepts (dotted, striped, zigzagged). Concept images are taken from DTD; non-concept (random) images are drawn from a held-out subset of ImageNet~\citep{deng2009imagenet} not containing texture-related classes. 

Sensitivity scores
$S_{C,k,\ell}(\vx) = \langle \nabla h_{\ell,k}(f^\ell(\vx)), \vw_\CAV \rangle$
are computed at layers~2 and~3 using PatternCAV (Difference of Means).
We compare: (i)~Multi-TCAV with $s \in \{2,5,10,20,50\}$ subsets;
(ii)~$\alpha$-TCAV with fixed $\alpha \in \{1,3\}$ using the
$\gamma$-normalization from Sec.~\ref{app:normalization};
(iii)~$\alpha^\star$-TCAV with $\alpha^\star$ from~\eqref{eq:alpha_star_practical};
and (iv)~$\alpha^\dagger$-TCAV (Bayes-optimal) with $\alpha^\dagger$
from~\eqref{eq:alpha_dagger}. Each configuration is repeated $E=50$ times
with total budget $R=1000$.
All experiments on real data (in this section and in Sec. \ref{app:classification_accuracy}) were conducted on a single NVIDIA Quadro RTX 5000 (16 GB VRAM). The vary-s and vary-N experiments (ResNet-50, DTD, PatternCAV, $E=50$ repeats, $R=1000$) required approximately $3$ hours and $1.5$ hours respectively. A single $\alpha$-TCAV estimate takes approximately 2 seconds, compared to approximately 11 seconds for Multi-TCAV at $s=50$.

\paragraph{Computational efficiency.}
$\alpha$-TCAV requires training only a single CAV on the full sample budget,
versus $s$ CAVs for Multi-TCAV. Figure~\ref{fig:compute} shows that a single
$\alpha$-TCAV estimate achieves a $5$--$8\times$ wall-clock speedup over
Multi-TCAV at $s\!=\!50$, with no loss in stability.

\begin{figure}[h]
    \centering
    \includegraphics[width=0.8\textwidth]{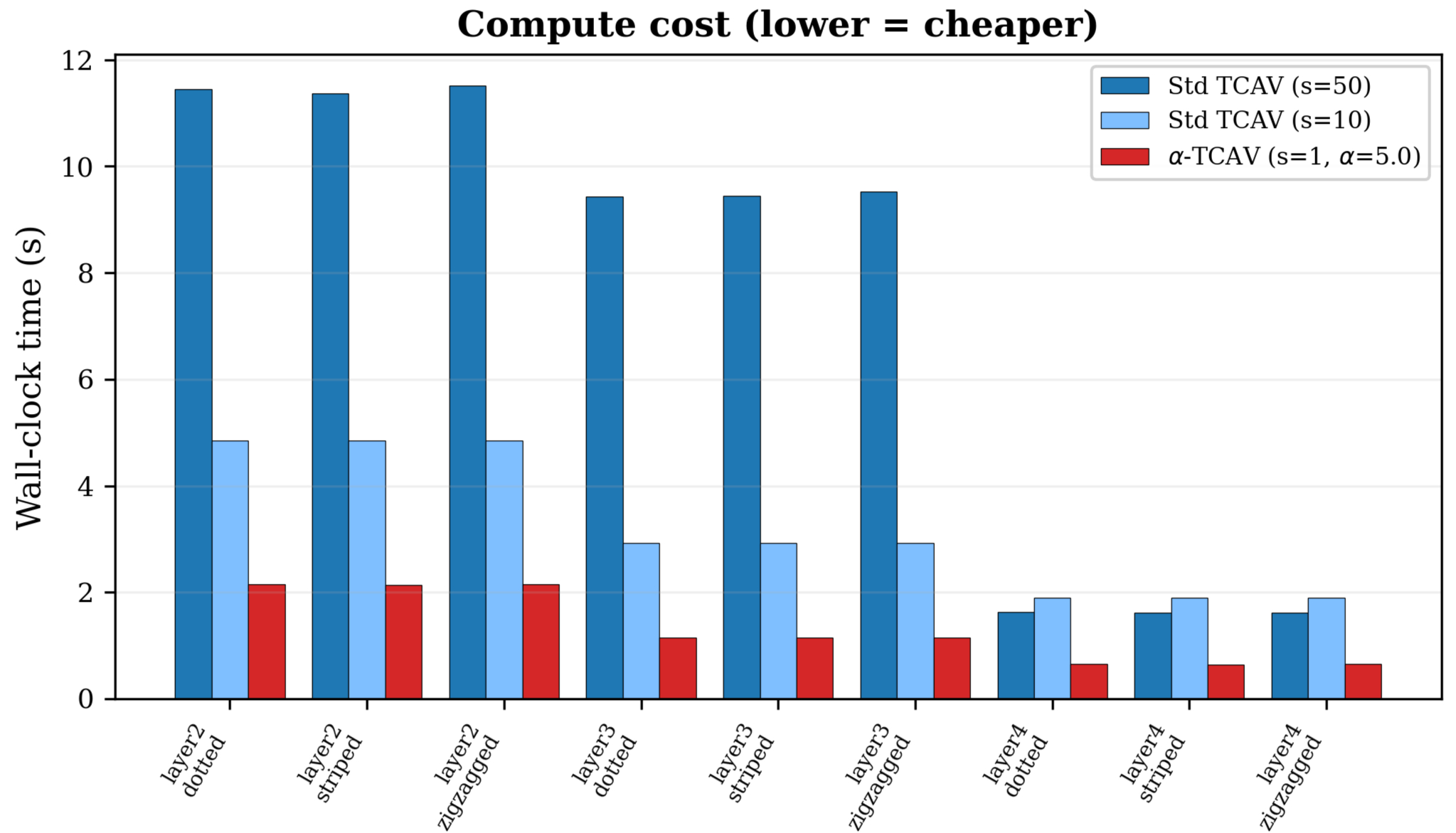}
    \caption{Wall-clock time per TCAV estimate on ResNet-50/DTD
    (PatternCAV, 6 layer$\times$concept configurations).
    $\alpha$-TCAV ($s\!=\!1$, single full-budget CAV, red) achieves a
    $5$--$8\times$ speedup over Multi-TCAV at $s\!=\!50$ and
    ${\approx}\,2.3\times$ over $s\!=\!10$, since it requires only
    one CAV training regardless of $s$.}
    \label{fig:compute}
\end{figure}

\paragraph{Vary-$s$ results.}
Figure~\ref{fig:dom_main} shows results for layer~2 as a function of the
number of subsets $s$.
$\alpha^\dagger$-TCAV (purple) consistently produces the most decisive mean ---
above Multi-TCAV across all concepts and values of $s$ --- at the cost of higher
variance, confirming its role as the overconfident Bayes-optimal posterior
estimator under squared loss (Eq.~\eqref{eq:alpha_dagger}). $\alpha\!=\!1$
(blue) achieves the strongest variance reduction (ratios $0.08$--$0.13$ relative
to Multi-TCAV) across all settings, at the cost of mean shrinkage toward $0.5$.
$\alpha\!=\!3$ (green) offers a practical compromise: close to Multi-TCAV's mean
with moderate variance benefit. $\alpha^\star$-TCAV (red) adapts to the noise
structure of each concept, with performance varying by regime
(see Table~\ref{tab:alpha_star_values} for calibrated values).

\begin{figure}[h]
    \centering
    \includegraphics[width=0.9\textwidth]{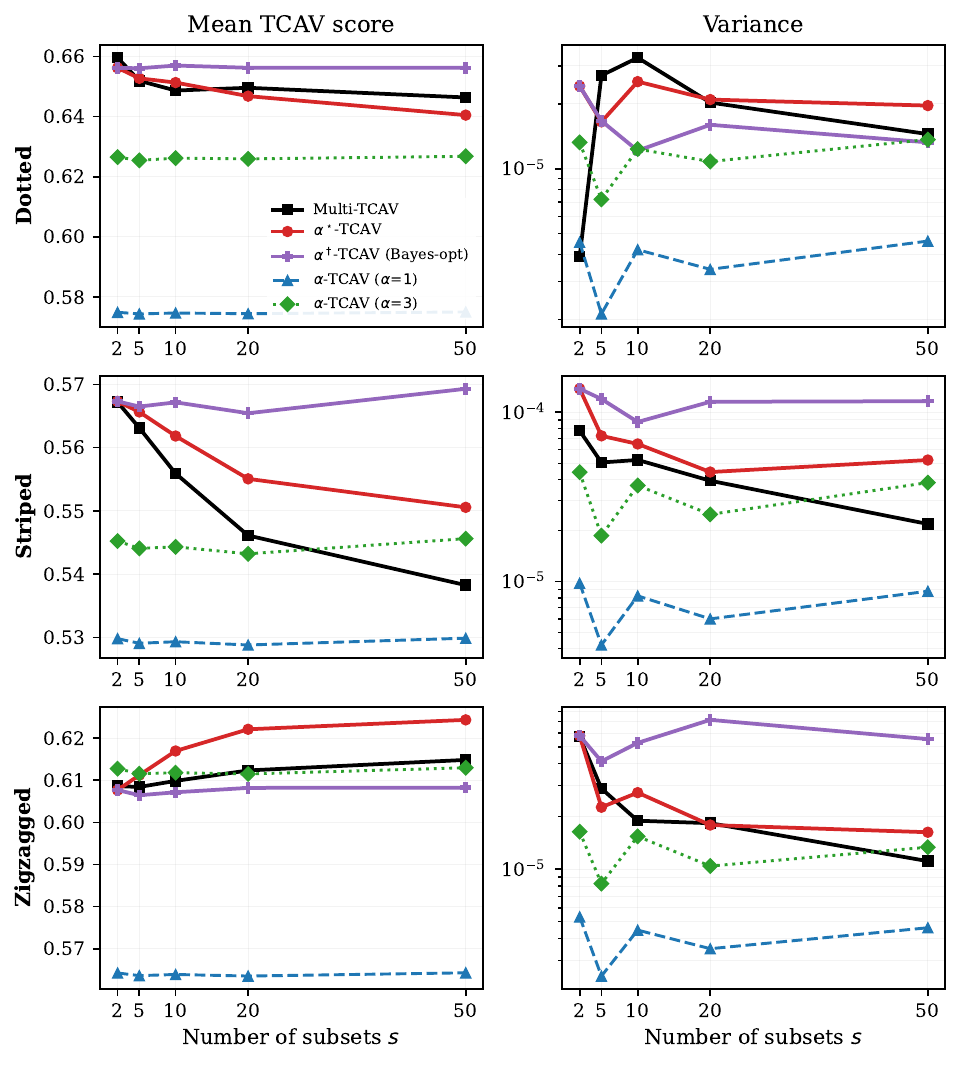}
    \caption{Comparison of TCAV methods on ResNet-50 (layer~2, PatternCAV;
    DTD textures, $R=1000$, $E=50$ repeats).
    \textbf{Left:} Mean TCAV score vs.\ number of subsets $s$.
    \textbf{Right:} Variance (log scale).
    $\alpha^\dagger$-TCAV (purple) produces the most decisive mean across all
    concepts and values of $s$, consistent with its Bayesian interpretation
    (Eq.~\eqref{eq:alpha_dagger}), at the cost of higher variance.
    $\alpha\!=\!1$ (blue) achieves the strongest variance reduction;
    $\alpha\!=\!3$ (green) tracks the Multi-TCAV mean with moderate benefit.
    $\alpha^\star$-TCAV (red) adapts to the noise structure of each concept
    (see Table~\ref{tab:alpha_star_values} for calibrated values).
    All $\alpha$-TCAV variants require only a single CAV training on the full
    budget $R$, versus $s$ CAVs for Multi-TCAV.}
    \label{fig:dom_main}
\end{figure}

\paragraph{Vary-$N$ results.}
Figure~\ref{fig:vary_N} shows the effect of increasing the total sample budget
$N$, with a fixed per-subset size $m_{\mathrm{ind}}\!=\!50$ for the indicator
and each Multi-TCAV subset. The indicator (black) maintains an approximately
\emph{constant} variance floor as $N$ grows --- the $\mathcal{O}(1)$ behavior
identified in~\citep[Sec.~4.4]{wenkmann2025variability} due to borderline
points at the fixed per-subset budget. By contrast, all $\alpha$-TCAV variants,
which use the full budget $N$ for a single CAV, exhibit clear
$\mathcal{O}(1/N)$ decay. $\alpha^\dagger$-TCAV (purple) sits between the
indicator and Multi-TCAV in both mean and variance, consistent with the
theoretical ratio $\alpha^\dagger/\alpha^\star = \sqrt{s-1}$
from Eq.~\eqref{eq:alpha_dagger}.

\begin{figure}[h]
    \centering
    \includegraphics[width=0.9\textwidth]{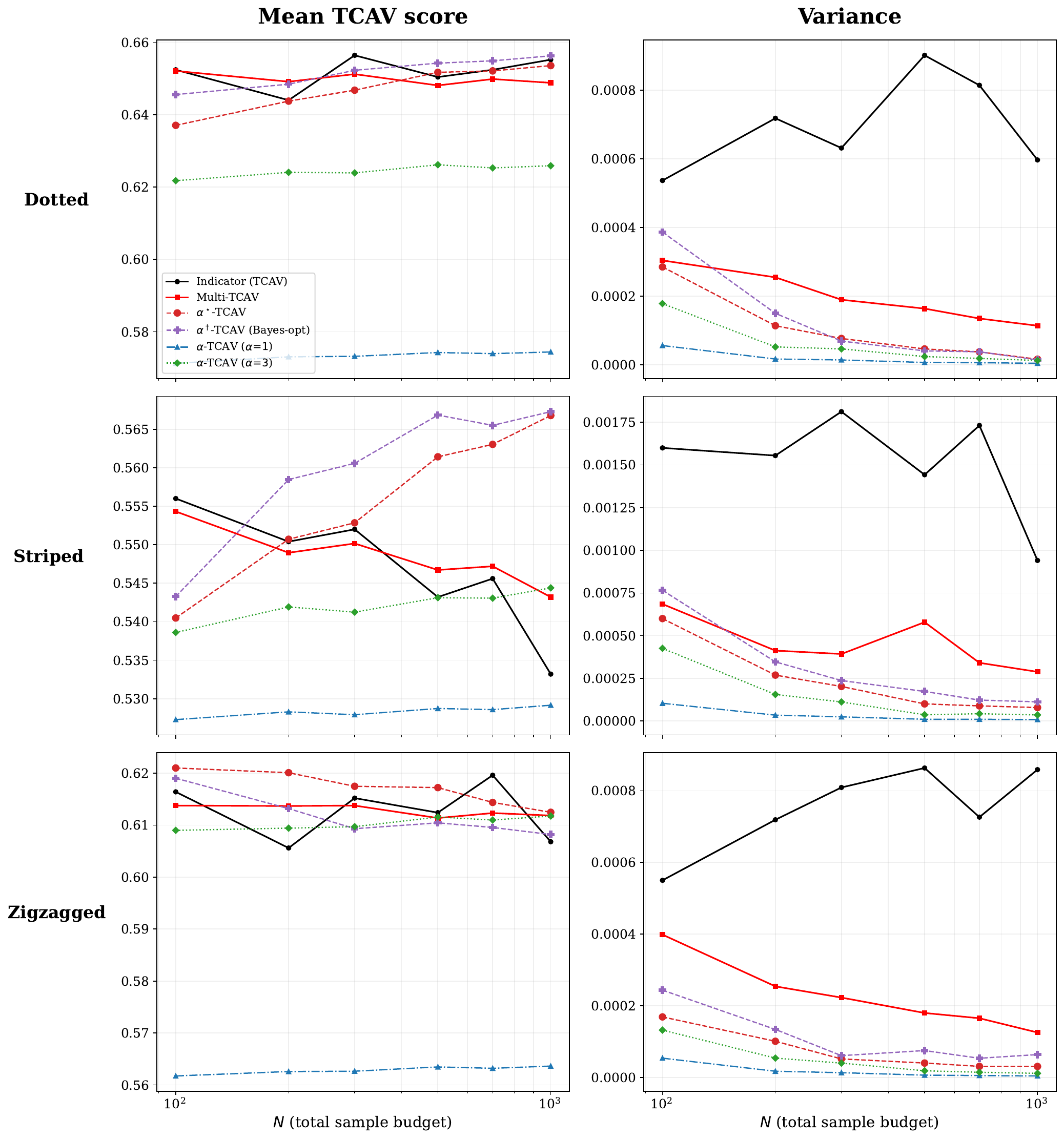}
    \caption{Effect of total sample budget $N$ on TCAV scores (ResNet-50,
    layer~2, PatternCAV; fixed per-subset size $m_{\mathrm{ind}}\!=\!50$).
    \textbf{Left:} Mean TCAV score vs.\ $N$.
    \textbf{Right:} Variance vs.\ $N$.
    The indicator (black) exhibits a constant variance floor ---
    the $\mathcal{O}(1)$ behavior of~\citep[Sec.~4.4]{wenkmann2025variability}.
    All $\alpha$-TCAV variants use the full budget for a single CAV and achieve
    $\mathcal{O}(1/N)$ decay. $\alpha^\dagger$-TCAV (purple) lies between the
    indicator and Multi-TCAV, consistent with
    $\alpha^\dagger/\alpha^\star = \sqrt{s-1}$ (Eq.~\eqref{eq:alpha_dagger}).}
    \label{fig:vary_N}
\end{figure}

\subsection{Practical Recommendations for TCAV}
\label{sec:recommendations}

The theoretical unification and empirical results presented in this work provide a clear roadmap for practitioners seeking to improve the reliability of concept-based interpretability. To move beyond heuristic methods toward statistically grounded explanations, we propose the following guidelines:

\begin{description}[leftmargin=0.5cm]

    \item[1. Optimal use of the sampling budget:]
    The common practice of Multi-TCAV means to partition a sample budget of $N$ into $s$ subsets to average multiple noisy estimates. Our analysis demonstrates that this approach is statistically suboptimal. Instead, we recommend using the \textbf{entire sample budget $N$} to estimate a single, high-fidelity CAV $\vw_\CAV$. The desired robustness is more efficiently achieved by the smooth $\alpha$-TCAV framework than by redundant ensemble averaging.

    \item[2. Selection of $\alpha$:]
    The sharpness parameter $\alpha$ should be chosen based on the interpretative goals, rather than being treated as a hyperparameter to be tuned. Specifically:
    \begin{itemize}[leftmargin=0.2cm]
        \item \textbf{Stability objective ($\alpha^\star$):} To imitate Multi-TCAV (\ie to match the mean) while gaining superior stability, use the data-driven estimator $\alpha^\star$ from \eqref{eq:alpha_star_practical}. This method consistently reduces variance by more than 50\% compared to standard averaging techniques.
        \item \textbf{Bayesian objective ($\alpha^\dagger$):} In cases where it is desirable that the TCAV score is the \textit{calibrated probability that the concept contributes to the model's prediction}, choose $\alpha^\dagger$ from \eqref{eq:alpha_dagger}. This approach recasts the score as the posterior probability that a concept exerts a positive influence, providing a principled measure of epistemic uncertainty.
    \end{itemize}

    \item[3. Scale normalization:] 
    Sensitivity magnitudes can vary significantly across different layers of the network. To maintain a consistent interpretation of $\alpha$, we recommend the normalization procedure described in Sec. \ref{app:normalization}. By rescaling sensitivities to unit root-mean-square, $\alpha$ becomes an absolute measure of sigmoid sharpness. For most architectures, $\alpha \in [1, 3]$ serves as a robust default.

    \item[4. Beyond scalar scores:] 
    Since $\alpha$-TCAV is computationally efficient once the CAV is fixed, practitioners are encouraged to evaluate scores across a range of $\alpha$ values. This "sensitivity profile" helps distinguish between concepts that are marginally influential (highly sensitive to $\alpha$) and those that are robustly detected by the model across all smoothing scales.

\end{description}

\paragraph{Summary for the Practitioner:} Avoid splitting your data to average multiple CAVs. Instead, train a single high-quality CAV on all data and apply the calibrated $\alpha$-TCAV framework to obtain a score that is both statistically meaningful and significantly more stable than the current state-of-the-art.

\section{Classification Accuracy}
\label{app:classification_accuracy}

\subsection{Preliminaries}

We are interested in determining the asymptotically precise classification accuracy of a (normalized) 
linear classifier with weight vector $\vw \in \R^d$ (such as a CAV $\vw_\CAV \in \R^{d_l}$ at the $l$th layer) and threshold (or bias) $\eta \in \R$, 
\begin{equation}
    g(\vx) = \frac{1}{\sqrt{n}} {\vw}^\top\vx  \underset{\mathcal{C}_1}{\overset{\mathcal{C}_2}{\gtrless}} \eta,
    \label{eq:general_linear_classifier_g}
\end{equation}
applied to any data point $\vx \in \R^d$.
(Note that for an appropriate data preprocessing, simply $\eta = 0$ is optimal; compare Lemma \ref{lem:intersection_of_normal_distribution}.) 
Our derivation is based on the assumption of normally distributed classification scores, that for test data $\vx \in \mathcal{C}_\ell$ from either class, the classification score $g(\vx) = n^{-1/2} \vw^\top \vx$ follows a normal distribution $\mathcal{N}(\mu_\ell, \sigma_\ell^2)$,  $\ell = 1,2$. In this case, the optimal (in the sense of maximizing the classification accuracy) value for the threshold $\eta^\star$ is given by the intersection of the two density functions that is in-between the two means; in case of $\sigma_1^2 = \sigma_2^2$, it is simply given by $\eta = (\mu_1 + \mu_2)/2$. Schematically, this is illustrated in Figure \ref{fig:two_gaussians}. In Lemma \ref{lem:intersections_of_Gaussians}, we will provide more technical details regarding the intersections of the density functions of normal distributions and optimal thresholds.
This Gaussian behavior makes it possible to retrieve the precise classification accuracy with numerical integration. 
This behavior has been observed empirically in various works in experiments on real data; mathematically, the appearance of Gaussian distributions for inner products of the type $g(\vx) = n^{-1/2} \vw^\top \vx$ is justified by the following result, which can be regarded as a variant of the \textit{Central Limit Theorem} for concentrated random vectors. Originally, it is due to \citep{klartag2007central,fleury2007stability}; for the next result, we follow the presentation from \cite[Theorem 3.2]{seddik2021unexpected}.

\begin{figure}[h]
\centering
\begin{tikzpicture}
    \begin{axis}[no markers, domain=-4:4, samples=100, axis lines*=left, height=5cm, width=8cm,
                 xtick=\empty, ytick=\empty, axis line style={draw=none}, ymin=-0.2, ymax=0.5]
        \addplot [fill=orange!30, draw=none, domain=0:+2.5] {gauss(-1,1.1)} \closedcycle ;
        \addplot [fill=cyan!30, draw=none, domain=-1.5:0] {gauss(1,0.9)} \closedcycle ;
        \addplot [very thick,red!100!black] {gauss(-1,1.1)};
        \addplot [very thick,blue!100!black] {gauss(+1,0.9)};
        \node[draw, circle, inner sep=2pt, thick] at (axis cs: 0, 0.24) {};
        \node at (axis cs: 0, -0.075) {$\eta^\star$};  
        \node at (axis cs: 3.8, -0.075) {$\hat{\eta}$};  
        \node[draw, circle, inner sep=2pt, thick] at (axis cs: 3.8, 0) {};
    \end{axis}
\end{tikzpicture}
\caption[Gaussian classification scores (illustration)]
{Illustration of the one-dimensional Gaussian distributions of the classification score 
$g(\vx) = n^{-1/2} \vw^\top \vx \sim \mathcal{N}(\mu_\ell, \sigma_\ell^2)$ for $\vx \in \mathcal{C}_\ell$, $\ell = 1,2$, for
for both classes \textcolor{red}{$\mathcal{C}_1$ (red)} and \textcolor{blue}{$\mathcal{C}_2$ (blue)}. 
The smaller the ``overlap'' (\textcolor{cyan}{cyan} and \textcolor{orange}{orange} area), for distant means and smaller variances, 
the higher is the classification accuracy. The optimal value of $\eta^\star$ is given at the intersection \inlinecircle \, of the density functions between the two means. A second intersection of the density functions is denoted as $\hat{\eta}$ (just for illustrational purposes).}
\label{fig:two_gaussians}
\end{figure}

\begin{theorem}[Central Limit Theorem for Concentrated Random Vectors]
\label{th:CLT_concentrated}
Let $\vx \in \R^d$ be a random vector with $\E[\vx] = \bm{0}$ and $\E[\vx\vx^\top] = \mI_p$, satisfying the $q$-exponential concentration 
$\vx  \propto \mathcal{E}_2 (1 \, | \, \R^d,  \| \, \cdot \, \|_2)$ from \ref{def:concentrated_random_vector}. 
Furthermore, let us denote by $\nu$ be the uniform measure on the sphere $\mathcal{S}^{p-1} \subset \R^d$ of radius $1$.
Then, there exist two constants $C,c > 0$ and a set $ \Omega \subset \mathcal{S}^{p-1}$
such that $\nu (\Omega) \geq 1 -  C \sqrt{p} e^{-c\sqrt{p}}$ and 
\begin{equation}
    \sup_{t \in \R}
    \left| \P \left(\vw^\top \vx \geq t\right) - \Phi(t) \right| \leq p^{-1/4}
    \quad \forall \vw \in \Omega.
\end{equation}
where $\Phi(t) = \tfrac{1}{\sqrt{2\pi}} \int_{-\infty}^{t} e^{-\frac{x^2}{2}} \dx$ 
is the cumulative distribution function of the standard normal distribution $\mathcal{N}(0,1)$; see also \eqref{eq:cdf_gaussian}.
\end{theorem}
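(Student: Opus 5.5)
The plan is to follow the strategy of \citep{klartag2007central,fleury2007stability} as presented in \cite[Theorem 3.2]{seddik2021unexpected}. First I would establish the Gaussian approximation \emph{on average} over directions $\vw$ drawn from $\nu$ (the annealed statement). Then I would transfer it to \emph{most} individual directions using concentration of measure on the sphere $\mathcal{S}^{p-1}$ (the quenched statement). Throughout I read the target as $\sup_t |\P(\vw^\top\vx \le t) - \Phi(t)|$; the $\ge$ version is equivalent with $1-\Phi$.

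\textbf{Step 1 (thin shell).} The map $\vx \mapsto \|\vx\|_2$ is $1$-Lipschitz, so the assumed concentration $\vx \propto \mathcal{E}_2(1 \,|\, \R^d, \|\cdot\|_2)$ gives $\P(|\|\vx\|_2 - M| > r) \le C e^{-c r^2}$ around a median $M$. Together with $\E\|\vx\|_2^2 = \Tr(\mI_p) = p$, this yields $|M - \sqrt{p}| = \mathcal{O}(1)$. Hence $\|\vx\|_2/\sqrt{p} = 1 + \mathcal{O}(p^{-1/2})$ with overwhelming probability.

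\textbf{Step 2 (annealed marginal).} Let $\vtheta \sim \nu$ be independent of $\vx$. By rotation invariance, $\vtheta^\top \vx$ has the law of $\|\vx\|_2\,\theta_1$. The classical Diaconis--Freedman/Poincaré estimate says that $\sqrt{p}\,\theta_1$ is within total variation $\mathcal{O}(1/p)$ of $\mathcal{N}(0,1)$. Combining this with Step 1 and the Lipschitz dependence of $\Phi(t/\rho)$ on the scale $\rho$, I obtain
\[
\sup_t \left|\P(\vtheta^\top\vx \le t) - \Phi(t)\right| \le C p^{-1/2}\log p .
\]

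\textbf{Step 3 (quenched, via characteristic functions).} For a frequency $\xi \in \R$, define $F_\xi(\vw) := \E_{\vx}[e^{i\xi \vw^\top \vx}]$. Since $\E[\vx\vx^\top] = \mI_p$, Cauchy--Schwarz gives $|F_\xi(\vw) - F_\xi(\vw')| \le |\xi|\,\E|(\vw-\vw')^\top\vx| \le |\xi|\,\|\vw - \vw'\|_2$, so $F_\xi$ is $|\xi|$-Lipschitz. Lévy's isoperimetric inequality on $\mathcal{S}^{p-1}$ then gives
\[
\nu\left(|F_\xi - \E_\nu F_\xi| > \varepsilon\right) \le C e^{-c p \varepsilon^2/\xi^2}.
\]
By Step 2, $\E_\nu F_\xi$ is close to $e^{-\xi^2/2}$. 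I would take a union bound over a net of frequencies $|\xi| \le T$, using that $F_\xi$ is also Lipschitz in $\xi$ with constant $\E|\vw^\top\vx| \le 1$. Esseen's smoothing inequality then converts characteristic-function closeness on $[-T,T]$ into the Kolmogorov bound
\[
\sup_t \left|\P(\vw^\top\vx \le t) - \Phi(t)\right| \lesssim \int_{-T}^{T} \frac{|F_\xi(\vw) - e^{-\xi^2/2}|}{|\xi|}\,\mathrm{d}\xi + \frac{1}{T}.
\]

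\textbf{Main obstacle.} The difficulty is choosing the parameters so that the error is $p^{-1/4}$ while the exceptional set has $\nu$-mass only $C\sqrt{p}\,e^{-c\sqrt{p}}$. A naive smoothed-indicator argument, or naive parameters in Step 3, gives $p^{-1/8}$ instead. I would try $T \asymp p^{1/4}$ and $\varepsilon \asymp p^{-1/2}$ at each frequency. The exponent is then $p\varepsilon^2/\xi^2 \gtrsim \sqrt{p}$ only for small $|\xi|$. Large frequencies would have to be handled separately: there I would use the Gaussian decay of $e^{-\xi^2/2}$ and a sharper, variance-type (Poincaré) bound on $\mathrm{Var}_\nu F_\xi$ that exploits $\E[\vx\vx^\top] = \mI_p$. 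The $\sqrt{p}$ prefactor in the exceptional mass would come from the union bound over the roughly $\sqrt{p}$ net points. If this bookkeeping fails, I would fall back on Klartag's original device: a Lipschitz estimate in $\vw$ for the density of $\vw^\top\vx$ convolved with a Gaussian of width $p^{-1/4}$, combined with the log-concavity-free thin-shell input from Step 1.
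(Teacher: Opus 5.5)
The paper gives no proof of this theorem; it is quoted from \citep[Theorem~3.2]{seddik2021unexpected}, going back to \citep{klartag2007central,fleury2007stability}, so your argument has to stand on its own. Your Steps~1 and~2 are sound. Lipschitz concentration of $\|\vx\|_2$ together with $\E\|\vx\|_2^2=p$ gives a thin shell of width $\mathcal{O}(1)$, and rotation invariance plus Diaconis--Freedman give the annealed estimate. The gap is Step~3. It carries the entire quantitative content of the statement, and you leave it as a list of options, none of which reaches $p^{-1/4}$ as set up.

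Since $F_\xi$ is only $|\xi|$-Lipschitz on $\mathcal{S}^{p-1}$, L\'evy's inequality gives exceptional mass $e^{-cp\varepsilon^2/\xi^2}$. Demanding $e^{-c\sqrt p}$ therefore forces $\varepsilon(\xi)\gtrsim|\xi|\,p^{-1/4}$. In Esseen's inequality this costs $\int_{-T}^{T}\varepsilon(\xi)|\xi|^{-1}\,\mathrm{d}\xi\gtrsim T p^{-1/4}$, while the boundary term forces $T\gtrsim p^{1/4}$. The best balance is $T\asymp p^{1/8}$, which gives exactly the $p^{-1/8}$ you wanted to avoid. Your concrete choice $\varepsilon\asymp p^{-1/2}$ makes the exponent $p\varepsilon^2/\xi^2\asymp\xi^{-2}$, which is $\gtrsim\sqrt p$ only for $|\xi|\lesssim p^{-1/4}$, so it fails at every relevant frequency. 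The choice $\varepsilon\asymp p^{-1/4}$ is what your count of $\sqrt p$ net points actually presupposes, since $F_\xi$ is $1$-Lipschitz in $\xi$. With it, the band $1\ll|\xi|\lesssim p^{1/4}$ still only gets exceptional mass $e^{-c\sqrt p/\xi^2}$, which is of order one at the top of the band.

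The repairs you propose do not close this. A second-moment bound over $\nu$ gives, via Chebyshev, exceptional mass $\xi^2/(p\varepsilon^2)$ if it comes from Poincar\'e ($\Var_\nu F_\xi\lesssim\xi^2/p$). Even the exact value $\E_\nu|F_\xi|^2$, which rotation invariance and the thin shell show to be of order $e^{-\xi^2}$, only gives $e^{-\xi^2}/\varepsilon^2$. Neither is $\lesssim e^{-c\sqrt p}$ in the band $1\ll|\xi|\ll p^{1/4}$. The fallback has the same defect. The map $\vw\mapsto\P(\vw^\top\vx+\sigma Z\le t)$ is a priori only $\mathcal{O}(\sigma^{-1})$-Lipschitz, and the smoothed density only $\mathcal{O}(\sigma^{-2})$-Lipschitz. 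With $\sigma=\varepsilon=p^{-1/4}$ the L\'evy exponent is then $p\varepsilon^2\sigma^2=\mathcal{O}(1)$.

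What is missing is a regularity input that makes the relevant functionals of $\vw$ Lipschitz with an $\mathcal{O}(1)$ constant at resolution $p^{-1/4}$, at least for all but an $e^{-c\sqrt p}$-fraction of directions. With that, L\'evy at precision $p^{-1/4}$ yields mass $e^{-c\sqrt p}$. For log-concave $\vx$, the setting of \citep{klartag2007central}, this comes for free: two-dimensional marginals are isotropic log-concave, hence have uniformly bounded densities, and this makes $\vw\mapsto\P(\vw^\top\vx\le t)$ itself $\mathcal{O}(1)$-Lipschitz. Under $\mathcal{E}_2(1)$ concentration alone it fails. Take $\vx=(G_1,\dots,G_{p-1},\beta)$ with $G_i$ i.i.d.\ standard Gaussian and $\beta$ an independent Rademacher sign. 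This vector is isotropic and $\mathcal{E}_2(1)$-concentrated, yet $\vw\mapsto\P(\vw^\top\vx\le 1)$ is discontinuous at $\vw=\ve_p$ (value $1$ there, limit $3/4$ nearby). The thin-shell estimate of Step~1 says nothing about this. Until you supply such an estimate, or another mechanism giving $e^{-c\sqrt p}$-small exceptional sets uniformly over that frequency band, your argument only supports a $p^{-1/8}$ rate.
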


The following result, Proposition \ref{prop:mean_and_variance_of_classification_score}, decomposes the classification score 
$g(\vx) = n^{-1/2} \vw^\top \vx$ into terms based on data and model statistics - specifically the training data's mean and covariance, and the weight vector's mean and variance. Here, the weight vector $\vw \in \R^d$ is treated as a random vector with a distribution induced by the training data $\mX$. While the proposition is formulated generally for any two independent random vectors, it is primarily motivated by linear classifiers, where a new test datum $\vx \in \R^d$ is assumed to be independent of the training data $\mX$, and therefore also independent of the learned parameters $\vw$.

\begin{proposition}\label{prop:mean_and_variance_of_classification_score} 
Consider two independent random vectors $\vx, \vw$ taking values in $\R^d$, 
with means $\E[\vx] = \bar{\vx} \in \R^d$ and $\E[\vw] = \bar{\vw} \in \R^d$, 
and covariances $\Cov(\vx) = \mSigma_\vx \in \R^{d \times d}$ and $\Cov(\vw) = \mSigma_\vw \in \R^{d \times d}$, respectively.
Then, for $g(\vx) = \tfrac{1}{\sqrt{n}} \vx^\top \vw$, the following statements hold.
\begin{align}
    \E_{\vw,\vx}[g(\vx)] 
= & \frac{1}{\sqrt{n}} \E_{\vw,\vx}\left[{\vw}^\top \vx\right] 
=   \frac{1}{\sqrt{n}} \bar{\vw}^\top \bar{\vx} \label{eq:E[g(x)]} \\ 
    \Var_\vx (g(\vx)) 
= & \E_\vx \left[ g(\vx)^2 \right] - \E_\vx [g(\vx)]^2   
=  \frac{1}{n}\Tr\left(\mSigma_\vx \vw\vw^\top\right)   \label{eq:Var___x[g(x)]} \\ 
    \E_\vw \left[\Var_{\vx} (g(\vx)) \right]
= &        \frac{1}{n}\Tr\left(\mSigma_\vw\mSigma_\vx\right)    
    +   \frac{1}{n}\Tr\left(\mSigma_\vx \bar{\vw}\bar{\vw}^\top\right),  \label{eq:mean_of_Var___x[g(x)]} \\ 
    \Var_{\vw,\vx} (g(\vx)) 
= & \E_{\vw,\vx}\left[ g(\vx)^2 \right] - \E_{\vw,\vx}[g(\vx)]^2    \label{eq:Var[g(x)]} \\ 
=  &    \frac{1}{n}\Tr\left(\mSigma_\vw\mSigma_\vx\right) 
    +   \frac{1}{n}\Tr\left(\mSigma_\vw \bar{\vx}\bar{\vx}^\top\right)
     + \frac{1}{n}\Tr\left(\mSigma_\vx \bar{\vw}\bar{\vw}^\top\right)  \label{eq:Var[g(x)]_second_line}
\end{align}
\end{proposition}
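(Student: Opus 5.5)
The plan is to condition on $\vw$ and use independence of $\vx$ and $\vw$ throughout. I would use three elementary tools. First, linearity of expectation combined with the trace identity $\va^\top \mM \va = \Tr(\mM \va\va^\top)$. Second, the moment identity $\E[\vw\vw^\top] = \mSigma_\vw + \bar{\vw}\bar{\vw}^\top$ (and likewise for $\vx$). Third, the law of total variance. Note that $\Var_\vx(g(\vx))$ and $\E_\vx[g(\vx)]$ are to be read as conditional on $\vw$, i.e.\ as functions of the random vector $\vw$; by independence this amounts to treating $\vw$ as fixed.

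For \eqref{eq:E[g(x)]}, I would condition on $\vw$. By independence, $\E_\vx[\vw^\top\vx \mid \vw] = \vw^\top\bar{\vx}$, and taking the expectation over $\vw$ then gives $\bar{\vw}^\top\bar{\vx}/\sqrt{n}$.

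For \eqref{eq:Var___x[g(x)]}, with $\vw$ fixed, $g(\vx) - \E_\vx[g(\vx)] = n^{-1/2}\vw^\top(\vx - \bar{\vx})$. Hence
\begin{equation*}
\Var_\vx(g(\vx)) = \tfrac{1}{n}\vw^\top\mSigma_\vx\vw = \tfrac{1}{n}\Tr(\mSigma_\vx\vw\vw^\top).
\end{equation*}

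For \eqref{eq:mean_of_Var___x[g(x)]}, I take $\E_\vw$ of this expression. Linearity of trace and expectation gives $\tfrac1n\Tr(\mSigma_\vx\E[\vw\vw^\top])$, and substituting $\E[\vw\vw^\top] = \mSigma_\vw + \bar{\vw}\bar{\vw}^\top$ yields the two stated terms (using $\Tr(\mSigma_\vx\mSigma_\vw) = \Tr(\mSigma_\vw\mSigma_\vx)$).

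For the total variance \eqref{eq:Var[g(x)]_second_line}, I would apply the law of total variance:
\begin{equation*}
\Var_{\vw,\vx}(g) = \E_\vw[\Var_\vx(g)] + \Var_\vw(\E_\vx[g]).
\end{equation*}
The first term is the previous step. For the second, $\E_\vx[g] = n^{-1/2}\bar{\vx}^\top\vw$, so its variance over $\vw$ is $\tfrac1n\bar{\vx}^\top\mSigma_\vw\bar{\vx} = \tfrac1n\Tr(\mSigma_\vw\bar{\vx}\bar{\vx}^\top)$. Adding the two gives the three-term formula.

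As a cross-check, I would compute $\E[g^2] = \tfrac1n\Tr(\E[\vx\vx^\top]\E[\vw\vw^\top])$ directly, which uses independence to factor $\E[\vx\vx^\top\vw\vw^\top]$ inside the trace. Expanding both second moments produces four terms. The product $\tfrac1n\Tr(\bar{\vx}\bar{\vx}^\top\bar{\vw}\bar{\vw}^\top) = \tfrac1n(\bar{\vw}^\top\bar{\vx})^2$ cancels against $\E[g]^2$, leaving exactly the stated expression.

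No step is a genuine obstacle. The only care needed is to make the conditioning explicit, so that $\Var_\vx$ is understood as a conditional variance, and to justify the factorization of mixed moments by independence, assuming finite second moments as implicit in the hypotheses.
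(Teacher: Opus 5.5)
Your proof is correct, and for the first three identities it is the paper's argument. Like the paper, you condition on $\vw$, write $\Var_{\vx}(g)=\tfrac{1}{n}\vw^\top\mSigma_{\vx}\vw$, and then insert $\E[\vw\vw^\top]=\mSigma_{\vw}+\bar{\vw}\bar{\vw}^\top$.

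For the joint variance, the paper's proof is exactly your cross-check. It writes $\E[\vw^\top\vx\vx^\top\vw]=\Tr\bigl((\mSigma_{\vw}+\bar{\vw}\bar{\vw}^\top)(\mSigma_{\vx}+\bar{\vx}\bar{\vx}^\top)\bigr)$ by independence, expands, and cancels $(\bar{\vw}^\top\bar{\vx})^2$.

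Your primary route through the law of total variance is a mildly different decomposition. It reuses the third identity and identifies the extra term $\tfrac{1}{n}\Tr(\mSigma_{\vw}\bar{\vx}\bar{\vx}^\top)$ as $\Var_{\vw}(\E_{\vx}[g])=\tfrac{1}{n}\bar{\vx}^\top\mSigma_{\vw}\bar{\vx}$. That term is precisely the gap between $\E_{\vw}[\Var_{\vx}(g)]$ and $\Var_{\vw,\vx}(g)$, which the paper only explains informally in its remark after the statement. The paper's direct expansion, by contrast, needs no conditional variances at all, only the factorization of mixed second moments by independence.
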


Let us point out that in the situation of the above Proposition \ref{prop:mean_and_variance_of_classification_score}, 
there is a subtle difference between $\E_\vw \left[\Var_{\vx} (g(\vx)) \right]$
from \eqref{eq:mean_of_Var___x[g(x)]} and $\Var_{\vw,\vx} (g(\vx))$ from \eqref{eq:Var[g(x)]}. 
For the first expression, note there is
\begin{equation*}
    \E_\vw \left[ \Var_{\vx} (g(\vx))  \right]
=   \E_{\vw,\vx} \left[ g(\vx)^2 \right] -
    \E_\vw\left[ \E_\vx \left[ g(\vx) \right]^2 \right].
\end{equation*}
However, due to the mismatch in the second summand, this is generally \textit{not} the same as the expression
\begin{equation*}
    \Var_{\vw,\vx} (g(\vx)) 
=  \E_{\vw,\vx}\left[g(\vx)^2\right] - \E_{\vw,\vx}[g(\vx)]^2,
\end{equation*}
It is the second the in \eqref{eq:Var[g(x)]_second_line}, \ie   the term
$\tfrac{1}{n}\Tr\left(\mSigma_\vw \bar{\vx}\bar{\vx}^\top\right)$,
that makes up for the difference between 
\eqref{eq:mean_of_Var___x[g(x)]} and \eqref{eq:Var[g(x)]}. 
Let us next prove Proposition \ref{prop:mean_and_variance_of_classification_score}.

\begin{proof} 
Firstly, the expression \eqref{eq:E[g(x)]} for the mean follows easily by linearity, and from the independence of the random vectors $\vx$ and $\vw$.   
Secondly, for the variance conditionally on $\vw$ with respect to $\vx$ in \eqref{eq:Var___x[g(x)]} - note this is a random variable! - we obtain
\begin{align*}
    \E_\vx \left[g(\vx)^2\right] - \E_\vx [g(\vx)]^2  
=  & \frac{1}{n} \E_\vx  \left[{\vw}^\top\vx \cdot {\vw}^\top\vx \right]  - \frac{1}{n}  \vw^\top \bar{\vx}   \cdot \vw^\top \bar{\vx}  \\
=  & \frac{1}{n}\E \left[ \vw^\top \left(\vx  \vx^\top - \bar{\vx}\bar{\vx}^\top\right) \vw \right]  
=   \frac{1}{n} \vw^\top \mSigma_\vx \vw .
\end{align*}
Thirdly, for \eqref{eq:mean_of_Var___x[g(x)]}, we simply take our findings from \eqref{eq:Var___x[g(x)]} and pass to the expectation with respect to $\vw$ and obtain the claim
\begin{align*}
    \E_\vw \left[\Var_{\vx} (g(\vx)) \right]
=  & \E_\vw \left[\E_\vx \left[ g(\vx)^2 \right] - \E_\vx [g(\vx)]^2 \right] \\
=  & \E_\vw \left[\frac{1}{n}\Tr\left(\mSigma_\vx \vw\vw^\top\right) \right] 
=   \frac{1}{n} \Tr\left(\mSigma_\vx \left(\mSigma_\vw +\bar{\vw} \bar{\vw}^\top\right) \right).
\end{align*}
Finally, for the proof of the expression \eqref{eq:Var[g(x)]} for the joint variance,
note that this expression can be rewritten easily as follows
\begin{align}
    \E_{\vw,\vx} \left[g(\vx)^2\right] - \E_{\vw,\vx} [g(\vx)]^2  
=  & \frac{1}{n} \E_{\vw,\vx} \left[{\vw}^\top\vx \cdot {\vw}^\top\vx \right] 
    - \frac{1}{n} \left(\bar{\vw}^\top \bar{\vx} \right)^2   \\
=  & \frac{1}{n}  \E_{\vw,\vx} \left[ \vw^\top \vx  \vx^\top \vw \right] 
    - \frac{1}{n}\bar{\vw}^\top\bar{\vx}\bar{\vx}^\top\bar{\vw} . 
\label{eq:develop_variance}
\end{align}
Next, note that the first term on the right-hand side of \eqref{eq:develop_variance} can, using basic trace and covariance properties,
be rewritten as
\begin{equation*}
    \E_{\vw,\vx} \left[ \vw^\top \vx \vx^\top \vw \right]  
=   \Tr\left(\E_{\vw,\vx} \left[ \vw\vw^\top\vx\vx^\top \right]\right)  
=   \Tr\left(\left(\mSigma_\vw +\bar{\vw}\bar{\vw}^\top\right)\left(\mSigma_\vx+\bar{\vx}\bar{\vx}^\top\right)\right) .
\end{equation*}
Developing terms, normalizing by the factor $\tfrac{1}{n}$  inserting the resulting expression into \eqref{eq:develop_variance} yields the desired result.
\end{proof}

\subsection{Predicting the Classification Error}

After having found the class-specific means and variances of $g(\vx) = n^{-1/2} \vw^\top \vx$ with the help of Proposition \ref{prop:mean_and_variance_of_classification_score}, we are now able to determine the theoretical classification error.

\begin{proposition}[Classification Error and Scaling Invariance]
\label{prop:classification_accuracy_combined}
Let $\vw \in \R^d$ be the weight vector of the linear model $g(\vx) = n^{-1/2} \vw^\top \vx$ as defined in \eqref{eq:general_linear_classifier_g}.
Further, assume that the classification score $g(\vx)$ follows class-specific normal distributions for the two classes $\mathcal{C}_1$ and $\mathcal{C}_2$,
\ie for $\vx \in \mathcal{C}_\ell$, 
\begin{equation}
    g(\vx) \sim \mathcal{N}(\mu_\ell, \sigma_\ell^2), \qquad \ell=1,2.
    \label{eq:g_Gaussian_assumption}
\end{equation}
Then, the classification error can be computed as follows, and it is invariant under scaling of $\vw$.
\begin{enumerate}[label=(\roman*), leftmargin=0.5cm]
    \item The classification error $\epsilon$ associated with any arbitrary decision threshold $\eta \in \R$ is given by
    \begin{align}
    \epsilon \stackrel{\mathrm{def}}{=} 
      & \, c_1 \cdot \textcolor{orange}{\P \left( \vx \rightarrow \mathcal{C}_2 \, | \, \vx \in \mathcal{C}_1 \right)} + 
       \, c_2 \cdot \textcolor{cyan}{\P \left( \vx \rightarrow \mathcal{C}_1 \, | \, \vx \in \mathcal{C}_2 \right) } \nonumber \\
    = & \, c_1 \cdot \textcolor{orange}{\P \left(X > \eta \, | \, X \sim \mathcal{N}\left(\mu_1, \sigma_1^2\right) \right)} +  
        \, c_2 \cdot \textcolor{cyan}{\P \left( Y < \eta  \, | \, Y \sim \mathcal{N}\left(\mu_2, \sigma_2^2\right) \right)}. \label{eq:error_of_misclassification}
    \end{align}
    \item For any $\alpha \neq 0$, the scaled classifier $g_\alpha(\vx) = \alpha \cdot g(\vx)$ yields the same classification error $\epsilon$. 
    The distribution of the corresponding classification score is given by $g_\alpha(\vx) \sim \mathcal{N}(\alpha \mu_\ell, \alpha^2 \sigma_\ell^2)$ for $\ell=1,2$, and the optimal threshold is $\eta_\alpha^\star = \alpha \eta^\star$, when $\eta^\star$ is optimal for $g$ itself.
\end{enumerate}
\end{proposition}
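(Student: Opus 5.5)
For part (i), I will start from the decision rule \eqref{eq:general_linear_classifier_g}: a point is assigned to $\mathcal{C}_2$ exactly when $g(\vx) > \eta$, and to $\mathcal{C}_1$ otherwise. Ties $g(\vx)=\eta$ have probability zero under the Gaussian assumption \eqref{eq:g_Gaussian_assumption}, so they can be ignored.

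By the law of total probability over the class label, with class priors $c_1, c_2$, the misclassification probability splits into two conditional terms. The first is $c_1\,\P(g(\vx)>\eta \mid \vx\in\mathcal{C}_1)$ and the second is $c_2\,\P(g(\vx)<\eta \mid \vx\in\mathcal{C}_2)$. Substituting the conditional laws $g(\vx)\mid \vx\in\mathcal{C}_\ell \sim \mathcal{N}(\mu_\ell,\sigma_\ell^2)$ then gives \eqref{eq:error_of_misclassification} directly. If desired, standardizing gives the closed form
\[
\epsilon = c_1\left(1-\Phi\left(\tfrac{\eta-\mu_1}{\sigma_1}\right)\right) + c_2\,\Phi\left(\tfrac{\eta-\mu_2}{\sigma_2}\right),
\]
using $\Phi$ from \eqref{eq:cdf_gaussian}.

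For part (ii), the distributional claim is immediate: an affine image of a Gaussian is Gaussian, so $\alpha X \sim \mathcal{N}(\alpha\mu_\ell,\alpha^2\sigma_\ell^2)$. The invariance argument compares the scaled classifier at the scaled threshold $\alpha\eta$ with the original classifier at $\eta$.

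\textbf{Case $\alpha>0$.} The events coincide: $\{\alpha g(\vx)>\alpha\eta\}=\{g(\vx)>\eta\}$. Hence the error of $(g_\alpha,\alpha\eta)$ equals the error of $(g,\eta)$ for every $\eta$. Since $\eta\mapsto\alpha\eta$ is a bijection of $\R$, the error curves of $g$ and $g_\alpha$ are reparametrizations of each other. Their minima therefore agree, and any minimizer $\eta^\star$ for $g$ maps to the minimizer $\alpha\eta^\star$ for $g_\alpha$. Equivalently, the intersection point of the two scaled densities is the scaled intersection point, consistent with Lemma~\ref{lem:intersections_of_Gaussians}.

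\textbf{Case $\alpha<0$.} This is the main subtlety. Multiplying by a negative number reverses the inequality, so $\{\alpha g(\vx)>\alpha\eta\}=\{g(\vx)<\eta\}$, and the class means swap order. The statement's invariance must therefore be read with the decision rule's orientation flipped accordingly: assign to $\mathcal{C}_2$ when $g_\alpha(\vx)<\alpha\eta$. This is the natural convention, because the optimal threshold lies between the two means and the rule assigns to whichever class has its mean on that side. With this convention, the events again coincide exactly, the same bijection argument applies, and $\eta^\star_\alpha=\alpha\eta^\star$.

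I expect the only real obstacle to be making the sign convention for $\alpha<0$ explicit. I will state it as part of the proof rather than leave it implicit; everything else is routine.
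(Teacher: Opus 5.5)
Your proof is correct and follows the paper's argument. Part (i) is read off directly from the decision rule and the class-conditional Gaussian laws, and part (ii) rests on the event identity $\{\alpha g(\vx) > \alpha\eta\} = \{g(\vx) > \eta\}$, which holds for every threshold and hence gives $\eta_\alpha^\star = \alpha\eta^\star$ at the optimum. Your explicit treatment of $\alpha<0$, flipping the rule's orientation, is a worthwhile refinement: the paper's chain $\P(\alpha g(\vx) \gtrless \alpha\eta) = \P(g(\vx) \gtrless \eta)$ is only valid as written for $\alpha>0$.
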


\begin{proof}
The formula for $\epsilon$ in \eqref{eq:error_of_misclassification} follows directly from the classification rule \eqref{eq:general_linear_classifier_g} and the Gaussian assumptions \eqref{eq:g_Gaussian_assumption}. 
Regarding the scaling invariance, the distributions of $g_\alpha(\vx)$ are obtained from the basic property that $X \sim \mathcal{N}(\mu, \sigma^2)$ implies $\alpha X \sim \mathcal{N}(\alpha\mu, \alpha^2\sigma^2)$. 
Furthermore, for any decision threshold $\eta > 0$ and its scaled counterpart $\eta_\alpha = \alpha \eta$, the inequalities defining the classification rule are preserved, as
\begin{equation*}
    \P\left(g_\alpha(\vx) \gtrless \eta_\alpha\right) 
=   \P\left(\alpha g(\vx) \gtrless \alpha \eta\right) 
=   \P\left(g(\vx) \gtrless \eta\right).
\end{equation*}
Thus, the misclassification probability remains invariant for any threshold, which implies the result holds in particular for the optimal threshold $\eta^\star$ (where $\eta_\alpha^\star = \alpha \eta^\star$).
\end{proof}

\begin{remark}[Classification Accuracy of PatternCAV and FastCAV]a
A consequence of this results is that by the findings in Sec. \ref{sec:cavs}, in particular by the scaling \eqref{eq:scaling_pattern_fast_cav_general} between $ \vw_{\pat}$ from \eqref{eq:w_pat_empirical} and $ \vw_{\fast} $ from \eqref{eq:fastCAV_definition}, \textit{FastCAV} and \textit{PatternCAV} have the same classification accuracy, assuming normally distributed classification scores.
\end{remark}
 
The two terms constituting the misclassification error $\epsilon$ in \eqref{eq:error_of_misclassification} correspond to the colored areas in Figure \ref{fig:two_gaussians}. Notably, the classification accuracy is determined solely by the scalars $\mu_1, \mu_2, \sigma_1^2, \sigma_2^2$, allowing $\epsilon$ to be computed via numerical integration of the normal distribution's cumulative distribution function (CDF).
To obtain these scalars, one must estimate both the data statistics and the model's weight distribution. While data means and covariances are typically estimated empirically, deriving the distribution induced over $\vw$ is more involved and will be discussed in detail in Section \ref{sec:cavs}.

The assumption of normally distributed scores $g(\vx)$ is highly realistic under Assumption \ref{assum:concentration} and is frequently observed in practice. This phenomenon is supported by \cite[Theorem 3.2]{seddik2021unexpected}, as well as versions of the Central Limit Theorem for concentrated random vectors \citep{klartag2007central, fleury2007stability}. Finally, per Proposition \ref{prop:classification_accuracy_combined}, the classification performance is invariant to the scaling of $\vw$ by a constant factor---a property that facilitates the subsequent comparison between the \textit{PatternCAV} and \textit{FastCAV}.

The following result, Lemma \ref{lem:intersections_of_Gaussians}, characterizes the intersections of normal distributions to determine optimal decision thresholds. Although these findings are elementary and similarly results have been used before (sometimes without explicitly mentionoing), we provide a formal statement to ensure the paper is self-contained and provide an easy future reference. While the analysis of density function intersections in Lemma \ref{lem:intersections_of_Gaussians}\ref{part(i)} follows an online discussion on \textit{stats.stackexchange} \citep{StackExchange2017}, we complement this with parts \ref{part(ii)} and \ref{part(iii)}. These sections provide a rigorous analysis of intersection locations and establish the formal link to optimal decision thresholds in binary classification settings with normally distributed scores. We also refer to \citep{das2021method} for related topics.
As a technical preparation for the following result, let us briefly recall the distribution function $f_{\mu, \sigma^2}$ of a normally distributed random variable $X \sim \mathcal{N} \left(\mu, \sigma^2\right)$  with mean $\mu \in \R$ and standard deviation $\sigma > 0$, 
together with its derivative $f_{\mu, \sigma^2}'$ and its logarithm (note that the density is positive everywhere),

\begin{align}
        f_{\mu, \sigma^2}(x) 
&   =   \frac{1}{\sqrt{2\pi\sigma^2}} \exp\left(-\frac{(x - \mu)^2}{2\sigma^2}\right), \label{eq:density_gaussian} \\
        f_{\mu, \sigma^2}'(x) 
&   = -\frac{(x-\mu)}{\sigma^2} \cdot \frac{1}{\sqrt{2\pi \sigma^2}} \exp\left(-\frac{(x - \mu)^2}{2\sigma^2}\right)  
    = -\frac{x - \mu}{\sigma^2} f_{\mu, \sigma^2}(x), \label{eq:density_gaussian_derivative} \\
    \log \left( f_{\mu, \sigma^2}(x) \right) & = - \frac{1}{2} \log(2\pi\sigma^2) - \frac{(x - \mu)^2}{2\sigma^2}. \label{eq:density_gaussian_logarithm}
\end{align}

Now we are ready to state and prove the folling result.

\begin{lemma}\label{lem:intersection_of_normal_distribution}
Consider two univariate normal distributions with the density functions $f_{\mu_1, \sigma_1^2}$ and $f_{\mu_2, \sigma_2^2}$ 
(let us just writ shortly $f_1$ and $f_2$), with their characterizing means $\mu_1, \mu_2 \in \R$ and standard deviations $\sigma_1, \sigma_2 > 0$. Then, the following hold.
\begin{enumerate}[label=(\roman*), leftmargin=0.5cm]
    \item \label{part(i)} \textbf{Intersections of the density functions:} For equal means  $\mu_1 = \mu_2$ and variances $\sigma_1^2 = \sigma_2^2$,            the density functions trivially coincide and every point is an intersection. Further, in the case of 
            $\sigma_1^2 = \sigma_2^2$ but $\mu_1 \neq \mu_2$,
          \begin{equation}
              x = \frac{\mu_1 + \mu_2}{2}
              \label{eq:equal_sigma_different_mu}
          \end{equation}          
          is the unique intersection, \ie the only $x \in \R$ with $f_1(x) = f_2(x)$.
          Finally, in any other case (which exactly consists of the two cases of either $\mu_1 = \mu_2$ but 
          $\sigma_1^2 \neq \sigma_2^2$, or when both $\mu_1 \neq \mu_2$ and $\sigma_1^2 \neq \sigma_2^2$),
          there always exist the two intersections $x_1, x_2 \in \R$ given by 
          \begin{equation}
            x_{1,2} = \frac{-b \pm \sqrt{b^2 - 4ac}}{2a},
            \label{eq:solution_quadratic_formula}
        \end{equation}
        which can be found as the solutions of the quadratic equation $ax^2 + bx + c = 0$, with $a,b$ and $c$ given by
        \begin{align}
        a & = 1/\sigma_2^2 - 1/\sigma_1^2 , \label{eq:quadratic_formula_a} \\
        b & = 2(\mu_1/\sigma_1^2 - \mu_2/\sigma_2^2 ), \label{eq:quadratic_formula_b} \\
        c & = \log ( \sigma_2^2 / \sigma_1^2)  + \mu_2^2/\sigma_2^2 - \mu_1^2/\sigma_1^2. \label{eq:quadratic_formula_c}
        \end{align}       
    \item \label{part(ii)}  \textbf{Location of the intersections:} Let us assume that both the means and the variances are different, 
          \ie we consider $\mu_1 \neq \mu_2$ and $\sigma_1^2 \neq \sigma_2^2$. In this case, the two distinct solutions $x_1, x_2$ from \eqref{eq:solution_quadratic_formula} both exist. W.l.o.g. we  assume $\mu_1 < \mu_2$. 
          Then, we have either $\mu_1 < x_1 < \mu_2$ or $\mu_1 < x_2 < \mu_2$, while in each case
          the remaining second solution lies outside of the interval $[\mu_1, \mu_2]$.
    \item \label{part(iii)} \textbf{Optimizing the decision threshold:} In the situation of part \ref{part(ii)}, 
        and denoting the argument of the intersection in-between the means by $\eta^\star \in (\mu_1, \mu_2)$, 
        it holds that $\eta^\star = \argmin_{\eta \in \R} h(\eta)$, where $h:\R \to \R$ is given by
        \begin{equation}
              h(\eta) :=   
             \P \left( X_1 > \eta \, | \, X_1 \sim \mathcal{N}\left(\mu_1, \sigma_1^2\right) \right) +   
             \P \left( X_2 < \eta \, | \, X_2 \sim \mathcal{N}\left(\mu_2, \sigma_2^2\right) \right). \label{eq:part_iii_function_h}       
        \end{equation}
    In binary classification via \eqref{eq:general_linear_classifier_g}, this gives the optimal threshold $\eta^\star$ minimizing the misclassification probability \eqref{eq:error_of_misclassification} for balanced classes (omitting the factors $c_1=c_2=\tfrac{1}{2}$ in $h$); see Proposition \ref{prop:classification_accuracy_combined} and Figure \ref{fig:two_gaussians}.
    \end{enumerate} 
\label{lem:intersections_of_Gaussians}
\end{lemma}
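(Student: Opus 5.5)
The plan is to work throughout with the log-densities \eqref{eq:density_gaussian_logarithm}, since $f_1(x)=f_2(x)$ if and only if $\log f_1(x)=\log f_2(x)$, the densities being positive everywhere. Equating the two logarithms and multiplying by $2$ gives, after expanding $(x-\mu_\ell)^2$,
\begin{equation*}
q(x) := \frac{(x-\mu_2)^2}{\sigma_2^2}-\frac{(x-\mu_1)^2}{\sigma_1^2}+\log\frac{\sigma_2^2}{\sigma_1^2} = ax^2+bx+c = 0,
\end{equation*}
with $a,b,c$ exactly as in \eqref{eq:quadratic_formula_a}--\eqref{eq:quadratic_formula_c}. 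Moreover $q(x)=2\log\bigl(f_1(x)/f_2(x)\bigr)$, so $\operatorname{sign} q=\operatorname{sign}(f_1-f_2)$.

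\textbf{Part \ref{part(i)}.} I will read off the three cases directly from $q$.
\begin{itemize}[leftmargin=0.5cm]
\item If $\mu_1=\mu_2$ and $\sigma_1=\sigma_2$, then $q\equiv 0$, so every point is an intersection.
\item If $\sigma_1=\sigma_2$ and $\mu_1\neq\mu_2$, then $a=0$ and $b\neq 0$, and solving the linear equation gives \eqref{eq:equal_sigma_different_mu}.
\item If $\sigma_1\neq\sigma_2$, then $a\neq 0$ and the roots are given by the quadratic formula \eqref{eq:solution_quadratic_formula}. It remains to show that they are real and distinct, i.e.\ $b^2-4ac>0$. I will prove this by a sign-change argument rather than by computing the discriminant.
\end{itemize}
For the sign-change argument, assume w.l.o.g.\ $\sigma_1<\sigma_2$, so $a<0$ and $q\to-\infty$ as $x\to\pm\infty$ (the narrower density has lighter tails). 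On the other hand,
\begin{equation*}
f_1(\mu_1)=\frac{1}{\sqrt{2\pi\sigma_1^2}}>\frac{1}{\sqrt{2\pi\sigma_2^2}}\ge f_2(\mu_1),
\end{equation*}
so $q(\mu_1)>0$. The intermediate value theorem then gives one root on each side of $\mu_1$, hence two distinct real roots.

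\textbf{Part \ref{part(ii)}.} The natural route is again the intermediate value theorem on $[\mu_1,\mu_2]$. A root lies in $(\mu_1,\mu_2)$ as soon as $q(\mu_1)>0>q(\mu_2)$, i.e.\ $f_1(\mu_1)>f_2(\mu_1)$ and $f_2(\mu_2)>f_1(\mu_2)$. Since $q$ is a quadratic with $a\neq 0$, it has at most two roots. The two tails of $q$ have the same sign, so the number of sign changes is even; hence exactly one root lies in the interval and the other lies outside $[\mu_1,\mu_2]$.

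This is the step I expect to be the main obstacle. The endpoint sign condition does not follow from $\mu_1\neq\mu_2$ and $\sigma_1\neq\sigma_2$ alone. For example, with $\mu_1=0$, $\mu_2=0.1$, $\sigma_1=1$, $\sigma_2=2$ one has $f_1>f_2$ at both means, and both roots lie outside $[\mu_1,\mu_2]$. I would therefore first try to prove the statement under the explicit extra hypothesis
\begin{equation*}
f_1(\mu_1)>f_2(\mu_1)\quad\text{and}\quad f_2(\mu_2)>f_1(\mu_2),
\end{equation*}
i.e.\ that the means are sufficiently separated relative to the variance mismatch. This hypothesis holds in the regime of Figure~\ref{fig:two_gaussians}, and I would state it as the precise condition under which \ref{part(ii)} is valid.

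\textbf{Part \ref{part(iii)}.} Differentiate \eqref{eq:part_iii_function_h} using $\frac{d}{d\eta}\P(X_1>\eta)=-f_1(\eta)$ and $\frac{d}{d\eta}\P(X_2<\eta)=f_2(\eta)$, which gives
\begin{equation*}
h'(\eta)=f_2(\eta)-f_1(\eta).
\end{equation*}
Hence the critical points of $h$ are exactly the intersections $x_1,x_2$. Under the sign pattern from \ref{part(ii)}, $h'$ changes from negative to positive at $\eta^\star$, so $\eta^\star$ is a local minimum; at the other intersection $h'$ changes sign the opposite way, so that point is a local maximum. Finally, $h(\eta)\to 1$ as $\eta\to\pm\infty$. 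Since $h$ is strictly decreasing just before $\eta^\star$ and has no other local minimum, $h(\eta^\star)<1$, and therefore $\eta^\star$ is the global minimizer. For balanced classes the misclassification probability \eqref{eq:error_of_misclassification} equals $\tfrac12 h$, which gives the link to Proposition~\ref{prop:classification_accuracy_combined}.
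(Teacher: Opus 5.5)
Your diagnosis of part~(ii) is correct, and it exposes an error in the paper's own proof rather than a weakness in yours. The paper claims that $g=f_1-f_2$ is strictly monotone on $[\mu_1,\mu_2]$ with $g(\mu_1)>0>g(\mu_2)$ (``or vice versa''), and then applies the intermediate value theorem. The endpoint signs are only asserted, never proved. Your example ($\mu_1=0$, $\mu_2=0.1$, $\sigma_1=1$, $\sigma_2=2$, with intersections near $-1.39$ and $1.33$) violates $g(\mu_2)<0$. So (ii) is false as stated, and (iii), which presupposes an intersection in $(\mu_1,\mu_2)$, is vacuous in such cases.

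Two observations sharpen your repair:
\begin{itemize}
\item On $(\mu_1,\mu_2)$ one has $f_1'<0<f_2'$, so $g$ is strictly decreasing on $[\mu_1,\mu_2]$ whatever the variances. Hence the paper's ``increasing'' case and its pattern $g(\mu_1)<0<g(\mu_2)$ never occur, and at most one intersection lies in $[\mu_1,\mu_2]$, with no root counting needed. This also makes your hypothesis necessary as well as sufficient.
\item The inequality at the mean of the narrower density holds automatically, exactly as in your part~(i) argument. Your hypothesis therefore collapses to the single separation condition $(\mu_2-\mu_1)^2>\sigma_{\min}^2\log(\sigma_{\max}^2/\sigma_{\min}^2)$, where $\sigma_{\min},\sigma_{\max}$ are the smaller and larger standard deviations.
\end{itemize}

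For part~(i) your route differs from the paper's. The paper expands the discriminant to $\frac{4}{\sigma_1^2\sigma_2^2}\bigl[(\mu_1-\mu_2)^2+(\sigma_1^2-\sigma_2^2)\log(\sigma_1^2/\sigma_2^2)\bigr]$ and notes that both summands are nonnegative, which gives an explicit root spacing. Your intermediate value argument needs no algebra. It also shows that the mean of the narrower density always lies strictly between the two intersections, which is exactly the part of (ii) that holds unconditionally.

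For part~(iii) you match the paper, except that you use the sign change of $h'$ where the paper uses $h''(\eta^\star)>0$. One step needs tightening. ``Strictly decreasing just before $\eta^\star$'' gives $h(\eta^\star)<1$ only when the other intersection lies to the right of $\eta^\star$, i.e.\ when $\sigma_1>\sigma_2$. Otherwise, use that $h$ increases strictly on $(\eta^\star,\infty)$ towards $1$. More directly, $h(\eta^\star)-1=\P(X_2<\eta^\star)-\P(X_1<\eta^\star)<0$, since $\P(X_2<\eta^\star)<\tfrac12<\P(X_1<\eta^\star)$.

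The same sign analysis of $h'=f_2-f_1$ shows more, with no separation assumption at all: $h$ always has a unique global minimizer, namely the intersection where $f_1-f_2$ switches from positive to negative. This minimizer lies in $(\mu_1,\mu_2)$ precisely under the separation condition, which is the correct form of~(iii).
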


Let us proceed with the proof of Lemma \ref{lem:intersection_of_normal_distribution}. Note that this is a basic result and we do not claim any  
any originality of this work; partly we follow \citep{StackExchange2017}; see also \citep{das2021method} as a major reference in this regard. Still, we provide the following proof for completeness and easy future reference, as we are not aware of an easily citable reference.

\begin{proof}[Proof of Lemma \ref{lem:intersection_of_normal_distribution}]
Even though the proof is elementary, let us point out once more that for the proof of part \ref{part(i)} we follow \citep{StackExchange2017}.
Towards finding the intersections of $f_1$ and $f_2$, recall they are positive everywhere (\ie $f_1(x) > 0$ and $f_2(x)>0$
for all $x \in \R$), so that
\begin{equation}
    f_1(x) = f_2(x) 
\quad \Longleftrightarrow \quad
    \log \left(f_1(x)\right) - \log \left(f_2(x)\right) = 0.
    \label{eq:density_gaussian_gen_root_point_eq}
\end{equation}

We solve the equation on the right hand side of \eqref{eq:density_gaussian_gen_root_point_eq}. 
Multiplying by $2$ and inserting \eqref{eq:density_gaussian_logarithm} twice,

\begin{align*}
    0 
= & -   \log(2\pi\sigma_1^2) - \frac{(x - \mu_1)^2}{\sigma_1^2} 
    +  \log(2\pi\sigma_2^2) + \frac{(x - \mu_2)^2}{\sigma_2^2}  \\
= &  \log \left( \frac{\sigma_2^2}{\sigma_1^2} \right)
     + \frac{x^2 - 2 x\mu_2 + \mu_2^2}{\sigma_2^2} 
     - \frac{x^2 - 2 x\mu_1 + \mu_1^2}{\sigma_1^2}  \\
= &    ax^2 + bx + c  ,   
\end{align*}
we have arrived at a quadratic equation characterized by the factors $a,b,c$ which depend on 
the different means and variances,
\begin{align*}
a & = 1/\sigma_2^2 - 1/\sigma_1^2 , \\
b & = 2(\mu_1/\sigma_1^2 - \mu_2/\sigma_2^2 ) ,\\
c & = \log ( \sigma_2^2 / \sigma_1^2)  + \mu_2^2/\sigma_2^2 - \mu_1^2/\sigma_1^2.
\end{align*}

For $a=0$, in case of $\sigma_1^2 = \sigma_2^2$, the (unique!) solution when $\mu_1 \neq \mu_2$ (if $\mu_1 = \mu_2$, the distributions and density functions trivially coincide) is simply given by $x=-c/b$, and a straightforward calculation indeed yields \eqref{eq:equal_sigma_different_mu}.
Finally, let us consider the general case $\sigma_1^2 \neq \sigma_2^2$, such that $a \neq 0$, when the general solution formula is given by
\eqref{eq:solution_quadratic_formula}. We show that indeed always the two solutions exist.
For the discriminant, \ie the expression under the square root in \eqref{eq:solution_quadratic_formula}, a simple computation yields
\begin{align*}
    \Delta 
= & b^2 - 4ac \\
= & 4(\mu_1/\sigma_1^2 - \mu_2/\sigma_2^2 )^2 - 4 (1/\sigma_2^2 - 1/\sigma_1^2 ) 
    (\log ( \sigma_2^2 / \sigma_1^2)  + \mu_2^2/\sigma_2^2 - \mu_1^2/\sigma_1^2) \\
= & 4 \left[\mu_1^2/\sigma_1^4 - 2 \mu_1\mu_2/\sigma_1^2\sigma_2^2     + \mu_2^2/\sigma_2^4  
-  (1/\sigma_2^2 - 1/\sigma_1^2 ) \log ( \sigma_2^2 / \sigma_1^2) \right. \\
    & \left.  \qquad -  \mu_2^2/\sigma_2^4 +  \mu_1^2/\sigma_1^2 \sigma_2^2  +  \mu_2^2/\sigma_1^2 \sigma_2^2 -  \mu_1^2/\sigma_1^4  \right] \\
= & 4 \left[      
     (\mu_1 - \mu_2)^2 / \sigma_1^2 \sigma_2^2 
       +  (\sigma_1^2 - \sigma_2^2 )/ \sigma_1^2 \sigma_2^2  \log ( \sigma_1^2 / \sigma_2^2)
   \right] \\
= & \frac{4}{\sigma_1^2 \sigma_2^2} \left[      
     (\mu_1 - \mu_2)^2 + (\sigma_1^2 - \sigma_2^2 ) \log ( \sigma_1^2 / \sigma_2^2)
   \right] .
\end{align*}
Now note that $(\sigma_1^2 - \sigma_2^2 )$ and $\log ( \sigma_1^2 / \sigma_2^2)$ always have the same sign, making their product and thus the entire expression of $\Delta $ non-negative, \ie $\Delta \geq 0$; further $\Delta = 0$
holds if and only if $\mu_1 = \mu_2$ and $\sigma_1^2 = \sigma_2^2$ with infinitely many solutions, as remarked above.
In every other case, there are always two solutions which can be found by \eqref{eq:solution_quadratic_formula}.

Next, let us move to the proof of part \ref{part(ii)}. Recall that w.l.o.g. we assumed that $\mu_1 < \mu_2$; further let us remark 
that a point of intersection, \ie some $x \in \R$ with $f_1(x) = f_2(x)$ is equivalent to a root point of the function $g$ defined as the difference of the functions $g(x) = f_1(x) - f_2(x)$.
Now, in case of $\sigma_1^2 < \sigma_2^2$ (or $\sigma_1^2 > \sigma_2^2$), the function $g$ is strictly monotonically deceasing (increasing)
$g(\mu_1) > 0$ and $g(\mu_2) < 0$ (or vice versa, \ie $g(\mu_1) < 0$ and $g(\mu_2) > 0$). 
By the mean value theorem and strict monotonicity, it follows that $g$ has a unique root point in $(\mu_1, \mu_2)$, which is the unique point of intersection of the functions $f_1$ and $f_2$ within this interval. 
The remaining intersection lies in either of the intervals $(- \infty, \mu_1)$ or $(\mu_2, \infty)$.
More precisely (still assuming $\mu_1 < \mu_2$; otherwise, just switch the roles of $f_1$ and $f_2$) and distinguishing between the cases $\sigma_1^2 < \sigma_2^2$ and $\sigma_1^2 > \sigma_2^2$, by comparing the derivatives of the density function we clearly find for the solutions $x_{1,2}$ of \eqref{eq:solution_quadratic_formula} that
\begin{align}
    \sigma_1^2 < \sigma_2^2 \quad & \Longrightarrow \quad x_1 < \mu_1 < x_2 < \mu_2, \label{eq:sigma_1<sigma_2} \\
    \sigma_2^2 < \sigma_1^2 \quad & \Longrightarrow \quad \mu_1 < x_1 < \mu_2 < x_2.  \label{eq:sigma_2<sigma_1} 
\end{align}
Finally, let us turn to part \ref{part(iii)}. By elementary properties of the normal distribution, we have for the function $h$ from \eqref{eq:part_iii_function_h} that
\begin{align*}
    h(\eta) 
= & \P \left( X_1 > \eta \, | \, X_1 \sim \mathcal{N}\left(\mu_1, \sigma_1^2\right) \right) + 
    \P \left( X_2 < \eta \, | \, X_2 \sim \mathcal{N}\left(\mu_2, \sigma_2^2\right) \right) \\
= & \frac{1}{\sqrt{2\pi \sigma_1^2}} \int_{\eta}^{\infty} \exp \left({-\frac{(x-\mu_1)^2}{2\sigma_1^2}}\right) \, \dx + 
    \frac{1}{\sqrt{2\pi \sigma_2^2}}\int_{-\infty}^{\eta} \exp\left({-\frac{(x-\mu_2)^2}{2\sigma_2^2}}\right) \, \dx \\
= & 1 - 
    \frac{1}{\sqrt{2\pi \sigma_1^2}} \int_{-\infty}^{\eta} \exp \left({-\frac{(x-\mu_1)^2}{2\sigma_1^2}}\right) \, \dx + 
    \frac{1}{\sqrt{2\pi \sigma_2^2}}\int_{-\infty}^{\eta} \exp\left({-\frac{(x-\mu_2)^2}{2\sigma_2^2}}\right) \, \dx .
\end{align*}
Next, by the fundamental theorem of calculus, computing the derivative of $h$ and putting $h'(\eta) = -f_1(\eta) + f_2(\eta) = 0$ 
simply leads to the equation $f_1(\eta) = f_2(\eta)$, or any of its equivalent formulations that we have already seen in \eqref{eq:density_gaussian_gen_root_point_eq} and solved above with the two solutions $\eta = x_1$ and $\eta = x_2$.
We still have to show that the one solution that falls into the interval $(\mu_1, \mu_2)$ is indeed the minimizer of $h$.
Therefore, we compute the second derivative $h''$, and with the help of \eqref{eq:density_gaussian_derivative} we obtain
\begin{align*}
    h''(\eta) 
= & -f_1'(\eta) + f_2'(\eta) \\
= &  \frac{(\eta-\mu_1)}{\sigma_1^2} \cdot \frac{1}{\sqrt{2\pi \sigma_1^2}} 
    \exp\left(-\frac{(\eta - \mu_1)^2}{2\sigma_1^2}\right)  -
    \frac{(\eta-\mu_2)}{\sigma_2^2} \cdot \frac{1}{\sqrt{2\pi \sigma_2^2}} 
    \exp\left(-\frac{(\eta - \mu_2)^2}{2\sigma_2^2}\right)  .
\end{align*}
Evaluating $h''$ at $\eta^\star$ (which may be either $x_1$ or $x_2$) from part \ref{part(ii)}, and just letting 
$f(\eta^\star) := f_{\mu_1, \sigma_1^2}(\eta^\star) = f_{\mu_2, \sigma_2^2}(\eta^\star)$, yields
\begin{align*}
         h''(\eta^\star) 
= & \frac{(\eta^\star-\mu_1)}{\sigma_1^2} f_{\mu_1, \sigma_1^2}(\eta^\star) -
    \frac{(\eta^\star-\mu_2)}{\sigma_2^2} f_{\mu_2, \sigma_2^2}(\eta^\star)  \\
= & \left[ \frac{(\eta^\star-\mu_1)}{\sigma_1^2} - \frac{(\eta^\star-\mu_2)}{\sigma_2^2}  \right] f(\eta^\star) \\
> & 0, 
\end{align*}
since $\mu_1 < \eta^\star < \mu_2$ and $f(\eta^\star) > 0$. As $h''(\eta^\star) > 0$, we conclude that $h$ has a strict local minimum in $\eta^\star$.
To finish the proof, we need to argue that this is the only minimum and the remaining point of intersection cannot be a minimum. 
Indeed, note that $\lim_{\eta \to \pm \infty}  h(\eta) = 1$; therefore, if the remaining critical point would be a local minimum as well, the function $h$ would need to have a local maximum, too. However, this cannot be the case as only two critical points of $h$ exist, namely at the two intersections
of the densities $f_1$ and $f_2$. Therefore, the minimum of $h$ attained in $\eta^\star$ must be a global minimum.
\end{proof}

\begin{remark}
Note that changing the roles of $f_1$ and $f_2$ only changes the signs of the coefficients
$a,b, c$ from \eqref{eq:quadratic_formula_a}, \eqref{eq:quadratic_formula_b} and \eqref{eq:quadratic_formula_c}.
Geometrically, if the original parabola $ax^2 + bx + c$ opened upward, the new parabola opens downward (and vice versa), 
but has exactly the same roots $x_{1,2}$ as given in \eqref{eq:solution_quadratic_formula}.  
\end{remark}

\clearpage
\subsection{Numerical Experiments}

We provide numerical experiments both for synthetic data (Fig. \ref{fig:class_acc_syn_data}) and real data (Fig. \ref{fig:classification_accuracy}), illustrating the above approach to predict the classification accuracy.

\begin{figure}[h]
    \begin{subfigure}{0.45\textwidth}
        \includegraphics[width=\textwidth]{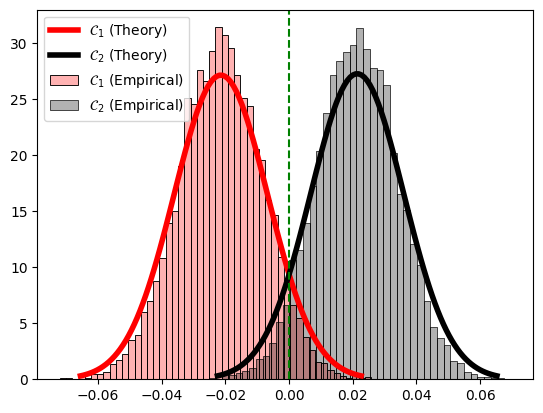}
        \caption{Ridge Regression}
        \label{fig:rr_syn}
    \end{subfigure}
    \hfill
    \begin{subfigure}{0.45\textwidth}
        \includegraphics[width=\textwidth]{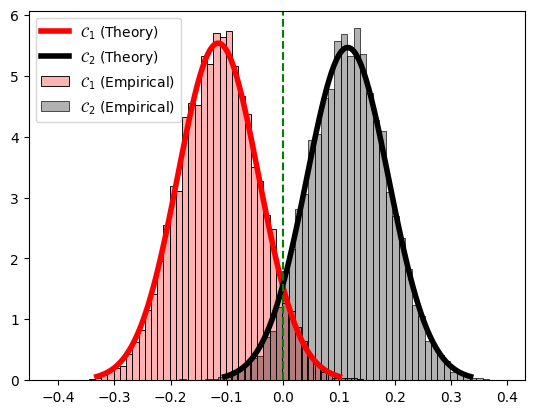}
     \caption{PatternCAV}
     \label{fig:PatternCAV_syn}
    \end{subfigure}
    \vfill
    \begin{subfigure}{0.45\textwidth}
        \includegraphics[width=\textwidth]{figures/syn_Fast_CAV.png}
        \caption{FastCAV}
             \label{fig:FastCAV_syn}
    \end{subfigure}
    \hfill
    \begin{subfigure}{0.45\textwidth}
        \includegraphics[width=\textwidth]{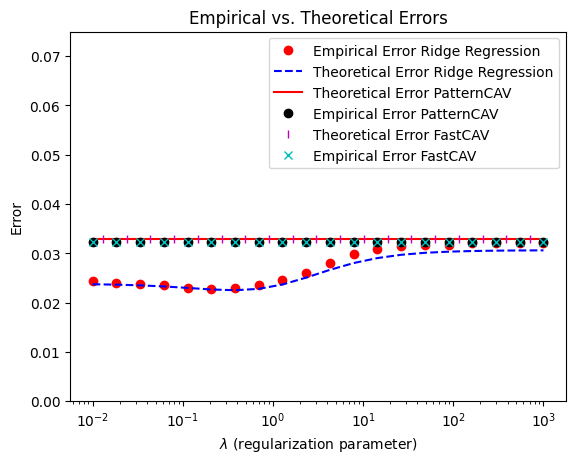}
        \caption{Comparison}
        \label{fig:three_CAV_comparison}
    \end{subfigure}
    \caption{Comparison of theoretical prediction and histogram of empirical simulation of the classification score $g(\vx) = \vw_\CAV^\top \vx$, for 
         ridge regression \ref{fig:rr_syn}, \textit{PatternCAV} \ref{fig:PatternCAV_syn}, \textit{FastCAV}
         \ref{fig:FastCAV_syn}, and comparison of the classification error over different values of $\lambda$ for the ridge regression (while simply repeating the values for the ridge \textit{PatternCAV} and \textit{FastCAV}) in \eqref{fig:three_CAV_comparison}, showing the quality of the theoretical prediction, the agreement between \textit{PatternCAV} and \textit{FastCAV} (and also ridge regression for sufficiently large $\lambda$). The data follows a multivariate normal distribution for two classes with randomly generated $\vmu_1 \in \R^d$ and $\vmu_2 = -\vmu_1$. The covariances $\mSigma_1, \mSigma_2$ are Toeplitz matrices, defined by $ (\mSigma_k)_{i,j} = \alpha_k^{|i-j|}$ with the parameters
         $\alpha_1 = 0.2$ and $\alpha_2 = 0.4$.
             The experiments have been performed in \textit{GoogleColab}.}
         \label{fig:class_acc_syn_data}
\end{figure}

\begin{figure}[h]
    \centering
    \includegraphics[width=\textwidth]{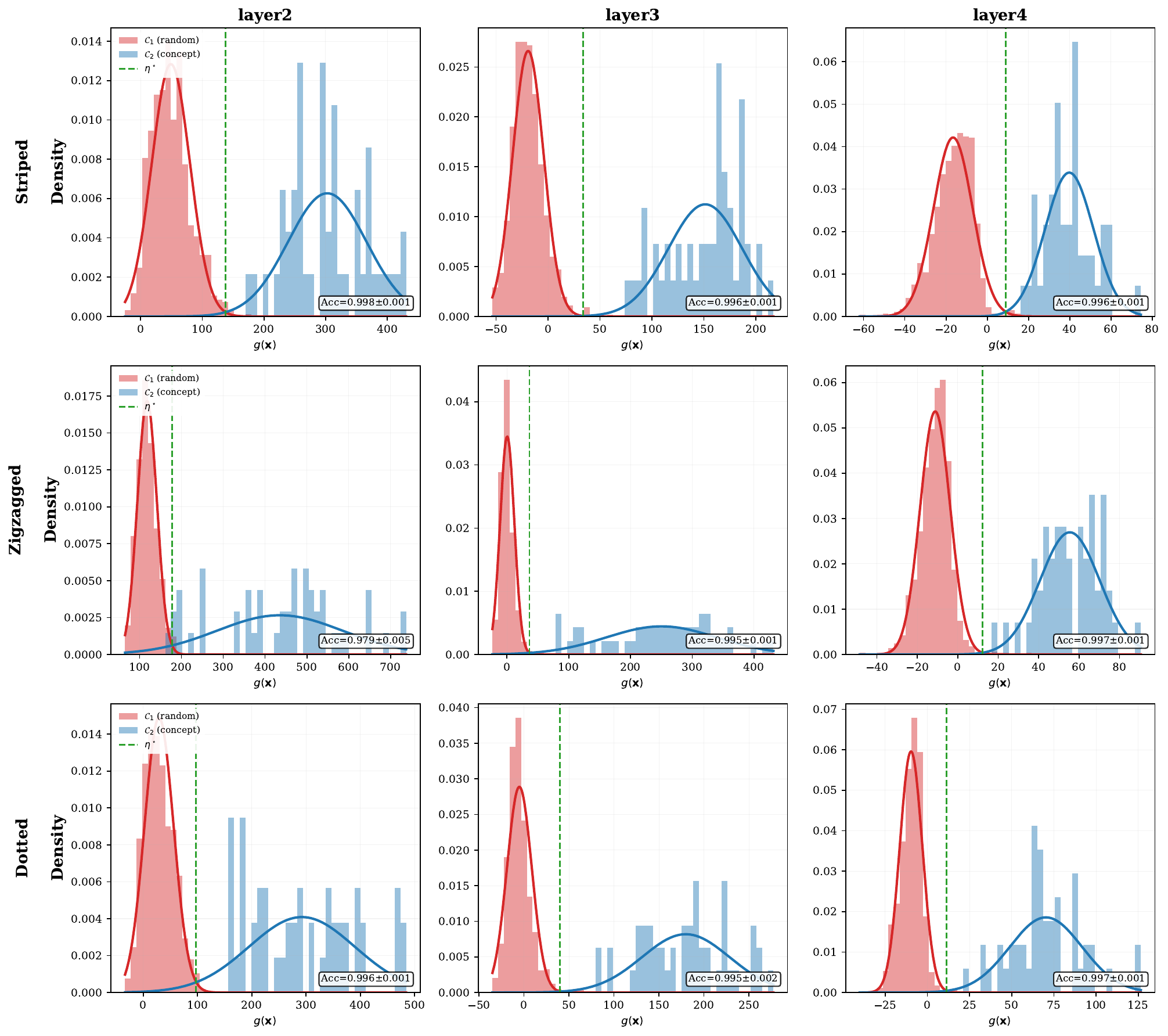}
    \caption{Classification score distributions of \textit{PatternCAV} on 
    \textit{ResNet-50} 
    (\textbf{rows:} the concepts \textit{striped, zigzagged} and \textit{dotted}; \textbf{columns:} layer2 to layer4). 
    Histograms show the empirical density of $g(\vx) = \vw_\CAV^\top \vx$, where $\vx$
    denotes the laten activation of either
    \textit{non-concept} ($\mathcal{C}_1$, \textcolor{red}{red}, 1000 samples) or \textit{concept} 
    ($\mathcal{C}_2$, \textcolor{blue}{blue}, 50 samples) classes; solid curves are \textit{fitted Gaussians} (theoretical prediction in advance --- like in Fig. \ref{fig:class_acc_syn_data} --- is difficult to to the very large dimension of the layers, e.g. regarding reliable covariance estimation). 
    The dashed green line is the optimal threshold $\eta^\star$ from the general 
    quadratic formula of Lemma~\ref{lem:intersections_of_Gaussians}. 
    The accuracy averaged over 20 runs ($0.979$--$0.998$)
    confirms reliable concept encoding across all layers. While we observe by visual inspection a close match of the \textit{non-concept} (\textcolor{red}{red}) data histogram to the fitted Gaussian curve, the Gaussian behavior is clearly less pronounced for the  \textit{concept} data (\textcolor{blue}{blue}), likely due to the much lower sample size.}
    \label{fig:classification_accuracy}
\end{figure}

\vspace{25cm}
 
 \paragraph{Acknowledgement.} This work was supported by the German Research Foundation (DFG) as research unit DeSBi [KI-FOR 5363] (459422098), by the Research Council of Finland (Decision \#363624) as \textit{A Mathematical Theory of Trustworthy Federated Learning (MATHFUL)}, by the Jane and Aatos Erkko Foundation (Decision \#A835) as \textit{A Mathematical Theory of Federated Learning (TRUST-FELT)}, and by Business Finland as \textit{Forward-Looking AI Governance in Banking \& Insurance (FLAIG)}.

\clearpage %

\newpage
\bibliographystyle{unsrtnat}
\bibliography{cav_references}  


\newpage
\appendix

\begin{center}
    {\LARGE \bf Supplementary Material}
\end{center}

\setcounter{equation}{0}
\setcounter{figure}{0}
\setcounter{table}{0}
\setcounter{section}{0}
\renewcommand{\theequation}{A.\arabic{equation}}
\renewcommand{\thefigure}{A.\arabic{figure}}

\section{Mathematical Preliminaries}
\label{sec:math_preliminaries}

The supllementary material is structured as follow.
Sec.~\ref{sec:math_preliminaries} collects useful mathematical preliminaries on the sigmoid function and from probability theory.
Sec.~\ref{app:CAV_distributions} provides the technical details and proofs supporting the results of Sec.~\ref{sec:cavs}; in particular, we derive the distributions of the CAVs obtained from \textit{FastCAV} and \textit{PatternCAV}, give a detailed treatment of the \textit{ridge regression} case, and discuss related topics such as the asymptotic normality of sensitivity scores and the variability of CAVs.

\subsection{The Sigmoid Function}
\label{sec:app_sigmoid_function}

\paragraph{Definition and Convergence.}
The logistic sigmoid function $s: \R \to (0,1)$, along with its generalized scaled variant $s_{\alpha}: \R \to (0,1)$ which incorporates a positive scaling parameter $\alpha > 0$ to control the steepness, are defined as follows by (recall also Figure \ref{fig:sigmoid_indicator_fct})
\begin{equation}
    s(x) =  \frac{1}{1 + \exp(- x)}, 
    \qquad \mathrm{and} \qquad 
    s_{\alpha}(x) = s(\alpha \cdot x) = \frac{1}{1 + \exp(-\alpha x)}. \label{eq:sigmoid_definition}
\end{equation}
With this convention, we have $s_{1} = s$.
The scaled sigmoid function $s_{\alpha}$ converges pointwise to
\begin{equation}
    \lim_{\alpha \to \infty} s_{\alpha}(x) =  s_\infty(x) := 
    \begin{cases} 
      1 & \text{if } x > 0 \\
      0.5 & \text{if } x = 0 \\
      0 & \text{if } x < 0,
    \end{cases} \label{eq:sigmoid_heaviside_limit}
\end{equation}
the so-called \textit{Heaviside step function}. This convergence justifies the use of the scaled sigmoid $s_{\alpha}$ as a smooth approximation for the indicator function $\mathds{1}_{\{x > 0\}}$.

\paragraph{Derivatives and Boundedness.}
The derivatives of the (scaled) sigmoid function can be expressed in terms of the functions,
\begin{equation}
    s'(x) = s(x) (1 - s(x)),   
    \qquad \mathrm{and} \qquad 
    s_{\alpha}'(x) = \alpha \cdot s_{\alpha}(x) (1 - s_{\alpha}(x)). \label{eq:sigmoid_derivative}
\end{equation}
Note that the maximum range of $s_{\alpha}$ is the interval $(0, 1)$, 
such that the derivative $\alpha \cdot s_{\alpha}(x)(1 - s_{\alpha}(x))$ is bounded. Therefore, 
\begin{equation}\label{eq:sigmoid_derivative_bound} 
        \sup_{x \in \R} | s_{\alpha}'(x) | 
    =  \alpha \cdot \max_{z \in [0,1]} z(1-z) 
    =   \frac{\alpha}{4},  
 \qquad \mathrm{and} \qquad 
  \sup_{x \in \R} | s(x) | = \frac{1}{4}. 
\end{equation}

Consequently, $s_{\alpha}$ is $\tfrac{\alpha}{4}$Lipschitz-continuous;
in particular, $s = s_1$ is $\tfrac{1}{4}$-Lipschitz-continuous.

\subsection{Basic and Advanced Tools from Probability Theory}
\label{sec:probability}

\paragraph{Variances of Averages of Random Variables.} Let us briefly recall the following very basic fact for the variance of the average of i.i.d. random variables, which turns out useful e.g. in the context of Multi-TCAV in Sec. \ref{subsec:sensitivity_TCAV_and_multi_run_TCAV} as it describes the decay of variances with increasing sample size.

\begin{lemma}\label{lem:variance_empirical_average}
Consider a real-valued random variable $X: \Omega \to \R$ on a probability space $(\Omega, \mathcal{E}, \P)$ with finite variance $\Var(X) < \infty$. Let $X_1, X_2, \dots, X_N$ be $N$ i.i.d. copies of $X$. We assume $\P(X = 0) = 0$. Let $S_N$ be the empirical average
\begin{equation}
    S_N = \frac{1}{N} \sum_{i=1}^N X_i.
    \label{eq:empirical_average}
\end{equation}
Then, we also shortly write for the variance of the average $S_N$ in \eqref{eq:empirical_average} that it satisfies $\Var(S_N) = \Var(X)/N = \mathcal{O}\left(1/N\right)$.
\end{lemma}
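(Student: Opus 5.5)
The plan is to expand the variance of the empirical average $S_N$ directly, using bilinearity of the covariance together with the independence of the copies. The hypothesis $\P(X=0)=0$ plays no role in this computation and can be ignored. Finite variance of $X$ implies finite second moments of each $X_i$, so every quantity below is well defined.

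\textbf{Step 1: pull out the scaling.} Since $\Var(cY)=c^2\Var(Y)$ for any constant $c$, we get
\begin{equation*}
\Var(S_N)=\frac{1}{N^2}\Var\Bigl(\sum_{i=1}^N X_i\Bigr).
\end{equation*}

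\textbf{Step 2: expand the variance of the sum.} Writing it as a double sum of covariances gives
\begin{equation*}
\Var\Bigl(\sum_{i=1}^N X_i\Bigr)=\sum_{i=1}^N\Var(X_i)+\sum_{i\neq j}\Cov(X_i,X_j).
\end{equation*}
By independence, $\E[X_iX_j]=\E[X_i]\,\E[X_j]$ for $i\neq j$. Hence every cross covariance vanishes. Because the $X_i$ are identically distributed copies of $X$, each diagonal term equals $\Var(X)$.

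\textbf{Step 3: conclude.} Combining the two steps yields $\Var(S_N)=N\Var(X)/N^2=\Var(X)/N$. Treating $\Var(X)<\infty$ as a constant independent of $N$ gives the rate $\mathcal{O}(1/N)$.

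There is no real obstacle here. The only point that needs care is that pairwise uncorrelatedness, which follows from independence together with finite second moments, is exactly what kills the cross terms.
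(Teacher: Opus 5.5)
Your proof is correct and follows the paper's argument: pull out the factor $1/N^2$, use independence to reduce the variance of the sum to the sum of the variances (you make the vanishing cross-covariances explicit, where the paper simply cites independence), and then use identical distribution to obtain $N\Var(X)$. You are also right that the hypothesis $\P(X=0)=0$ plays no role; the paper's proof does not use it either.
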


\begin{proof} For completeness let us provide the proof, which is elementary and can be found in many textbooks.
By the properties of the variance and due to the independence of the random variables $X_1, X_2, \dots, X_N$, we have
\begin{align*}
    \Var(S_N) 
= & \Var \left( \frac{1}{N} \sum_{i=1}^N X_i \right) 
=   \frac{1}{N^2} \Var \left( \sum_{i=1}^N X_i \right)  \\
= & \frac{1}{N^2} \sum_{i=1}^N \Var(X_i)  
=   \frac{1}{N^2} (N \cdot \Var(X)) 
=   \frac{\Var(X)}{N}.
\end{align*}
The final equality follows from the fact that all $X_i$ are identically distributed, implying $\Var(X_i) = \Var(X)$ for all $i$.
\end{proof}

Let us furthermore recall the classical \textit{Popoviciu's Inequality} \citep{popoviciu1935equations}
from probability theory, which provides a general upper bound for the variance of bounded random variables. 

\begin{theorem}[Popoviciu's Inequality]
\label{thm:popoviciu}
Let $X$ be an (essentially) bounded random variable with $a \leq X \leq b$ almost surely for some $a,b \in \R$. 
Then, the variance of $X$ is bounded as follows by
\begin{equation}
\Var(X)  \leq \frac{1}{4}(b - a)^2 .    
\label{eq:popoviciu}
\end{equation}
Equality in \eqref{eq:popoviciu} holds if and only if $\P(X=a)=\P(X=b)=\tfrac{1}{2}$, \ie when the probability mass is evenly distributed between the extreme values. 
\end{theorem}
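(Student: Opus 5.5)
The plan is to prove the bound first, by reducing to the two-point case through a centering trick, and then to settle the equality characterization separately. Write $m := \tfrac{a+b}{2}$ for the midpoint of $[a,b]$. Since $a \leq X \leq b$ almost surely, $X$ is integrable and has finite second moment, so $\Var(X)$ exists. The key observation is that the variance is the smallest mean-squared deviation over all constants:
\begin{equation*}
\Var(X) = \E\left[(X - \E[X])^2\right] = \min_{c \in \R} \E\left[(X-c)^2\right] \leq \E\left[(X - m)^2\right].
\end{equation*}
To justify this I will expand $\E[(X-c)^2] = \Var(X) + (\E[X]-c)^2$.

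For the bound itself, note that almost surely $|X - m| \leq \tfrac{b-a}{2}$, so $(X-m)^2 \leq \tfrac{(b-a)^2}{4}$. Taking expectations gives $\Var(X) \leq \E[(X-m)^2] \leq \tfrac{1}{4}(b-a)^2$, which is \eqref{eq:popoviciu}. I will treat the degenerate case $a=b$ separately, where both sides are zero. For the equality statement I then assume $a<b$.

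For sufficiency: if $\P(X=a)=\P(X=b)=\tfrac12$, then $\E[X]=m$. Moreover $(X-m)^2 = \tfrac{(b-a)^2}{4}$ almost surely, so equality holds.

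For necessity, equality in the chain forces both inequalities to be tight.
\begin{itemize}
\item The first inequality is tight only if $(\E[X]-m)^2 = 0$, that is, $\E[X]=m$.
\item The second inequality is tight only if the nonnegative random variable $\tfrac{(b-a)^2}{4} - (X-m)^2$ has expectation zero. It therefore vanishes almost surely, so $X \in \{a,b\}$ almost surely.
\end{itemize}
Writing $p = \P(X=b)$, we get $\E[X] = a + p(b-a) = m$, which forces $p = \tfrac12$.

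The only delicate point is this equality case, specifically the step "nonnegative with zero mean implies zero almost surely". I will state it explicitly. I also need the caveat $a<b$, since for $a=b$ the "if and only if" degenerates: the two events coincide and carry probability one, not $\tfrac12$ each. I will handle that by assuming $a<b$ in the equality statement.
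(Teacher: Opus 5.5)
The paper gives no proof of this theorem. It only recalls Popoviciu's inequality as a classical result with a citation, so the only thing to compare against is the statement itself.

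Your argument is the standard one and is correct and complete:
\begin{itemize}
\item the identity $\E[(X-c)^2]=\Var(X)+(\E[X]-c)^2$ with $c=\tfrac{a+b}{2}$;
\item the pointwise bound $(X-c)^2\le \tfrac{(b-a)^2}{4}$ almost surely;
\item for the equality case, forcing both inequalities to be tight. This gives $\E[X]=\tfrac{a+b}{2}$ and $X\in\{a,b\}$ almost surely, hence $p=\tfrac12$.
\end{itemize}

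Your caveat about $a=b$ is a genuine correction to the statement as written. In that case $X=a$ almost surely and equality in \eqref{eq:popoviciu} holds trivially. Yet $\P(X=a)=\P(X=b)=1\neq\tfrac12$, so the ``if and only if'' needs the hypothesis $a<b$. Note also that $a\le b$ is automatic, since the event $\{a\le X\le b\}$ has probability one.

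An equally short alternative starts from $\E[(X-a)(X-b)]\le 0$. This yields the intermediate bound $\Var(X)\le (b-\E[X])(\E[X]-a)$, and then \eqref{eq:popoviciu} follows by AM--GM. Your centering argument is slightly more convenient for reading off the equality case directly.
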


If $[a,b] = [0,1]$, the upper bound is $\tfrac{1}{4}$, and it is attained for a symmetric Bernoulli distribution; see Figure \ref{fig:variance_illustration}. Therefore, this is is trivially
also the generic upper bound \eqref{eq:variance_TCAV_trivial_upper_bound} for the variance of any TCAV score, understood as random variable taking values in $[0,1]$.

\paragraph{The Logit-Normal Distribution.}
For a Gaussian random variable $X \sim \mathcal{N}(\mu, \sigma^2)$, consider $s_{\alpha}(X)$, the (scaled, by $\alpha>0$) so-called 
\textit{logit-normal distribution} of $X$ passed through a (scaled) logistic sigmoid function \eqref{eq:sigmoid_definition}, 
which naturally appears in the context of $\alpha$-$\TCAV$ in this paper. 
Note that $s_{\alpha}(X) = s (\alpha \cdot X) =  s (Y)$ with $Y \sim \mathcal{N}(\alpha \mu, \alpha^2 \sigma^2)$, such that
essentially it is sufficient to consider the standard sigmoid function $\sigma$, but we explicitly use $\alpha$ as an additional parameter.
While it is known that none of its moments has a closed-form solution \citep[Section 2]{mackay1992evidence}, 
good approximations have been provided in the literature.
We use the following approximations to compute the expectations $\E[s_{\alpha}(X)]$ and $\E[s_{\alpha}^2(X)]$ (and thus, obtain an approximation for the variance as well which we state for convenience).
\begin{align}
\E_{X \sim \mathcal{N}(\mu, \sigma^2)} \left[ s_\alpha(X) \right]
& \approx s \left( \frac{\alpha \mu}{\sqrt{1 + \tau_1 \alpha^2 \sigma^2}} \right)
=: m(\mu, \sigma^2, \alpha) =: m , \label{eq:logit-normal_approx_mean} \\
\E_{X \sim \mathcal{N}(\mu, \sigma^2)} \left[ s_\alpha^2(X) \right]
& \approx m^2 + m(1 - m) \left( 1 - \frac{1}{\sqrt{1 + \tau_2 \alpha^2 \sigma^2}} \right), \label{eq:logit-normal_approx_second_moment} \\
\Var_{X \sim \mathcal{N}(\mu, \sigma^2)} \left[ s_\alpha(X) \right]
& \approx m(1 - m) \left( 1 - \frac{1}{\sqrt{1 + \tau_2 \alpha^2 \sigma^2}} \right), \label{eq:logit-normal_approx_variance} 
\end{align}
that have been provided in the literature, where the two constants $\tau_1$ in \eqref{eq:logit-normal_approx_mean} 
and $\tau_2$ both in \eqref{eq:logit-normal_approx_second_moment} and in \eqref{eq:logit-normal_approx_variance} are given by
\begin{equation}
    \tau_1  = \frac{\pi}{8},
    \qquad \mathrm{and} \qquad
    \tau_2  = 0.358. \label{eq:tau_1_and_tau_2}
\end{equation}
The approximation of the mean \eqref{eq:logit-normal_approx_mean} and the constant $\tau_1$ in \eqref{eq:tau_1_and_tau_2} are due to
\citep[Section 2]{mackay1992evidence}; see also \citep[Section 4.5.2]{bishop2006pattern}, and \citep{spiegelhalter1990sequential} for
an earlier discussion of the problem, and \citep{kristiadi2020being, kristiadi2021infinite} for more recent applications in Bayesian deep learning.
For the second moment \eqref{eq:logit-normal_approx_second_moment} and the variance \eqref{eq:logit-normal_approx_variance}, 
and the constant $\tau_2$ appearing there, we rely on \citep[eq. (23)]{daunizeau2017semianalyticalapproximationsstatisticalmoments}; see also \citep{daunizeau2014vba}. 
Let us comment on a few observations; we begin by analyzing the influence of $\sigma^2$, while assuming $\mu$ and $\alpha$ to be fixed.
\begin{itemize}[leftmargin=0.5cm]
    \item As $\sigma^2 \to \infty$, the mean approximation \eqref{eq:logit-normal_approx_mean} converges to $s(0)=\tfrac{1}{2}$,
          as the density function of the normal distribution \textit{flattens out} (independently of $\mu$ and $\alpha$).
    \item The second moment \eqref{eq:logit-normal_approx_second_moment} collapses to $s^2(\alpha\mu)$ as $\sigma^2 \to 0$, 
          and as $\sigma^2 \to \infty$, it converges to $\tfrac{1}{2}$.
    \item As pointed out by \citep[p. 15]{daunizeau2017semianalyticalapproximationsstatisticalmoments}, 
          the variance approximation \eqref{eq:logit-normal_approx_variance} is a monotonically increasing function of $\sigma^2$.
          It is lower-bounded by $0$ (in the limit $\sigma^2 \to 0$), and upper-bounded by $\tfrac{1}{4}$ 
          in the limit $\sigma^2 \to \infty$ (when becoming Bernoulli-like), 
          aligned with classical \textit{Popoviciu's inequality} \citep{popoviciu1935equations} which we recall in Theorem \ref{thm:popoviciu}.
\end{itemize} 

In the context of this paper, we are especially interested in the behavior with regard to $\alpha >0$ and fixed mean $\mu$ and variance $\sigma^2$.
In the limit $\alpha \to \infty$, $s_\alpha$ converges to the \textit{Heaviside step function} $s_\infty$ from                             \eqref{eq:sigmoid_heaviside_limit}, which differs from the $\mathds{1}_{\{x>0\}}$ only in the origin.
We show that the above estimates of first and second moment are highly consistent when taking the limit $\alpha \to \infty$.

\begin{itemize}[leftmargin=0.5cm]
    \item In this limiting case, the expectation corresponds to the Gaussian tail probability $\P(X>0) = \Phi(\mu/\sigma)$, 
          where $ \Phi$ denotes the cdf of the standard normal distribution, since, by standardizing to $\mathcal{N}(0,1)$ in the final steps,
          \begin{align}
                \E_{X \sim \mathcal{N}(\mu, \sigma^2)} \left[\mathds{1}_{X>0}\right] 
            =  & \E_{X \sim \mathcal{N}(\mu, \sigma^2)} \left[s_\infty(X)\right] \nonumber \\
            =  & \P (X > 0) 
            =   \P \left( \frac{X-\mu}{\sigma} > \frac{-\mu}{\sigma} \right) = \Phi\left( \frac{\mu}{\sigma} \right).
            \label{eq:limit_case_expectation}
          \end{align}
          Our mean approximation \eqref{eq:logit-normal_approx_mean} is consistent in this limit. 
          Due to the continuity of the sigmoid function $s$ from \eqref{eq:sigmoid_definition},
          \begin{equation}
                \lim_{\alpha \to \infty} m(\mu, \sigma^2, \alpha) 
            =   s\left (\frac{\mu}{\sigma \sqrt{\tau_1}} \right) 
            = s \left(\frac{\mu}{\sigma} \sqrt{8/\pi}\right) 
            \approx  \Phi\left( \frac{\mu}{\sigma} \right),
            \label{eq:lim_alpha_m}
          \end{equation}
          employing the following common highly accurate approximation of the sigmoid function $s$ by $\Phi$, the cumulative distribution function of the standard normal distribution, which is given by  
        \begin{equation}
        s(z) \approx \Phi(\sqrt{\tau_1} \cdot z) , \qquad \text{where} \qquad \tau_1 = \pi/8, \quad z \in \R.     
        \label{eq:sigmoid_normal_cdf_approximation}
        \end{equation}
        The constant $\tau_1 = \pi/8$ is motivated by matching the derivatives of $s$ and $\Phi$ at the origin. 
        Since $s'(0) = 1/4$ and $\frac{d}{dz}\Phi(\lambda z)|_{z=0} = \lambda/\sqrt{2\pi}$, setting these equal yields $\lambda^2 = \pi/8$, which is precisely the scaling factor $\tau_1$ that moderates the variance in our approximation.
        For more details, we again refer the reader to \citep[Section 2]{mackay1992evidence} and \citep[Section 4.5.2]{bishop2006pattern}.
    \item Regarding the (approximation of the) variance (or the second moment), 
          let us point out that $\E[\mathds{1}_{X>0}] = \E[\mathds{1}_{X>0}^2]$, as $\mathds{1}_{X>0}$
          is a Bernoulli random variable. As they differ only in a point of measure zero for $X \sim \mathcal{N}(\mu, \sigma^2)$, we also have
          $\E [\mathds{1}_{X>0}^2]  = \E [s_\infty^2 (X)]$. By general properties of the Bernoulli distribution and 
          similar as seen in \eqref{eq:limit_case_expectation},
        \begin{align*}
            \Var(\mathds{1}_{X>0}) 
        = &  \E[\mathds{1}_{X>0}^2] - \E[\mathds{1}_{X>0}]^2  \\
        = &\E[\mathds{1}_{X>0}] - \E[\mathds{1}_{X>0}]^2 \\
        = &\P(X>0) \left(1-\P(X>0)\right) \\
        = &\Phi\left( \frac{\mu}{\sigma} \right) \left(1- \Phi\left( \frac{\mu}{\sigma} \right)\right)    .
        \end{align*}
        Once again, this is consistent with variance approximation in \eqref{eq:logit-normal_approx_variance}. Taking the limit $\alpha \to \infty$,
        we obtain with \eqref{eq:lim_alpha_m} that
\begin{align*}
    & \lim_{\alpha \to \infty}
        m(\mu, \sigma^2, \alpha)\left(1 - m(\mu, \sigma^2, \alpha)\right) \left( 1 - \frac{1}{\sqrt{1 + \tau_2 \alpha^2 \sigma^2}} \right)  \\
=   &   \lim_{\alpha \to \infty} m(\mu, \sigma^2, \alpha)\left(1 - m(\mu, \sigma^2, \alpha)\right)  \\
\approx &  \Phi\left( \frac{\mu}{\sigma} \right) \left(1- \Phi\left( \frac{\mu}{\sigma} \right)\right)    .
\end{align*}
\end{itemize}

\section{Distributions of Concept Activation Vectors}
\label{app:CAV_distributions}

\subsection{FastCAV and PatternCAV} 
\label{app:FastCAV_PatternCAV_distribution_proof}

\begin{proof}[Proof of Proposition \ref{prop:distribution_fast_pattern_CAV}]
The mean \eqref{eq:w_pat_mean} is obvious, and \eqref{eq:w_pat_covariance} follows from the following computation starting from \eqref{eq:w_pat_empirical}. 
\begin{align*}
    \mSigma_{\vw_{\pat}}
=  & \E \left[ \vw_{\pat} \vw_{\pat}^\top \right] - (\vmu_2 - \vmu_1) (\vmu_2 - \vmu_1)^\top\\
=  & \E \left[ \left(\frac{1}{n_1} \sum_{i=1}^{n_1} \vx_i^{(1)} - \frac{1}{n_2} \sum_{i=1}^{n_2} \vx_i^{(2)} \right)
\left(
\frac{1}{n_1} \sum_{i=1}^{n_1} {\vx_i^{(1)}}^\top - \frac{1}{n_2} \sum_{i=1}^{n_2} {\vx_i^{(2)}}^\top \right) \right] \\ 
& \qquad - \vmu_2 \vmu_2^\top + \vmu_2 \vmu_1^\top + \vmu_1 \vmu_2^\top - \vmu_1 \vmu_1^\top \\
=  & \E \left[\frac{1}{n_1^2} \sum_{i=1}^{n_1} \vx_i^{(1)} {\vx_i^{(1)}}^\top \right] + \frac{n_1 - 1}{n_1} \vmu_1 \vmu_1^\top
- \vmu_1 \vmu_2^\top - \vmu_2 \vmu_1^\top
+ \E \left[\frac{1}{n_2^2} \sum_{i=1}^{n_2} \vx_i^{(2)} {\vx_i^{(2)}}^\top \right] \\
& \qquad + \frac{n_2 - 1}{n_2} \vmu_2 \vmu_2^\top  - \vmu_2 \vmu_2^\top + \vmu_2 \vmu_1^\top + \vmu_1 \vmu_2^\top - \vmu_1 \vmu_1^\top \\
=  & \frac{1}{n_1}  \E \left[\frac{1}{n_1} \sum_{i=1}^{n_1} \vx_i^{(1)} {\vx_i^{(1)}}^\top \right] - \frac{1}{n_1} \vmu_1 \vmu_1^\top
+  \frac{1}{n_2} \E \left[ \frac{1}{n_2} \sum_{i=1}^{n_2} \vx_i^{(2)} {\vx_i^{(2)}}^\top \right] 
- \frac{1}{n_2} \vmu_2 \vmu_2^\top \\
=  & \frac{1}{n_1}\mSigma_1 + \frac{1}{n_2}\mSigma_2 .
\end{align*}

For this derivation, we used the following straightforward straightforward computations
for the first term in the second line

\begin{align*}
    & \E 
\left[ 
\left(\frac{1}{n_1} \sum_{i=1}^{n_1} \vx_i^{(1)} - \frac{1}{n_2} \sum_{i=1}^{n_2} \vx_i^{(2)} \right)
\left(\frac{1}{n_1} \sum_{i=1}^{n_1} {\vx_i^{(1)}}^\top - \frac{1}{n_2} \sum_{i=1}^{n_2} {\vx_i^{(2)}}^\top \right) 
\right] \\
=   & 
\E \left[\frac{1}{n_1^2}  \sum_{i,j=1}^{n_1} \vx_i^{(1)} {\vx_j^{(1)}}^\top \right]
- \vmu_1 \vmu_2^\top - \vmu_2 \vmu_1^\top + 
 \E \left[\frac{1}{n_2^2} \sum_{i,j=1}^{n_2} \vx_i^{(2)} {\vx_j^{(2)}}^\top \right],
\end{align*}

where for the first term we obtain (similar for the last term for the second class $\mathcal{C}_2$), using independence for the second term,

\begin{align}
    \E \left[\frac{1}{n_1^2} 
    \sum_{i,j=1}^{n_1} \vx_i^{(1)} {\vx_j^{(1)}}^\top \right] 
= & \E \left[ 
    \frac{1}{n_1^2}\sum_{i=1}^{n_1} \vx_i^{(1)} {\vx_i^{(1)}}^\top + 
    \frac{1}{n_1^2} \sum_{i \neq j}^{n_1} \vx_i^{(1)} {\vx_j^{(1)}}^\top 
    \right] \nonumber \\
= & \frac{1}{n_1} \E 
    \left[\frac{1}{n_1} \sum_{i=1}^{n_1} \vx_i^{(1)} {\vx_i^{(1)}}^\top \right] + 
    \frac{n_1(n_1-1)}{n_1^2} \sum_{i \neq j}^{n_1} \vmu_1 \vmu_1^\top \nonumber \\
= & \frac{1}{n_1}  \E 
\left[ \frac{1}{n_1} \sum_{i=1}^{n_1} \vx_i^{(1)} {\vx_i^{(1)}}^\top \right] 
+ \frac{n_1 - 1}{n_1} \vmu_1 \vmu_1^\top.
\label{eq:useful_derivation}
\end{align}

Next, let us turn to the \textit{FastCAV} $\vw_{\fast}$. Recalling its definition from \eqref{eq:fastCAV_definition}, 
and also \eqref{eq:fastCAV_joint_empirical_mean_of_classes}, we easily obtain that
  
\begin{align*}
      \vw _{\text{fast}} 
&   = \frac{1}{n_2} \sum_{i=1}^{n_2} \vx_i^{(2)} - \frac{1}{n_1 + n_2} \sum_{\ell=1}^{2} \sum_{i=1}^{n_\ell} \vx_i^{(\ell)}  \\
&   = \frac{1}{n_2} \sum_{i=1}^{n_2} \vx_i^{(2)}-\frac{1}{n_1 + n_2} \sum_{i=1}^{n_1} \vx_i^{(1)} - \frac{1}{n_1 + n_2} \sum_{i=1}^{n_2} \vx_i^{(2)} \\
&   = \left(\frac{1}{n_2} - \frac{1}{n_1 + n_2} \right) \sum_{i=1}^{n_2} \vx_i^{(2)} - \frac{1}{n_1 + n_2} \sum_{i=1}^{n_1} \vx_i^{(1)} \\
&   = \frac{n_1}{n_2} \cdot \frac{1}{n_1 + n_2}\sum_{i=1}^{n_2} \vx_i^{(2)} - \frac{1}{n_1 + n_2} \sum_{i=1}^{n_1} \vx_i^{(1)} \\
&   = \frac{n_1}{n_1 + n_2} \left(  \frac{1}{n_2}\sum_{i=1}^{n_2} \vx_i^{(2)} -\frac{1}{n_1} \sum_{i=1}^{n_1} \vx_i^{(1)} \right)
    = \frac{n_1}{n_1 + n_2}     \vw_{\pat},
\end{align*}
\ie the relation from \eqref{eq:scaling_pattern_fast_cav_general}.
This directly yields \eqref{eq:fastCAV_mean_general} and \eqref{eq:fastCAV_cov_general}, 
while the balanced case $n_1 = n_2$ leads to Corollary \ref{cor:distribution_fast_pattern_CAV_balanced_case}.
\end{proof}

Note that both Corollary \ref{cor:distribution_fast_pattern_CAV_balanced_case} (with $n_1 = n_2$) and  
Corollary \ref{cor:distribution_fast_pattern_CAV_asymetric_asymptotic} (with fixed $n_1$ and $n_2 \to \infty$) 
are an immediate consequence of Proposition \ref{prop:distribution_fast_pattern_CAV}.
After characterizing the distributions of CAVs $\vw_\CAV$, we next investigate inner products $\langle \vz, \vw_\CAV \rangle$, as they appear
in the the sensitivity scores $S_{C,k,l} (\vx) = \left\langle \nabla h_{l,k} \left( f_l (\vx)\right), \vv_C^l \right\rangle$ in \eqref{eq:TCAV_sensitivity_inner_product}. We next prove Lemma \ref{lem:distribution_inner_prod_with_pattern_CAV} for $\vw_\CAV = \vw_\pat$, which also complements \citep[Corollary 1]{wenkmann2025variability} considering the \textit{logistic regression}.

\begin{lemma}[Distribution of $\langle \vz, \vw_\pat \rangle$]
\label{lem:distribution_inner_prod_with_pattern_CAV}
For any fixed deterministic vector $\vz \in \mathbb{R}^d$, mean and variance of its projection $\langle \vz, \vw_\pat \rangle$ on  $\vw_\pat$ from \eqref{eq:w_pat_empirical} are given by (any $n_1, n_2 \in \N$)
\begin{align}
    \E \left[\langle \vz, \vw_\pat \rangle \right]       & =   \langle \vz, \vmu_2 - \vmu_1 \rangle, \label{eq:mean_projection_patternCAV} \\
    \Var \left(\langle \vz, \vw_\pat \rangle \right)     & =   \frac{1}{n_1} \vz^\top \mSigma_1 \vz + \frac{1}{n_2} \vz^\top \mSigma_2 \vz. \label{eq:variance_projection_patternCAV}
\end{align}
As $n_1, n_2 \to \infty$ with $n_1/n \to \zeta \in (0,1)$, \ie $n_2/n \to 1- \zeta \in (0,1)$, the scaled error is asymptotically Gaussian; denoting convergence in distribution (or weakly) by $\xrightarrow{\mkern8mu d \mkern8mu}$, 
it holds that
\begin{equation}
 \sqrt{n} \left( \langle \vz, \vw_\pat \rangle - \langle \vz, \vmu_2 - \vmu_1 \rangle \right)  
\xrightarrow{\mkern8mu d \mkern8mu} \;  \mathcal{N}\left( 0, \frac{\vz^\top \mSigma_1 \vz}{\zeta} + \frac{\vz^\top \mSigma_2 \vz}{1-\zeta} \right).
\label{eq:pattern_CAV_asymptotically_Gaussian}
\end{equation}
\end{lemma}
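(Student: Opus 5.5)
The mean and variance formulas follow directly from Proposition \ref{prop:distribution_fast_pattern_CAV}. Since $\vz$ is deterministic, linearity of expectation and \eqref{eq:w_pat_mean} give $\E[\langle \vz, \vw_\pat\rangle] = \vz^\top \E[\vw_\pat] = \langle \vz, \vmu_2-\vmu_1\rangle$. For any random vector, $\Var(\vz^\top \vw) = \vz^\top \Cov(\vw)\vz$. Inserting \eqref{eq:w_pat_covariance} then yields \eqref{eq:variance_projection_patternCAV}. These hold for every $n_1,n_2$, since only i.i.d.\ sampling within each class and independence between the classes were used in that proposition.

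For the asymptotic statement, I will decompose the centered projection into two independent scalar i.i.d.\ averages. Define $U_i := \langle \vz, \vx_i^{(2)} - \vmu_2\rangle$ and $V_i := \langle \vz, \vx_i^{(1)} - \vmu_1\rangle$. These are centered and i.i.d.\ within each class, with variances $\vz^\top\mSigma_2\vz$ and $\vz^\top\mSigma_1\vz$; finite second moments are implicit in the existence of $\mSigma_\ell$. Then
\begin{equation*}
\sqrt{n}\left(\langle \vz, \vw_\pat\rangle - \langle \vz, \vmu_2-\vmu_1\rangle\right) = \sqrt{\tfrac{n}{n_2}}\cdot \tfrac{1}{\sqrt{n_2}}\sum_{i=1}^{n_2} U_i \;-\; \sqrt{\tfrac{n}{n_1}}\cdot \tfrac{1}{\sqrt{n_1}}\sum_{i=1}^{n_1} V_i .
\end{equation*}
By the classical Lindeberg--L\'evy CLT, each normalized sum converges in distribution to $\mathcal{N}(0, \vz^\top\mSigma_2\vz)$ and $\mathcal{N}(0,\vz^\top\mSigma_1\vz)$ respectively. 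If a variance is zero, the term converges to $0$ by Chebyshev, which is consistent with a degenerate Gaussian. The deterministic prefactors converge to $(1-\zeta)^{-1/2}$ and $\zeta^{-1/2}$.

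The two sums are independent, so the pair converges jointly to a product of the two Gaussian limits. I will justify this via characteristic functions: the joint characteristic function factorizes and each factor converges. Applying Slutsky's lemma, or equivalently the continuous mapping theorem applied to $(a,b)\mapsto c_2 a - c_1 b$ with converging constants, the difference converges to a centered Gaussian. Its variance is $\vz^\top\mSigma_2\vz/(1-\zeta) + \vz^\top\mSigma_1\vz/\zeta$, because variances of independent Gaussians add and the sign flip does not matter. This is exactly \eqref{eq:pattern_CAV_asymptotically_Gaussian}.

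The only delicate step is this combination: the limit runs over a joint sequence $(n_1,n_2)\to\infty$ rather than a single index, and independence must be used to pass from marginal to joint convergence. The characteristic-function factorization handles both points cleanly. I will also note that the statement's ratio $n_1/n\to\zeta$ (so $c_1\to\zeta$) is what places $\zeta$ in the $\mSigma_1$ term.
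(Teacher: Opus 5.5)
Your proposal is correct, but it takes a slightly different route from the paper.

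For the moments, you read them off Proposition~\ref{prop:distribution_fast_pattern_CAV} via $\Var(\vz^\top\vw_\pat)=\vz^\top\Cov(\vw_\pat)\vz$. The paper instead recomputes the variance directly, applying Bienaym\'e's identity to the scalar summands $\vz^\top\vx_i^{(\ell)}$. Yours is shorter; the paper's is self-contained.

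For the limit, the paper applies the multivariate CLT to each class mean, $\sqrt{n_\ell}\,(\hat{\vmu}_\ell-\vmu_\ell)\to\mathcal{N}(\mathbf{0},\mSigma_\ell)$, and then invokes the Cram\'er--Wold device for the linear combination. You project onto $\vz$ first, use only the scalar Lindeberg--L\'evy CLT for each class, and then combine the two classes through factorized characteristic functions and Slutsky.

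Your version is more elementary and also more complete. The paper's sketch does not spell out two steps:
\begin{itemize}
\item the joint convergence of the two independent class means, which is needed before any linear combination can be passed to the limit;
\item the rescaling from $\sqrt{n_\ell}$ to $\sqrt{n}$, which produces the factors $1/\zeta$ and $1/(1-\zeta)$.
\end{itemize}
You make both explicit.

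What the paper's vector-level route buys is a Gaussian limit for the CAV itself, namely $\sqrt{n}\,\bigl(\vw_\pat-(\vmu_2-\vmu_1)\bigr)\to\mathcal{N}\bigl(\mathbf{0},\mSigma_1/\zeta+\mSigma_2/(1-\zeta)\bigr)$, from which the projected statement follows for every $\vz$ at once. Since your argument holds for every fixed $\vz$, Cram\'er--Wold gives you that same vector statement anyway.
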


 Note that using the results in Sec. \ref{sec:cavs}, it is straightforward to obtain further results, e.g. for the
\textit{FastCAV} $\vw_\fast$ from \eqref{eq:fastCAV_definition}. 
Furthermore, note that for deterministic \textit{concept} data and only $n_1 \to \infty$, and further, if $\vz = \nabla h_{l,k} \left( f_l (\vx)\right)$
as in the definition of $S_{C,k,l} (\vx)$ in \eqref{eq:TCAV_sensitivity_inner_product}, 
then the limiting normal distribution \eqref{eq:pattern_CAV_asymptotically_Gaussian} reads as
\begin{equation}
    \label{eq:special_case_like_equ_(9)_in_variability_paper}
    \mathcal{N}\left( 0, \nabla h_{l,k} \left( f_l (\vx)\right)^\top \mSigma_1 \nabla h_{l,k} \left( f_l (\vx)\right) \right),
\end{equation}
which resembles the variance $V(\vx)$ given in \cite[eq. (9)]{wenkmann2025variability} for the case of the \textit{logistic regression}, 
up to the different expression for the covariance $\mSigma_1$. Let us prove the above result.

\begin{proof}[Proof of Lemma \ref{lem:distribution_inner_prod_with_pattern_CAV}]
We first derive the moments for any $n_1, n_2 \in \N$. By linearity of the inner product and the expectation,  
\begin{equation*}
        \E[\langle \vz, \vw_\pat \rangle] 
    = \frac{1}{n_2} \sum_{i=1}^{n_2} \vz^\top \E\left[\vx_i^{(2)}\right] - \frac{1}{n_1} \sum_{j=1}^{n_1} \vz^\top \E\left[\vx_j^{(1)}\right] 
    = \vz^\top \vmu_2 - \vz^\top \vmu_1 = \langle \vz, \vmu_2 - \vmu_1 \rangle,
\end{equation*}
by the definition of $\vw_\pat$ from \eqref{eq:w_pat_empirical}, which gives \eqref{eq:mean_projection_patternCAV}. 
For the variance in \eqref{eq:variance_projection_patternCAV}, we apply \textit{Bienaym\'e's identity}: by independence between classes and between samples within each class, the variance of the sum is the sum of the variances. Thus, we obtain
\begin{align*}
        \Var(\langle \vz, \vw_\pat \rangle) 
    &= \frac{1}{n_2^2} \sum_{i=1}^{n_2} \Var\left(\vz^\top \vx_i^{(2)}\right) + 
        \frac{1}{n_1^2} \sum_{j=1}^{n_1} \Var\left(\vz^\top \vx_j^{(1)}\right) \\
    & = \frac{1}{n_2^2} \left(n_2 \vz^\top \mSigma_2 \vz\right) + \frac{1}{n_1^2} \left(n_1 \vz^\top \mSigma_1 \vz\right) 
      = \frac{\vz^\top \mSigma_2 \vz}{n_2} + \frac{\vz^\top \mSigma_1 \vz}{n_1}.
\end{align*}
To establish asymptotic normality, we observe that $\langle \vz, \vw_\pat \rangle$ is a sum of independent random variables. 
By the Multivariate Central Limit Theorem \citep[Chapter 2.1]{Vaart_1998}, the sample means satisfy 
$\sqrt{n_\ell}(\frac{1}{n_\ell} \sum_{i=1}^{n_\ell} \vx_i^{(\ell)} - \vmu_\ell) \xrightarrow{d} \mathcal{N}(\mathbf{0}, \mSigma_\ell)$
(convergence in distribution, or weakly).
By the \textit{Cramér-Wold theorem} or (\textit{Cramér-Wold device}) \citep[p. 16]{Vaart_1998}, any linear combination of these means converges to a normal distribution. 
\end{proof}

\subsection{Ridge Regression}
\label{app:ridge_regression}

\paragraph{Assumptions and Deterministic Equivalents.}
Next, let us turn to the case of the \textit{ridge regression} \eqref{eq:ridge_regression}, which is considerably more challenging compared to 
the \textit{FastCAV} and the \textit{PatternCAV}. We rely on the aforementioned work of \cite[Chapter 2.3]{tiomoko2021advanced} and \citep[p. 42-50]{tiomoko2020large}; see also \citep{cherkaouihigh}. As a technical assumption we require the notion of \textit{concentrated random vectors} that
we will introduce below and that is well-established in various works analysing machine learning with the help of random matrix theory \citep{tiomoko2022deciphering, tiomoko2026incorporating}. Before we lay out the technical assumptions, let us introduce some notation required specifically for this section. We write $\bm{0}_d$ for the zero vector in $\R^d$, and similar $\mathds{1}_d$ for the all-ones vector in $\R^d$. We denote the diagonal matrix of $\vv$ as $\mathcal{D}_\vv$. Norms are denoted by $\|\cdot\|_p$ for $\ell_p$-norms, $\|\cdot\|_{2 \to 2}$ for spectral, and $\|\cdot\|_F$ for Frobenius. Vertical concatenation is written as $[\vv_1, \vv_2]^\top \in \R^{2d}$, while $[\vv_1 | \vv_2] \in \R^{d \times 2}$ denotes horizontal concatenation. 
Next, we make more specific assumptions on the data to be \textit{concentrated}, meaning that any Lipchitz-continuous scalar observations 
are tightly concentrated around their mean. We will first state the precise definition of the notion of \textit{concentrated random vectors} just below in Definition \ref{def:concentrated_random_vector}, together with additional comments and motivation.

\begin{definition}[$q$-exponential concentration; observable diameter]
\label{def:concentrated_random_vector}
     Let $(\mathcal{X}, \| \, \cdot \, \|_\mathcal{X})$ be a normed vector space and $q > 0$.
     A random vector $\vx \in \mathcal{X}$ is said to be $q$-exponentially concentrated if 
     for any $1$-Lipschitz continuous (with respect to $\| \, \cdot \, \|_\mathcal{X}$) real-valued function $\varphi: \mathcal{X} \to \R$ there exists
     $C \geq 0$ independent of $\dim(\mathcal{X})$ and $\sigma > 0$ such that, for all $t \geq 0$,
     \begin{equation}
         \P \left( |\varphi(\vx) - \E \varphi(\vx)| \geq t \right) \leq C e^{-(t/\sigma)^q}.
         \label{eq:concentrated_random_vector}
     \end{equation}
This is denoted as $\vx  \propto \mathcal{E}_q (\sigma \, | \, \mathcal{X},  \| \, \cdot \, \|_\mathcal{X})$, where $\sigma$ is called the \textit{observable diameter}.
If $\sigma$ does not depend on $\dim(\mathcal{X})$, we write $\vx  \propto \mathcal{E}_q (1 \, | \, \mathcal{X},  \| \, \cdot \, \|_\mathcal{X})$.
\end{definition}

The most fundamental example for a random vector with the concentration property from \ref{def:concentrated_random_vector} is the isotropic Gaussian random vector $\vx \sim \mathcal{N}(\bm{0},\mI)$ in $\R^d$, for which $\vx  \propto \mathcal{E}_2 (1 \, | \, \R^d,  \| \, \cdot \, \|_2)$ by the classical \textit{Gaussian concentration inequality} \citep{ledoux_probability_2011, ledoux2001concentration}. While similar notions of the concentration property described in \ref{def:concentrated_random_vector} exist, such as linear and convex concentration, Lipschitz concentration turned out to be particularly useful by being both a mathematically convenient assumption, and a model for realistic data such as natural images. 
This is due to a fundamental property of its \textit{stability under Lipschitz transforms}: if $f: \mathcal{Z} \to \mathcal{X}$ (for $(\mathcal{Z}, \| \, . \, \|_\mathcal{Z})$ being another normed space) is a $K$-Lipschitz function and $\vx \in \mathcal{X}$ with 
$\vx  \propto \mathcal{E}_q (\sigma \, | \, \mathcal{X},  \| \, \cdot \, \|_\mathcal{X})$, then 
$f(\vx)  \propto \mathcal{E}_q (K \cdot \sigma \, | \, \mathcal{Z},  \| \, \cdot \, \|_\mathcal{Z})$, where $K$ might depend on $\dim(\mathcal{X})$.
As neural networks represent Lipschitz continuous mappings, generative adversarial networks are Lipschitz continuous transforms of (isotropic) Gaussian random vectors and can therefore be modeled using concentration properties as provided in Definition \ref{def:concentrated_random_vector}, 
thus making them a model for realistic data \citep{seddik2020random}.
Lipschitzness of neural networks has been studied in a variety of works;  \citep{seddik2020random} also provides a short overview providing Lipschitz constants of various parts of neural networks (fully connected and convolutional layers, pooling layers and activation functions, residual connections, batch normalization layers etc.). Due to the nested structure of Lipschitz-continuous functions, any of their concatenations is Lipschitz-continuous, notably also parts of network outputting hidden-layer activations, making this approach highly suitable regarding concept activation vectors. We make the following assumptions on the data concentration, as well as large-dimensional asymptotics.

\begin{assumption}[Distribution of $\mX$ and ${\vx}$]
\label{assum:concentration}
The data matrix $\mX \in \R^{d \times n}$, and all its columns $\vx \in \R^{d}$ (no matter which class), are assumed to follow
the following $q$-exponential concentration under Lipschitz-continuous observations.   
\begin{equation*}
\vx \propto \mathcal{E}_2  \left(1 \, | \, \R^{d},  \| \, \cdot \, \|_2 \right),
\qquad
\mX \propto \mathcal{E}_2  \left(1 \, | \, \R^{d \times n},  \| \, \cdot \, \|_F \right).
\end{equation*} 
\end{assumption}

\begin{assumption}[Large $n$, large $d$]
\label{ass:asymptotics}
We assume $n > d$ and, as $n_\ell, n, d \to \infty$, asymptotically $d/n \to c_0 \in (0,1)$ and 
${n_{\ell}}/n\to c_{\ell} > 0$ for $\ell = 1, 2$. 
\end{assumption}

Next, let us recall the \textit{ridge regression} problem, \ie the following $\ell_2$-regularized least-squares problem with a normalized data matrix $\mX \in \R^{d \times n}$,
\begin{equation}
 \vw_\ridge = \argmin_{\vw \in \R^d} \left\| \vy - \tfrac{1}{\sqrt{n}} \mX^\top \vw \right\|_2^2 + \lambda \| \vw \|_2^2.
    \label{eq:ridge_regression}
\end{equation}
Next, note that the gradient of the objective function in \eqref{eq:ridge_regression} vanishes if and only if the following equation is satisfied,
\begin{equation*}
    \frac{1}{n} \mX \mX^\top \vw - \frac{\mX }{\sqrt{n}} \vy - \lambda \vw = \mathbf{0}.
\end{equation*}
Rearranging for $\vw$ provides an explicit solution depending on $\mX$, which is (assuming the inverse matrix exists) given by
\begin{equation}
   \vw_\ridge
= \left(\frac{1}{n} \mX \mX^\top + \lambda \mI_p \right)^{-1} \frac{\mX}{\sqrt{n}} \vy   .
\label{eq:solution_logistic_regression}
\end{equation}

With a view toward our results on the classification accuracy of \textit{CAVs} (notably Propositions \ref{prop:mean_and_variance_of_classification_score} and \ref{prop:classification_accuracy_combined}), and the variance of of \textit{CAVs}
from Definition \eqref{def:variance_of_CAV}, we are interested in determining mean and covariance of $ \vw_\ridge$.
While $\E[\vw_\ridge]$ and $\Cov(\vw_\ridge)$ may be difficult to compute explicitly, we may instead take the approach of \textit{deterministic equivalents} \cite[Definition 4]{couillet2022random} from random matrix theory. For a random vector $\vw \in \R^d$, we aim to find some deterministic vector $\bar{\vw} \in \R^d$ (a deterministic equivalent of $\vw$; note that it is generally not unique), such that it holds
\begin{equation}
    \E \left [ \vw^\top \vv \right] = \E \left [ \vw \right] ^\top \vv  = \bar{\vw}^\top \vv \qquad \forall \vv \in \R^d.
    \label{eq:det_equivalent_of_random_vector}
\end{equation}
The deterministic equivalent $\bar{\vw}$ can be regarded as a generalization of the mean $\E[ \vw ]$; however, instead of finding the mean we only require to find any deterministic vector that behaves the same under taking inner products. Recalling the covariance
\begin{equation}
    \Cov(\vw) = \E[(\vw - \E[\vw])(\vw - \E[\vw])^\top] = \E[\vw\vw^\top] - \E[\vw]\E[\vw]^\top,
    \label{eq:reminder_covariance}
\end{equation}
a similar approach can be taken here. Note that we may replace $\E[\vw]$ by $\bar{\vw}$, but it may be hard to explicitly compute $\E[\vw\vw^\top]$,
the first summand in \eqref{eq:reminder_covariance}. Furthermore, in some applications, we may not require knowledge of the entire covariance matrix \eqref{eq:reminder_covariance}, but we are only interested in some scalar observation of it - like the trace as in Definition \eqref{def:variance_of_CAV}, that is
\begin{equation}
    \Tr(\Cov(\vw)) 
=   \Tr\left(\E[\vw\vw^\top] \right) - \Tr\left(\E[\vw]\E[\vw]^\top \right)
=   \E \left[ \Tr\left(\vw\vw^\top \right) \right] - \Tr\left(\E[\vw]\E[\vw]^\top \right).
\end{equation}
Again, to treat the summand $\E \left[ \Tr\left(\vw\vw^\top \right) \right]$, we may consider the random matrix $ \mM = \vw\vw^\top \in \R^{d \times d}$ and make the \textit{ansatz}
\begin{equation}
  \E \left[ \Tr \left( \mB \mM \right) \right] = \Tr \left( \E [ \mB \mM ] \right)  = \Tr \left( \mB \bar{\mM} \right)  
  \quad \forall \mB \in \R^{d \times d},
  \qquad \mM = \vw\vw^\top \in \R^{d \times d},
   \label{eq:det_equivalent_of_outer_product}
\end{equation}
such that we can read off a deterministic equivalent $\bar{\mM}$. However, it turns out difficult to precisely derive
the form given in \eqref{eq:det_equivalent_of_outer_product}, and we restrict ourselves to the mor modest goal of 
only $ \E \left[ \Tr \left( \mB \mM \right) \right] $ for any matrix $\mB$ 
(such as $\mB = \mI$ having Definition \eqref{def:variance_of_CAV} in mind) without explicitly bringing it into the form
$\Tr \left( \mB \bar{\mM} \right)$ where we can read off a deterministic equivalent.
Therefore, our goal in Theorem \ref{thm:deterministic_equivalents_ridge_regression} below is to compute firstly a deterministic equivalent
$\overline{\vw_\ridge}$ in the sense of \eqref{eq:solution_logistic_regression}, and secondly compute the expression
$\E\Tr\left(\mB\mM\right)$ for any matrix $\mB$. Before we proceed with this result, let us further recall the following theorem providing a deterministic equivalent of the resolvent of the random matrix $\tfrac{1}{n} \mX \mX^\top$
which will be used for Theorem \ref{thm:deterministic_equivalents_ridge_regression} below.

\begin{theorem}[\citep{louart2018concentration}]
Consider $\mQ = \mQ (\lambda) = \left(\tfrac{1}{n} \mX \mX^\top + \lambda \mI_p \right)^{-1}$, \ie the resolvent of the random matrix $\tfrac{1}{n} \mX \mX^\top$, which is defined for any $\lambda \in \C$ such that $\tfrac{1}{n} \mX \mX^\top + \lambda \mI_p$ is invertible. 
Then, under the aforementioned assumptions and with high probability, a deterministic equivalent $ \bar{\mQ} =  \bar{\mQ}(\lambda)$ of $\mQ (\lambda)$ 
is given as the unique solution of the fixed point equation in $\bar\mQ$, 
\begin{equation}
  \bar\mQ (\lambda)
=\left(\sum_{\ell=1}^2\frac{n_\ell}{n}\frac{\mC_\ell}{1+\delta_\ell} +\lambda\mI_d\right)^{-1} 
  \;  \in \R^{d \times d}, \qquad   
\delta_\ell = \frac{1}{n}  \Tr\left(\mC_\ell\bar\mQ\right) \;\in \R, \quad \ell = 1,2.
\end{equation}
\end{theorem}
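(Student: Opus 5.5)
The plan is to follow the leave-one-out (Sherman--Morrison) approach of \citep{louart2018concentration}. It has three parts: (a) existence and uniqueness of the fixed point $(\delta_1,\delta_2)$; (b) concentration of $\mQ$ around $\E[\mQ]$; (c) closeness of $\E[\mQ]$ to $\bar\mQ$. I will first treat real $\lambda>0$, where $\|\mQ\|_{2\to 2}\le 1/\lambda$ deterministically. I will then extend to complex $\lambda$ at positive distance from the spectrum by analyticity of both sides in $\lambda$ together with Vitali/Montel-type arguments.

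For (a), I will consider the map $\boldsymbol{\delta}=(\delta_1,\delta_2)\mapsto \big(\tfrac1n\Tr(\mC_\ell \bar\mQ(\boldsymbol{\delta}))\big)_{\ell=1,2}$ on $\R_{\ge 0}^2$. I will check that it is positive, monotone and scalable in the sense of standard interference functions (Yates' framework). Since $\|\mC_\ell\|$ is bounded and $\lambda>0$, the map is bounded. This gives a unique fixed point, which can be reached by iteration. Alternatively, I would show that the map is a contraction in the metric $\max_\ell |\log\delta_\ell-\log\delta'_\ell|$.

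For (b), I will use Assumption~\ref{assum:concentration}. The map $\mX\mapsto \mQ$ is $O(n^{-1/2})$-Lipschitz in Frobenius norm: this follows from $\mQ-\mQ'=\mQ(\tfrac1n\mX'\mX'^\top-\tfrac1n\mX\mX^\top)\mQ'$ together with $\|\tfrac{1}{\sqrt n}\mX\mQ\|\le \lambda^{-1/2}$. Hence every normalized linear functional $\Tr(\mA\mQ)$ with $\|\mA\|_*=O(1)$, and every quadratic form $\vu^\top\mQ\vv$, concentrates around its mean. This yields the ``with high probability'' part of the statement, so it remains to show that $\E[\mQ]$ is close to $\bar\mQ$.

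For (c), I will write
\[
\E\mQ-\bar\mQ=\E\big[\mQ(\bar\mQ^{-1}-\mQ^{-1})\bar\mQ\big]=\sum_{\ell}\frac{n_\ell}{n}\frac{\E[\mQ]\mC_\ell\bar\mQ}{1+\delta_\ell}-\frac1n\sum_{i=1}^n\E\big[\mQ\vx_i\vx_i^\top\big]\bar\mQ .
\]
Then I will apply $\mQ\vx_i=\mQ_{-i}\vx_i/(1+\tfrac1n\vx_i^\top\mQ_{-i}\vx_i)$, where $\mQ_{-i}$ is the resolvent with column $i$ removed and is independent of $\vx_i$.

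The main obstacle is replacing the random denominator $\tfrac1n\vx_i^\top\mQ_{-i}\vx_i$ by $\delta_\ell$ for $\vx_i\in\mathcal C_\ell$. First, Lipschitz concentration of quadratic forms (conditional on $\mQ_{-i}$) gives fluctuations of order $O(n^{-1/2})$ around $\tfrac1n\Tr(\mC_\ell\mQ_{-i})$. Second, a rank-one perturbation bound gives $\tfrac1n\Tr(\mC_\ell(\mQ_{-i}-\mQ))=O(1/n)$. Third, the estimate $\tfrac1n\Tr\mC_\ell\E\mQ\approx\delta_\ell$ has to be established self-consistently: I would define $\tilde\delta_\ell=\tfrac1n\Tr(\mC_\ell\E\mQ)$ and note that $\E\mQ$ approximately solves the same fixed-point equation with $\tilde\delta$. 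The stability (contraction) property from (a) then transfers this to $|\tilde\delta_\ell-\delta_\ell|=O(n^{-1/2})$.

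After these replacements, the two sums in the display for $\E\mQ-\bar\mQ$ cancel up to error terms. These errors are controlled via Cauchy--Schwarz and the bounds $\|\mQ\|,\|\bar\mQ\|\le1/\lambda$, and they give $\|\E\mQ-\bar\mQ\|_{2\to2}=O(n^{-1/2})$ under Assumption~\ref{ass:asymptotics}. Combined with (b), this proves that $\bar\mQ$ is a deterministic equivalent of $\mQ$.
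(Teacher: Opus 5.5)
Your proposal is correct and follows essentially the same route as the paper. The paper gives no argument of its own and defers to \citep{louart2018concentration}, whose proof combines the same ingredients you outline: Lipschitz concentration of $\mQ$, the leave-one-out/Sherman--Morrison identity, and a stability (contraction) argument for the fixed point $(\delta_1,\delta_2)$. The only part that remains a sketch is the extension to complex $\lambda$: before Vitali applies, you would still need to show that the fixed-point solution continues analytically, and stays bounded and unique, off the positive real axis (e.g.\ via its Stieltjes-transform structure). Since the paper only uses $\lambda>0$, this does not affect anything downstream.
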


\begin{proof}
\citep{louart2018concentration}
\end{proof}

\begin{theorem}
\label{thm:deterministic_equivalents_ridge_regression}
Under the aforementioned assumptions, the deterministic equivalent $\overline{\vw_\ridge}$ of the solution $\vw_\ridge$ of the
regression problem given in \eqref{eq:ridge_regression} as well as the expression $ \E\Tr\left(\mB\vw_\ridge \vw_\ridge^\top\right)$
 for any matrix $\mB \in \R^{d \times d}$ (of asymptotically uniformly bounded norm) are provided as follows. 
\begin{align}
    \overline{\vw_\ridge}                 & =  \frac{1}{\sqrt{n}}  \vy^\top \mJ \mM_\delta^\top \bar{\mQ} \label{eq:w_ridge_det_equ_mean}  \\
   \E\Tr\left(\mB\vw_\ridge \vw_\ridge^\top\right)  & =  \frac 1n \left(\vy^\top \mJ \mM_\delta^\top \mK \mM_\delta \mJ^\top \vy 
    +   \vy^\top \mV \vy 
    -   2 \vy^\top \mJ  \mM_{\vdelta'}^\top \bar{\mQ} \mM_{\delta} \mJ^\top \vy\right)\label{eq:w_ridge_det_equ_general_cov},
\end{align}
where, in \eqref{eq:w_ridge_det_equ_mean}, by $\bar{\mQ} =  \bar{\mQ}(\lambda)$ we denote a deterministic equivalent of the random matrix $\mQ = \mQ (\lambda) = \left(\tfrac{1}{n} \mX \mX^\top + \lambda \mI_p \right)^{-1}$, the resolvent\footnote{The resolvent is defined for any $\lambda \in \C$ such that $\tfrac{1}{n} \mX \mX^\top + \lambda \mI_p$ is invertible} of random matrix $\tfrac{1}{n} \mX \mX^\top$.
It is obtained as the unique solution of the following fixed-point equation
\begin{equation}
  \bar\mQ 
=\left(\sum_{\ell=1}^2\frac{n_\ell}{n}\frac{\mC_\ell}{1+\delta_\ell} +\lambda\mI_d\right)^{-1} 
  \;  \in \R^{d \times d}, \qquad \qquad  
\delta_\ell = \frac{1}{n}  \Tr\left(\mC_\ell\bar\mQ\right) \;\in \R, 
    \qquad \qquad  
\vdelta = (\delta_1, \delta_2).
\label{eq:det_equivelent_resolvent}
\end{equation}
Further, still in \eqref{eq:w_ridge_det_equ_mean} regarding the deterministic equivalent $\bar{\vw}_\ridge $, 
the matrices $\mJ \in \R^{n \times 2}$ and $\mM_\vdelta  \in \R^{d\times 2}$ are given by
\begin{equation}
    \mJ = \begin{pmatrix}
        \mathds{1}_{n_1} & \mathbf{0}_{n_1} \\
        \mathbf{0}_{n_2} & \mathds{1}_{n_2}
    \end{pmatrix} \; \in \R^{n \times 2},
    \qquad \qquad 
    \mM_\vdelta 
=   \left[\frac{1}{1+\delta_1} \vmu_1 \bigg| \frac{1}{1+\delta_2} \vmu_2 \right] 
    \; \in \R^{d\times 2}.
\end{equation}
Finally, all the remaining expressions (apart from $ \bar\mQ$ and $\delta_\ell$, $\ell = 1,2$ already given in \eqref{eq:det_equivelent_resolvent}) in the deterministic equivalent \eqref{eq:w_ridge_det_equ_general_cov} are defined 
in the following equations below by
\begin{align}
\tilde{\mV} & = \frac{1}{n}\begin{pmatrix}
         \Tr (\mSigma_1\bar{\mQ}\mSigma_1\bar{\mQ}) & \Tr (\mSigma_1\bar{\mQ}\mSigma_2\bar{\mQ}) \\ 
         \Tr (\mSigma_2\bar{\mQ}\mSigma_1\bar{\mQ}) & \Tr (\mSigma_2\bar{\mQ}\mSigma_2\bar{\mQ})
    \end{pmatrix} \; \in \R^{2 \times 2}, \label{eq:tilde_V} \\ 
\tilde{\mA} & = \frac{1}{n}\begin{pmatrix}
          \frac{n_1}{(1+\delta_1)^2}& 0 \\ 
         0 & \frac{n_2}{(1+\delta_2)^2}
    \end{pmatrix} \; \in \R^{2 \times 2},  \label{eq:tilde_A} \\
\bar{\vt}^{(\ell)} & = \frac{1}{n} \left[\Tr (\mB\bar{\mQ}\mSigma_1\bar{\mQ}), \Tr (\mB\bar{\mQ}\mSigma_2\bar{\mQ}) \right]^\top \; \in \R^{2},  \label{eq:t_ell} \\
\vd^{(\ell)} 
& = \left[ d_1^{(\ell)}, d_2^{(\ell)} \right]^\top 
= \left(\mI_2 - \tilde{\mV}\tilde{\mA}\right)^{-1}\bar{\vt}^{(\ell)} \; \in \R^{2},   \label{eq:d_ell}\\ 
\mK & = \bar{\mQ} \mB \bar\mQ + \sum_{\ell'=1}^2 \frac{n_{\ell'}}{n} \frac{d_{\ell'}^{(\ell)}}{(1+\delta_{\ell'})^2}\bar\mQ\mC_{\ell'}\bar\mQ, 
\; \in \R^{d \times d}, \label{eq:K} \\
    \mV 
& = \mathcal{D}_{\vv} \; \in \R^{n \times n}, \qquad
    \vv 
 =  \left[\frac{\Tr\left(\mSigma_1 \mK \right)}{(1+\delta_1)^2} \mathds{1}_{n_1},            
    \frac{\Tr\left(\mSigma_2 \mK \right)}{(1+\delta_2)^2} \mathds{1}_{n_2}\right] 
    \; \in \R^{n} ,  \label{eq:V_ell} \\
        \vdelta'
&   =   \left( \delta_1', \delta_2' \right)
    =   \left(\frac{1}{n} \Tr(\mSigma_1\mK),
                \frac{1}{n} \Tr(\mSigma_2\mK) \right), \label{eq:vdelta_prime} \\
    \mM_{\vdelta'} 
& = \left[  \frac{\delta_1^{'}}{(1+\delta_1)^2} \vmu_1 \bigg|
            \frac{\delta_2^{'}}{(1+\delta_2)^2} \vmu_2 
    \right]
     \; \in \R^{d\times 2}. \label{eq:M_vdelta_prime} 
\end{align} 
\end{theorem}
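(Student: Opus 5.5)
The plan is to work directly from the explicit solution \eqref{eq:solution_logistic_regression}, $\vw_\ridge = \tfrac{1}{\sqrt n}\mQ\mX\vy$. We then reduce both claims to deterministic equivalents of linear and quadratic functionals of the resolvent $\mQ$. The main tool is the leave-one-out (Sherman--Morrison) identity. Write $\mQ_{-i}$ for the resolvent with column $\vx_i$ removed. Then
\begin{equation*}
\mQ\vx_i = \frac{\mQ_{-i}\vx_i}{1+\tfrac1n\vx_i^\top\mQ_{-i}\vx_i}.
\end{equation*}
Combined with the concentration of Assumption~\ref{assum:concentration}, this gives $\tfrac1n\vx_i^\top\mQ_{-i}\vx_i = \delta_\ell + o(1)$ for $\vx_i\in\mathcal C_\ell$, with $\delta_\ell$ as in \eqref{eq:det_equivelent_resolvent}. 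Together with the deterministic equivalent $\bar\mQ$ of \citep{louart2018concentration}, this is the only random-matrix input we need.

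\textbf{Step 1 (mean).} Write $\mX\vy=\sum_i y_i\vx_i$. By the identity above,
\begin{equation*}
\E[\vw_\ridge]=\frac{1}{\sqrt n}\sum_i y_i\,\E\!\left[\frac{\mQ_{-i}\vx_i}{1+\delta_{\ell(i)}}\right]+o(1).
\end{equation*}
Since $\mQ_{-i}$ is independent of $\vx_i$, and $\E\mQ_{-i}\approx\bar\mQ$, each term is approximately $\bar\mQ\vmu_\ell/(1+\delta_\ell)$. Summing over the two classes with labels $\vy$ reorganizes into $\tfrac1{\sqrt n}\bar\mQ\mM_\vdelta\mJ^\top\vy$, which is \eqref{eq:w_ridge_det_equ_mean} (up to transposition).

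\textbf{Step 2 (second moment).} We need $\E\Tr(\mB\vw_\ridge\vw_\ridge^\top)=\tfrac1n\sum_{i,j}y_iy_j\E[\vx_j^\top\mQ\mB\mQ\vx_i]$. We split this into diagonal terms $i=j$ and off-diagonal terms $i\neq j$, and apply the leave-one-out identity once for $i=j$ and twice for $i\neq j$.
\begin{itemize}
\item The diagonal terms produce $\Tr(\mSigma_\ell\,\cdot)$-type traces. These give the diagonal matrix $\mV$ in \eqref{eq:V_ell}.
\item The off-diagonal terms produce the mean contribution $\vy^\top\mJ\mM_\vdelta^\top(\cdot)\mM_\vdelta\mJ^\top\vy$.
\item Removing $\vx_i$ from $\mQ$ in the quadratic form creates a correction, which is the cross term $-2\,\vy^\top\mJ\mM_{\vdelta'}^\top\bar\mQ\mM_\vdelta\mJ^\top\vy$ with $\vdelta'$ from \eqref{eq:vdelta_prime}.
\end{itemize}

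\textbf{Step 3 (the key sub-lemma).} All of this requires a deterministic equivalent for the ``two-resolvent'' object $\E[\mQ\mB\mQ]$. We show $\E[\mQ\mB\mQ]\approx\mK$ with $\mK$ as in \eqref{eq:K}. To do so, we write
\begin{equation*}
\mQ\mB\mQ-\bar\mQ\mB\bar\mQ=(\mQ-\bar\mQ)\mB\mQ+\bar\mQ\mB(\mQ-\bar\mQ),
\end{equation*}
and expand $\mQ-\bar\mQ=\mQ(\bar\mQ^{-1}-\mQ^{-1})\bar\mQ$ using the fixed-point form of $\bar\mQ$. Handling each $\tfrac1n\mQ\vx_i\vx_i^\top$ by leave-one-out yields a self-consistent equation. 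In it, the traces $\tfrac1n\Tr(\mSigma_{\ell'}\E[\mQ\mB\mQ])$ appear linearly with coefficient matrix $\tilde\mV\tilde\mA$. Solving this $2\times2$ linear system gives $\vd=(\mI_2-\tilde\mV\tilde\mA)^{-1}\bar\vt$, and hence $\mK$.

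I expect the main obstacle to be Step 3, together with the careful bookkeeping of the $i\neq j$ terms in Step 2. Two points need care:
\begin{itemize}
\item invertibility of $\mI_2-\tilde\mV\tilde\mA$, which should follow from a contraction argument for $\lambda>0$;
\item showing that all error terms (the fluctuations of $\tfrac1n\vx_i^\top\mQ_{-i}\vx_i$ and $\mQ_{-i}$ versus $\mQ_{-ij}$) are $o(1)$ uniformly in the bounded-norm $\mB$.
\end{itemize}
The second point is where the Lipschitz concentration and Assumption~\ref{ass:asymptotics} are used, in the same way as in \citep{louart2018concentration, tiomoko2021advanced}.
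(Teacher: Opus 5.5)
Your proposal is correct and follows the same route as the paper's proof sketch. Both start from the closed form $\vw_\ridge=\tfrac{1}{\sqrt n}\mQ\mX\vy$, take a deterministic equivalent for $\E[\mX^\top\mQ]$, and split the quadratic form $\tfrac1n\E[\vy^\top\mX^\top\mQ\mB\mQ\mX\vy]$ into $T_1+T_2-2T_3$; you also supply the leave-one-out and two-resolvent ($\E[\mQ\mB\mQ]\approx\mK$) computations that the paper delegates to Louart and Tiomoko.

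One bookkeeping point: your derivation naturally produces $\mC_\ell$ where the theorem writes $\mSigma_\ell$ (in $\mV$, $\vdelta'$, $\tilde{\mV}$, $\bar{\vt}$). These differences are lower order, and in $\mV$ the extra $\vmu_\ell^\top\mK\vmu_\ell/(1+\delta_\ell)^2$ from your diagonal terms is exactly the $i=j$ part of $T_1$, which the theorem's $\mJ$-sum includes.
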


Note that while the variance computation uses the output of the fixed point iteration performed for the computation of the means $\mu_1$ and $\mu_2$, namely $\bar{\mQ} \in \R^{d \times d}$ and $\vdelta = (\delta_1, \delta_2)$, the expressions from\eqref{eq:w_ridge_det_equ_mean} and \eqref{eq:w_ridge_det_equ_general_cov} are explicit, and similarly, all the terms from \eqref{eq:tilde_V} to \eqref{eq:M_vdelta_prime}
required for the computation of\eqref{eq:w_ridge_det_equ_general_cov} can be evaluated directly, provided $\bar{\mQ} \in \R^{d \times d}$ and $\vdelta = (\delta_1, \delta_2)$ and the data statistics $\vmu_1, \vmu_2 \in \R^d$ as well as $\mSigma_1, \mSigma_2 \in \R^{d \times d}$.
Let us point out that the matrix $\mB$ appears implicitly in the expressions \eqref{eq:t_ell} and \eqref{eq:K}; as discussed above, it is therefore difficult to rewrite the solution as  $\E \left[ \Tr \left( \mB \mM \right) \right] = \Tr \left( \mB \bar{\mM} \right)$, \ie it seems
intractable to factorize the overall expression as $ \mB \bar{\mM}$ in a way that we could read off the deterministic equivalent $ \bar{\mM}$.

\begin{proof}[Proof sketch]
We obtain the deterministic equivalent \eqref{eq:w_ridge_det_equ_mean} of $ \vw_\ridge$ as follows
 (see also \citep[p. 42-50]{tiomoko2020large})
\begin{align*}
    \E       \left[ \vw_\ridge^\top \vv \right ] &= \E \left[ \vy^\top\frac{\mX^\top}{\sqrt{n}}\left(\frac{1}{n} \mX \mX^\top + \lambda \mI_p \right)^{-1}\vv\right]\\
    & \frac{1}{\sqrt{n}}  \E_\mX
    \left[\vy^\top\mX^\top\left(\frac{1}{n} \mX\mX^\top+ \lambda\mI_d\right)^{-1} \vv \right] \\
= & \frac{1}{\sqrt{n}}  \E_\mX \left[ \vy^\top \mX^\top \mQ \vv \right]\\
= & \frac{1}{\sqrt{n}}  \vy^\top \mJ \mM_\delta^\top \bar{\mQ} \vv.
\end{align*}
Note that $\vy = [-1, \dots, -1, 1, \dots, 1]^\top$ is simply considered a deterministic label vector.
We can deduce that a deterministic equivalent for $\vw_\ridge$ is $\bar{\vw}_\ridge = \frac{1}{\sqrt{n}}  \bar{\mQ}\mM_\delta\mJ^\top\vy$. 
We compute
\begin{align}
        \E \Tr \left [ \mB\vw_\ridge \vw_\ridge^\top \right]
    & = \frac 1n \E  \Tr \left [ \mB \left(\frac 1n\mX \mX^\top + \lambda \mI_p \right)^{-1}\mX \vy \vy^\top\mX^\top\left(\frac 1n\mX \mX^\top + \lambda \mI_p \right)^{-1} \right] \nonumber \\
    &=\frac 1n \E\vy^\top\mX^\top\left(\frac 1n\mX \mX^\top + \lambda \mI_p \right)^{-1}\mB \left(\frac 1n\mX \mX^\top + \lambda \mI_p \right)^{-1} \mX \vy \nonumber \\
    &= \frac 1n \E  \vy^\top\mX^\top\mQ\mB\mQ \mX \vy \nonumber \\
    &= \underbrace{\frac 1n \vy^\top \mJ \mM_\delta^\top \mK \mM_\delta \mJ^\top \vy}_{T_1} 
    +   \underbrace{\frac 1n \vy^\top \mV \vy}_{T_2} 
    -   2 \underbrace{\frac 1n\vy^\top \mJ  \mM_{\vdelta'}^\top \bar{\mQ} \mM_{\delta} \mJ^\top \vy}_{T_3},
    \label{eq:T_1_T_2_T_3}
\end{align}
and refer to \citep{louart2018random} as well as  \citep{louart2018concentration} and \citep[p. 42-50]{tiomoko2020large} for more details.
\end{proof}

In the context of CAVs, this result has two major applications in the context of our paper. Firstly, it allows to study the distribution of
classification scores and the prediction of the classification accuracies as explained in Sec. \ref{app:classification_accuracy}.
Secondly, it allows to study the variance (decay) of CAVs based on the \textit{ridge regression} in the sense of Definition \eqref{def:variance_of_CAV}. We comment on this in the next paragraph.

\paragraph{Variability of Ridge-Regression-CAVs.} We apply Definition \eqref{def:variance_of_CAV} to the analyze the variability of CAVs based on the ridge regression, using the above Theorem \ref{thm:deterministic_equivalents_ridge_regression} with $\mB = \mI$. We approximate
\begin{equation}
   \Tr \left( \Cov(\vw_{\ridge}) \right)  
   \approx
    \E \left[ \Tr \left( \vw_{\ridge} \vw_{\ridge}^\top \right) \right] - \Tr \left( \bar{\vw}_{\ridge} \bar{\vw}_{\ridge}^\top \right),
    \label{eq:trace_cov_ridge_regression}
\end{equation}
where $\bar{\vw}_{\ridge} = \frac{1}{\sqrt{n}} \bar{\mQ} \mM_\delta \mJ^\top \vy$ is the deterministic equivalent \eqref{eq:w_ridge_det_equ_mean}. We consider the high-dimensional regime where the number of concept samples $n_2$ is fixed, while the number of non-concept samples $n_1 \to \infty$ and $d \to \infty$. 
Note that, in the case of identity covariance ($\mC_1 = \mC_2 = \mI_d$), we have $\bar{\mQ} = \bar{q} \mI_d$ 
for $\bar{\mQ}$ from \eqref{eq:det_equivelent_resolvent}, and $\delta_1 = \delta_2 = \delta$. 
We obtain the self-consistent equations
\begin{align}
    \bar{q} &= \left( \frac{1}{1+\delta} + \lambda \right)^{-1}, \qquad \delta = \frac{d}{n} \bar{q}
\end{align}
As $n_1 \to \infty$, we have $n = n_1 + n_2 \approx n_1$. If $d$ scales linearly with $n_1$, then $\bar{q}$ and $\delta$ converge to $\mathcal{O}(1)$ constants. For the auxiliary matrix $\mK$ (recall \eqref{eq:K}, and $\mB = \mI$) which captures noise propagation
\begin{equation}
    \Tr (\mK) = O\left( \frac{d}{n} \right) = O\left( \frac{d}{n_1} \right).
\end{equation}
To find the variance, we subtract the squared deterministic mean from the second moment terms $T_1, T_2, T_3$ defined in \eqref{eq:T_1_T_2_T_3}. 
Further, assuming  $\vmu_1 = -\vmu_2 = \ve_1$, where $\ve_1$ denotes the 1\textit{st} standard basis vector, we analyze the scaling of the individual terms.

\begin{itemize}[leftmargin=0.5cm]
    \item \textit{Mean Subtraction:} The term $\Tr ( \bar{\vw}_{\ridge} \bar{\vw}_{\ridge}^\top ) = \frac{1}{n} \vy^\top \mJ \mM_\delta^\top \bar{\mQ}^2 \mM_\delta \mJ^\top \vy$ scales as $\mathcal{O}(d/n_1)$.
    \item \textit{Term $T_2$:} This term $\frac{1}{n} \vy^\top \mV \vy$ represents the pure variance contribution. Since $\Tr(\mV)$ involves $\Tr(\mK)$, it scales as $\mathcal{O}(d)$.
    \item \textit{Cancellation of the terms $T_1$ and $T_3$:} The leading $\mathcal{O}(d)$ components of the alignment terms $T_1$ and $T_3$ effectively cancel with the deterministic mean components when considering the generalized covariance, leaving the fluctuation terms as the dominant contribution.
\end{itemize}
Combining the terms $T_1, T_2, T_3$ in \eqref{eq:T_1_T_2_T_3} with the leading factor $1/n$, the expression \eqref{eq:trace_cov_ridge_regression} scales as
\begin{equation}
    \E \Tr (\Cov(\vw_{\ridge})) = \frac{1}{n} \left[ \mathcal{O} (d) \right] =  \mathcal{O} \left( \frac{d}{n_1} \right)
\end{equation}
As $n_1 \to \infty$, adding non-concept data reduces the variance of the CAV at a rate inversely proportional to the number of non-concept samples. As a connection to the classical regime, it is important to note the behavior of this result in lower dimensions. In the classical regime where the dimensionality $d$ is fixed ($d = \mathcal{O}(1)$) and $n_1 \to \infty$, our result recovers the standard statistical convergence rate
\begin{equation}
    \E \Tr (\Cov(\vw_{\ridge})) = \mathcal{O} \left( \frac{1}{n_1} \right)
\end{equation}
This confirms that the high-dimensional result $\mathcal{O}(d/n_1)$ is a natural generalization; in high dimensions, the "effective noise" grows with $d$, requiring a proportional increase in $n_1$ to maintain the same level of directional stability for the CAV.

\paragraph{Limitations and Gaussian Universality}
While our theoretical framework effectively leverages the Gaussian universality of the decision score $g(\mathbf{x})$—providing a robust approximation that aligns well with our empirical observations (see Section C for accurate prediction of both the distribution of the decision score and the corresponding classification accuracy), we acknowledge that this property may not be exact in all high-dimensional regimes. Recent studies have characterized specific conditions under which this universality breaks down, suggesting that the score distribution may instead be better described as a convolution between a Gaussian and an auxiliary distribution. 
In particular, \citep{yaakoubi2026characterization,xiaoyi2025breakdown} highlight these nuances in the context of high-dimensional empirical risk minimization and linear factor models. Although our current Gaussian-based results provide a highly satisfactory description of the classification accuracy for the synthetic and real-world data tested, incorporating these more precise distributional results remains a valuable direction for future work to further refine the alignment between theory and experiment in edge-case scenarios.

\end{document}